\documentclass[twoside]{article}
\usepackage[preprint]{aistats2027}
\usepackage[T1]{fontenc}
\usepackage{lmodern}
\usepackage{amsmath,amssymb,amsthm,mathtools}
\usepackage{graphicx,booktabs,microtype,xcolor}
\usepackage[round]{natbib}
\usepackage[hidelinks]{hyperref}
\newtheorem{theorem}{Theorem}
\newtheorem{lemma}[theorem]{Lemma}
\newtheorem{proposition}[theorem]{Proposition}
\newtheorem{corollary}[theorem]{Corollary}
\theoremstyle{definition}
\newtheorem{assumption}{Assumption}

\newcommand{\E}{\mathbb E}
\newcommand{\R}{\mathbb R}
\newcommand{\F}{\mathcal F}
\newcommand{\K}{\mathcal K}
\newcommand{\norm}[1]{\lVert #1\rVert}
\newcommand{\ip}[2]{\langle #1,#2\rangle}
\newcommand{\sk}{S_{\K}}
\newcommand{\Et}{\E_t}
\newcommand{\cL}{\mathcal L}
\newcommand{\amin}{\underline{\alpha}}
\newcommand{\bmax}{\overline{\beta}}
\newcommand{\ce}{c_\eta}            
\newcommand{\ca}{c_\alpha}          
\newcommand{\Cs}[1]{C^\star_{#1}}   
\newcommand{\hmax}{h_{\max}}        
\newcommand{\rK}{r_{\K}}            
\newcommand{\Reg}{\mathrm{Reg}}
\newcommand{\cQ}{\mathcal Q}        

\begin{document}
\runningtitle{Randomized Momentum Transport with Cheap Predictors}
\twocolumn[
\aistatstitle{Randomized Momentum Transport with Cheap Predictors}
\aistatsauthor{El Mahdi Chayti \And Martin Jaggi}
\aistatsaddress{Machine Learning and Optimization Laboratory (MLO), EPFL}
]

\begin{abstract}
Momentum methods such as STORM and second-order momentum track the gradient by adding a transport correction to momentum, at the cost of an extra oracle call per iteration. We introduce randomized momentum transport, which evaluates the correction with probability $q$ and otherwise uses a cheap prediction, with importance weighting that preserves the conditional mean and adds $(q^{-1}-1)$ times the prediction loss to the variance. Under transport-bias, second-moment, and residual-control assumptions, linear minimization oracle (LMO) updates, including normalized, sign, and exact spectral (Muon) directions, retain an $O(T^{-1/3})$ stationarity bound for each fixed $q>0$ with a single-gradient initialization. For unnormalized STORM, some $q<1$ lowers the leading coefficient of our sufficient gradient budget if and only if the gradient-difference noise modulus exceeds the smoothness constant, by at most about $10\%$. An online diagonal predictor admits a regret-to-convergence guarantee. At approximately equal oracle cost, querying a quarter of the corrections lowers the final training loss by $18$ to $43\%$ on MNIST1D and raises the test accuracy of Hessian-vector transport on CIFAR-10 by $1.5$ points.
\end{abstract}

\section{Introduction}
Momentum averages stochastic gradients across iterations. As the parameters move, this average must also track the change of the gradient. STORM adds the difference of two gradients evaluated on the same sample at consecutive iterates \citep{cutkosky2019storm}; second-order momentum adds a Hessian-vector product (HVP) \citep{salehkaleybar2022hcm,tran2022som}. Each correction costs one extra oracle call per iteration. We study how often the correction needs to be evaluated.

Let $C_t$ be the correction at iteration $t$, an estimate of the gradient change $\nabla f(x_t)-\nabla f(x_{t-1})$, and let $B_t$ be a cheap prediction of $C_t$. We use the randomized transport estimator
\begin{equation}\label{eq:intro-estimator}
\widehat C_t=B_t+\frac{J_t}{q}(C_t-B_t),
\quad J_t\sim\operatorname{Bernoulli}(q),
\end{equation}
with $J_t$ drawn independently of $(C_t,B_t)$. The correction is evaluated only when $J_t=1$. The estimator has the conditional mean of $C_t$, and its conditional variance exceeds that of $C_t$ by exactly $(q^{-1}-1)\E_t\norm{C_t-B_t}^2$, where $\E_t$ denotes expectation given the past. The prediction loss therefore sets the variance cost of querying less often.

\paragraph{Contributions.}
\begin{itemize}
\item \textbf{Randomized transport.} The estimator~\eqref{eq:intro-estimator} keeps the conditional mean of the correction and its covariance with the fresh gradient, and adds exactly $(q^{-1}-1)\Et\norm{C_t-B_t}^2$ to its variance (Lemma~\ref{lem:identities}).
\item \textbf{LMO convergence.} For gradient-difference and HVP transport with LMO directions, decreasing momentum weights, and a single-gradient initialization, the dual-norm stationarity measure of the LMO set is $O(T^{-1/3})$ after $T$ iterations for fixed $q>0$. The bound holds at every $T$ with explicit constants; $q$ enters only through $q^{-1}-1$ times the average prediction loss, and the initial error contributes $O(T^{-1})$ (Theorem~\ref{thm:lmo}, Corollary~\ref{cor:rate}).
\item \textbf{Gradient budget of randomized STORM.} For unnormalized STORM with a zero predictor, which uses $1+q$ gradients per iteration in expectation, we give an explicit finite-horizon gradient budget to reach expected gradient norm $\varepsilon$, with a leading term of order $\varepsilon^{-3}$. Some $q<1$ lowers this term if and only if the gradient-difference noise modulus $\delta$ exceeds the smoothness constant $L$, and by at most about $10\%$ (Theorem~\ref{thm:unnormalized}, Proposition~\ref{prop:ratio}).
\item \textbf{Learned predictor.} A diagonal predictor with $O(d)$ state, trained only on queried corrections, has expected regret $O(\sqrt{dT/q})$ against every fixed comparator. This gives a complete stationarity bound and oracle budget, and, when a fixed comparator has a uniformly smaller expected loss than the zero predictor, the learned bound improves on the zero-predictor bound once $q(T-1)$ exceeds a sufficient threshold linear in $d$ (Theorem~\ref{thm:regret}, Corollary~\ref{cor:learned-rate}).
\end{itemize}
The analysis extends to inexact LMO directions (Appendix~\ref{app:inexact}) and to MARS-style scaling of the whole correction or of its residual (Appendix~\ref{app:scaling}). On MNIST1D, randomized queries lower the final training loss relative to full correction at equal oracle cost, for both transports; on CIFAR-10, randomized HVP queries raise test accuracy; on a quadratic with controlled curvature noise, the benefit reverses as the curvature noise decreases, and the diagonal predictor then removes most of the reducible residual (Section~\ref{sec:experiments}).

Both corrections are matrix-free, and the predictor uses $O(d)$ arithmetic per iteration. The update geometry enters through the Euclidean radius of the LMO set; the cost of the direction oracle is accounted for separately.

\subsection{Related Work}
\paragraph{Recursive and switching estimators.}
STORM combines fresh gradients with recursive transport \citep{cutkosky2019storm}. PAGE switches with a fixed probability between a gradient refresh and a recursive correction \citep{li2021page}, and MARINA uses the same switching for compressed gradient differences in distributed learning \citep{gorbunov2021marina}. Our estimator draws a fresh gradient at every iteration and randomizes only the transport correction, which importance weighting keeps unbiased.

\paragraph{Shifted compression.}
The estimator~\eqref{eq:intro-estimator} applies an unbiased Bernoulli compressor to the residual of $C_t$ around the shift $B_t$. DIANA learns such shifts so that the compression variance vanishes over time \citep{mishchenko2019diana}. Here the shift is a predictor of a transport correction inside a single momentum recursion, and its accuracy is controlled through online regret.

\paragraph{Scaling and second-order correction.}
MARS scales recursive corrections \citep{yuan2025mars}, and \citet{shulgin2026mars} explain when scaling helps through a $\gamma$-similarity condition. Appendix~\ref{app:scaling} combines this scaling with randomized queries and analyzes scaling only the prediction residual. Hessian-vector transport underlies second-order momentum \citep{salehkaleybar2022hcm,tran2022som} and its LMO extensions \citep{khirirat2025lmo}. RanSOM randomizes step lengths so that Stein-type identities produce the correction \citep{chayti2026ransom}; our HVP variant keeps deterministic step lengths, randomizes the queries, and controls the Taylor remainder. Lazy Hessian methods reuse a Hessian over several iterations \citep{doikov2023lazy}, which is one source of cheap predictors.

\paragraph{Predictors and update geometry.}
Delay-compensated SGD uses diagonal transport approximations \citep{zheng2017delay}; Sophia reuses intermittently refreshed diagonal curvature for preconditioning \citep{liu2024sophia}; stochastic quasi-Newton methods learn curvature from spaced HVP evaluations \citep{byrd2016sqn}. We relate the accuracy of such predictions to the cost of randomized corrections and prove a regret guarantee for a diagonal predictor. LMO-based momentum methods include normalized momentum \citep{cutkosky2020normalized}, Scion \citep{pethick2025scion}, and the non-Euclidean trust-region view of Muon \citep{kovalev2025orthogonalization}. \citet{khirirat2025lmo} compare LMO-based methods with Polyak, extrapolated, second-order, and variance-reduced momentum, so an $O(T^{-1/3})$ rate with full correction is available in this setting; our contribution there is the randomized query and its residual-dependent constants. Inexact Muon directions are analyzed by \citet{shulgin2026inexact}.

\section{Randomized Transport}\label{sec:framework}
We minimize a differentiable function $f:\R^d\to\R$ over an integer horizon $T\ge1$. Matrix parameters are identified with a Euclidean space through the Frobenius inner product, and all unqualified norms are Euclidean. Fix $x_0$, let $f_*=\inf_x f(x)$, and set $\Delta=f(x_0)-f_*$. For $t\ge1$, write $s_t=x_t-x_{t-1}$ and $\Cs t=\nabla f(x_t)-\nabla f(x_{t-1})$.

\paragraph{Momentum recurrence.}
Choose a deterministic sequence $\alpha_t\in(0,1]$ with $\alpha_0=1$, and initialize $m_0=g_0$, a stochastic gradient at $x_0$. For $t\ge1$, with fresh stochastic gradient $g_t$ at $x_t$ and $\widehat C_t$ from~\eqref{eq:intro-estimator},
\begin{equation}\label{eq:momentum}
m_t=(1-\alpha_t)(m_{t-1}+\widehat C_t)+\alpha_t g_t.
\end{equation}
With $q=1$ this is the original transport recurrence. Our rate results use the decreasing schedule with floor $\amin$,
\begin{equation}\label{eq:alpha-schedule}
\alpha_t=\max\{\amin,2/(t+2)\},\qquad 0<\amin\le1.
\end{equation}

\paragraph{Directions.}
Let $\K$ be compact, convex, symmetric about the origin ($\K=-\K$), and full dimensional, with radius $R=\sup_{u\in\K}\norm{u}$ and inradius $\rK=\sup\{r:\ \norm{u}\le r\Rightarrow u\in\K\}>0$. Its support function $\sk(z)=\max_{u\in\K}\ip{z}{u}$ is a norm, and $\sk(z)\ge\rK\norm{z}$. Choose a measurable LMO direction and update
\begin{equation}\label{eq:lmo-update}
u_t\in\arg\max_{u\in\K}\ip{m_t}{u},\qquad
x_{t+1}=x_t-\eta u_t,
\end{equation}
so that $\norm{s_t}\le\eta R$ for every $t\ge1$. At $m_t=0$ choose $u_t=0$.

\begin{table}[t]
\caption{Direction sets and stationarity measures. Each unit ball has inradius $\rK=1$. The spectral row concerns matrix parameters in $\R^{d_1\times d_2}$.}
\label{tab:geometry}
\centering
\small
\setlength{\tabcolsep}{4pt}
\begin{tabular}{llll}
\toprule
Unit ball of & Direction & $\sk(\nabla f)$ & $R$\\
\midrule
$\ell_2$ & $m/\norm{m}_2$ & $\norm{\nabla f}_2$ & $1$\\
$\ell_\infty$ & $\operatorname{sign}(m)$ & $\norm{\nabla f}_1$ & $\sqrt d$\\
Spectral & Polar factor & $\norm{\nabla f}_{\rm nuc}$ & $\sqrt{\min\{d_1,d_2\}}$\\
\bottomrule
\end{tabular}
\end{table}

\begin{figure}[t]
\hrule\smallskip
\textbf{Algorithm 1: Randomized momentum transport}
\smallskip
\textbf{Input:} $x_0$, $T$, $\eta$, $q$, $\{\alpha_t\}_{t=0}^{T-1}$, predictor state.\\
Compute one stochastic gradient $m_0=g_0$ at $x_0$.\\
\textbf{For} $t=0,\ldots,T-1$:\\
\hspace*{1em}Choose $u_t$ by the LMO; set $x_{t+1}=x_t-\eta u_t$.\\
\hspace*{1em}\textbf{If} $t=T-1$, stop.\\
\hspace*{1em}Compute fresh $g_{t+1}$ and prediction $B_{t+1}$.\\
\hspace*{1em}Draw independent $J_{t+1}\sim\operatorname{Bernoulli}(q)$.\\
\hspace*{1em}\textbf{If} $J_{t+1}=1$, evaluate $C_{t+1}$.\\
\hspace*{1em}Form $\widehat C_{t+1}$ by~\eqref{eq:intro-estimator}.\\
\hspace*{1em}Update $m_{t+1}$ by~\eqref{eq:momentum}, using $\alpha_{t+1}$.\\
\hspace*{1em}Train future predictions if a correction was observed.\\
\textbf{Return:} $x_U$, with independent $U\sim\mathrm{Unif}\{0,\ldots,T-1\}$.
\smallskip\hrule
\end{figure}

\paragraph{History and assumptions.}
Let $\F_{-1}$ be trivial and, for $t\ge0$, let $\F_t$ be the $\sigma$-algebra generated by all oracle samples, query decisions, and evaluated quantities used to form $m_0,\ldots,m_t$ and the predictor state used at iteration $t+1$. Then $x_{t+1}$, $s_{t+1}$, and $\Cs{t+1}$ are $\F_t$-measurable. Write $\Et[\cdot]=\E[\cdot\mid\F_{t-1}]$. The correction $C_t$ is treated as a potential oracle value, defined whether or not it is evaluated. Relevant second moments are assumed finite.

\begin{assumption}[Objective and gradients]\label{ass:objective}
$f_*>-\infty$ and $f$ is $L$-smooth. For every $t\ge0$, $\Et g_t=\nabla f(x_t)$ and $\Et\norm{g_t-\nabla f(x_t)}^2\le\sigma^2$.
\end{assumption}

\begin{assumption}[Transport]\label{ass:transport}
Let $\mu_t=\Et C_t$. For constants $b,v\ge0$ and every $t\ge1$,
\begin{align}
\norm{\mu_t-\Cs t}&\le b\norm{s_t}^2,\label{eq:bias-assumption}\\
\Et\norm{C_t-\mu_t}^2&\le v^2\norm{s_t}^2.\label{eq:variance-assumption}
\end{align}
The gradient $g_t$ and the correction $C_t$ may share oracle randomness.
\end{assumption}

\begin{assumption}[Query model]\label{ass:query}
The probability $q\in(0,1]$ is fixed. For every $t\ge1$, $J_t\mid\F_{t-1}\sim\operatorname{Bernoulli}(q)$, and $J_t$ is conditionally independent of $(C_t,B_t,g_t)$ given $\F_{t-1}$.
\end{assumption}
The predictor $B_t$ may use the past and cheap information from the current sample, but not the value $C_t$ that $J_t$ decides whether to query.

\subsection{Two Transport Oracles}
\paragraph{Gradient differences.}
Let $g(x;\xi)$ be unbiased for $\nabla f(x)$ at every $x$. For a fresh sample $\xi_t$ independent of $\F_{t-1}$, set $g_t=g(x_t;\xi_t)$ and
\begin{equation}\label{eq:storm-correction}
C_t=g(x_t;\xi_t)-g(x_{t-1};\xi_t).
\end{equation}
This oracle is unbiased for $\Cs t$. Assume the centered similarity condition
\begin{equation}\label{eq:similarity}
\E\norm{g(x;\xi)-g(y;\xi)-[\nabla f(x)-\nabla f(y)]}^2
\le\delta^2\norm{x-y}^2.
\end{equation}
Then $b=0$ and $v=\delta$. A queried correction costs one extra gradient, at $x_{t-1}$ on the current sample $\xi_t$.

\paragraph{Hessian-vector products.}
Suppose $f$ is twice differentiable with a $\rho$-Lipschitz Hessian. For a Hessian oracle $H_t$ with $\Et H_t=\nabla^2f(x_{t-1})$, assume
\begin{equation}\label{eq:hvp-noise}
\Et\norm{(H_t-\nabla^2 f(x_{t-1}))s_t}^2
\le\tau^2\norm{s_t}^2,
\end{equation}
and set $C_t=H_ts_t$. Taylor's theorem gives $b=\rho/2$, and~\eqref{eq:hvp-noise} gives $v=\tau$. An HVP at $x_t$ satisfies the same bounds when it is unbiased at $x_t$ and satisfies~\eqref{eq:hvp-noise} with $x_t$ in place of $x_{t-1}$; it can reuse the derivative graph of the current gradient. Both endpoints, and gradient differences, satisfy $\norm{\mu_t}\le L\norm{s_t}$.

\subsection{Oracle Accounting}
Algorithm~1 uses $T$ gradients and $Q_T\sim\operatorname{Binomial}(T-1,q)$ correction queries. For gradient differences,
\begin{equation}\label{eq:count}
N_T=T+Q_T,\qquad \E N_T=(1+q)T-q.
\end{equation}
For HVP transport, gradient and HVP counts remain separate. If an HVP costs $c_H$ gradient-equivalent units, the expected oracle work is $T+c_Hq(T-1)$. Predictor, direction, and systems overhead are counted separately in runtime comparisons.

\section{Residual-Dependent Convergence}\label{sec:convergence}
The analysis separates the conditional variance of the correction, momentum tracking, and descent. Write $\zeta_t=g_t-\nabla f(x_t)$.

\begin{lemma}[Transport identities]\label{lem:identities}
Under Assumptions~\ref{ass:transport} and~\ref{ass:query}, for every $t\ge1$,
\begin{align}
\Et\widehat C_t&=\mu_t,\label{eq:mean-id}\\
\Et\norm{\widehat C_t-\mu_t}^2
&=\Et\norm{C_t-\mu_t}^2\notag\\
&\quad +(q^{-1}-1)\Et\norm{C_t-B_t}^2,\label{eq:variance-id}\\
\Et\ip{\zeta_t}{\widehat C_t-\mu_t}
&=\Et\ip{\zeta_t}{C_t-\mu_t}.\label{eq:covariance-id}
\end{align}
The same second-moment increment holds for centering at any $\F_{t-1}$-measurable point, in particular at $\Cs t$.
\end{lemma}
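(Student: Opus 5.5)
The plan is to write the estimator as a sum of the original correction and a zero-mean perturbation, and then expand each identity. Set $D_t=C_t-B_t$ and $\varepsilon_t=(J_t/q-1)D_t$, so that
\[
\widehat C_t=C_t+\varepsilon_t .
\]
The key fact is Assumption~\ref{ass:query}: given $\F_{t-1}$, $J_t$ is independent of $(C_t,B_t,g_t)$. Conditioning on $\F_{t-1}$ and then on $(C_t,B_t,g_t)$ (tower property) gives
\[
\Et[\varepsilon_t\mid C_t,B_t,g_t]=\Bigl(\tfrac{\E_t J_t}{q}-1\Bigr)D_t=0 .
\]
Since $\zeta_t=g_t-\nabla f(x_t)$ is a function of $g_t$ and the $\F_{t-1}$-measurable point $x_t$, the same conditioning step applies to any product of $\varepsilon_t$ with a function of $(C_t,B_t,g_t)$ and $\F_{t-1}$.

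For the mean identity~\eqref{eq:mean-id}, we have $\Et\widehat C_t=\Et C_t+\Et\varepsilon_t=\mu_t+0$. For the covariance identity~\eqref{eq:covariance-id}, write $\widehat C_t-\mu_t=(C_t-\mu_t)+\varepsilon_t$. The extra term is
\[
\Et\ip{\zeta_t}{\varepsilon_t}=\Et\ip{\zeta_t}{\Et[\varepsilon_t\mid C_t,B_t,g_t]}=0 .
\]
This uses that $\zeta_t$ depends only on $g_t$ and the $\F_{t-1}$-measurable point $x_t$.

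For the variance identity~\eqref{eq:variance-id}, fix any $\F_{t-1}$-measurable point $a$, which covers $\mu_t$ and also $\Cs t$ (since $\Cs t$ is $\F_{t-1}$-measurable). Expand
\[
\norm{C_t-a+\varepsilon_t}^2=\norm{C_t-a}^2+2\ip{C_t-a}{\varepsilon_t}+\norm{\varepsilon_t}^2 .
\]
The cross term has zero conditional mean by the argument above. For the last term,
\[
\Et\bigl[(J_t/q-1)^2\mid C_t,B_t\bigr]=\frac{q(1-q)^2+(1-q)q^2}{q^2}=q^{-1}-1,
\]
so $\Et\norm{\varepsilon_t}^2=(q^{-1}-1)\Et\norm{D_t}^2$. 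This gives~\eqref{eq:variance-id} with $a=\mu_t$ and the final sentence of the lemma for general $a$. When $q=1$ we have $\varepsilon_t\equiv0$, so the identities are trivial.

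The only delicate point is integrability, which is needed to justify the conditional expectations and the tower step. The paper assumes that the relevant second moments are finite, so $\norm{C_t}$, $\norm{B_t}$ and $\norm{\zeta_t}$ are square integrable. The cross terms are then integrable by Cauchy--Schwarz, because $|J_t/q-1|\le q^{-1}$. I would also state explicitly that the conditional-independence assumption is exactly what makes the inner conditional expectation factor. The fact that $g_t$ and $C_t$ may share randomness causes no issue, since only $J_t$ has to be independent of them.
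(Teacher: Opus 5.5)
Your proposal is correct and matches the paper's proof. Like the paper, you write $\widehat C_t=C_t+(J_t/q-1)(C_t-B_t)$, condition on $\F_{t-1}$ together with $(C_t,B_t,g_t)$ so that the perturbation has zero mean and second moment $(q^{-1}-1)\norm{C_t-B_t}^2$, and expand, with the general $\F_{t-1}$-measurable center treated exactly as in the appendix.
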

\begin{proof}
Write $\widehat C_t=C_t+(J_t/q-1)(C_t-B_t)$. Conditional on $(\F_{t-1},C_t,B_t,g_t)$, the added term has zero mean and second moment $(q^{-1}-1)\norm{C_t-B_t}^2$. Expanding gives the three identities.
\end{proof}

Define the average prediction loss
\begin{equation}\label{eq:prediction-loss}
\cL_T=\frac1T\sum_{t=1}^{T-1}\E\norm{C_t-B_t}^2,
\end{equation}
normalized by the number $T$ of iterates so that it enters the averaged bounds below directly. For schedule~\eqref{eq:alpha-schedule}, write $\bmax=1-\amin$ and define
\begin{align}
W_T={}&\sigma^2\amin+\sigma v\eta R\notag\\
&+\frac{\bmax^2}{\amin(2-\amin)}
\left[v^2\eta^2R^2+(q^{-1}-1)\cL_T\right].\label{eq:WT}
\end{align}
The covariance term involves the original noise modulus $v$; only the prediction loss is multiplied by $q^{-1}-1$.

\begin{theorem}[LMO convergence with decreasing momentum weights]\label{thm:lmo}
Under Assumptions~\ref{ass:objective}--\ref{ass:query}, Algorithm~1 with $\eta>0$ and schedule~\eqref{eq:alpha-schedule} satisfies
\begin{align}
\E\sk(\nabla f(x_U))\le{}&\frac{\Delta}{\eta T}+\frac{L\eta R^2}{2}
+\frac{4R\sigma}{T}+\frac{4\sqrt2R\sigma}{\sqrt T}\notag\\
&+\frac{2\bmax b\eta^2R^3}{\amin}+2R\sqrt{W_T}.\label{eq:lmo-bound}
\end{align}
\end{theorem}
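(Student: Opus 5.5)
We follow the standard LMO descent template and split the argument into a one-step descent inequality and a bound on the momentum tracking error $\epsilon_t=m_t-\nabla f(x_t)$.

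\textbf{Step 1: descent.} By $L$-smoothness and $\norm{u_t}\le R$,
\[
f(x_{t+1})\le f(x_t)-\eta\ip{\nabla f(x_t)}{u_t}+\tfrac{L\eta^2R^2}{2}.
\]
Since $u_t$ maximizes $\ip{m_t}{\cdot}$ over $\K$ and $\K=-\K$,
\[
-\ip{\nabla f(x_t)}{u_t}\le-\sk(\nabla f(x_t))+2R\norm{\epsilon_t}.
\]
To see this, write $\ip{\nabla f}{u_t}=\ip{m_t}{u_t}-\ip{\epsilon_t}{u_t}$ and compare with $\ip{m_t}{u^\star}\ge\ip{\nabla f}{u^\star}-R\norm{\epsilon_t}$ at the maximizer $u^\star$ for $\nabla f$. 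Summing over $t=0,\dots,T-1$, telescoping, dividing by $\eta T$, and using $U$ uniform gives
\[
\E\sk(\nabla f(x_U))\le\frac{\Delta}{\eta T}+\frac{L\eta R^2}{2}+\frac{2R}{T}\sum_{t=0}^{T-1}\E\norm{\epsilon_t}.
\]
It therefore remains to show that the average of $\E\norm{\epsilon_t}$ is at most
\[
\frac{2\sigma}{T}+\frac{2\sqrt2\sigma}{\sqrt T}+\frac{\bmax b\eta^2R^2}{\amin}+\sqrt{W_T}.
\]

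\textbf{Step 2: error recursion.} Write $e_t=\widehat C_t-\mu_t$ and $\beta_t=1-\alpha_t$. From \eqref{eq:momentum},
\[
\epsilon_t=\beta_t\epsilon_{t-1}+\beta_t(\mu_t-\Cs t)+\beta_t e_t+\alpha_t\zeta_t.
\]
We separate this into a deterministic-bias part, with norm at most $\beta_t b\eta^2R^2$ by \eqref{eq:bias-assumption}, and a martingale-difference part $\beta_t e_t+\alpha_t\zeta_t$. The latter has zero conditional mean by Lemma~\ref{lem:identities}. Unrolling gives $\epsilon_t=P_t+M_t+D_t$ with:
\begin{itemize}
\item $P_t=\prod_{s\le t}\beta_s\,\epsilon_0$, the initialization term;
\item $M_t$, the weighted sum of martingale differences;
\item $D_t$, the bias accumulation.
\end{itemize}
Once $\alpha_s$ has reached the floor $\amin$, $\norm{D_t}\le\bmax b\eta^2R^2\sum_k\bmax^k\le\bmax b\eta^2R^2/\amin$. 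Before the floor, $\beta_s$ is even smaller, so this bound is conservative.

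For $M_t$, orthogonality of martingale differences gives
\[
\E\norm{M_t}^2=\sum_s w_{s,t}^2\,\E\norm{\beta_se_s+\alpha_s\zeta_s}^2 .
\]
Expanding the square and using \eqref{eq:variance-id}, \eqref{eq:covariance-id} and Cauchy--Schwarz, each term is bounded by
\[
\beta_s^2\bigl[v^2\eta^2R^2+(q^{-1}-1)\E\norm{C_s-B_s}^2\bigr]+2\alpha_s\beta_s\sigma v\eta R+\alpha_s^2\sigma^2 .
\]

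\textbf{Step 3: weights and averaging (main obstacle).} With the floor $\amin$, the geometric weights satisfy $\sum_k\bmax^{2k}=1/(\amin(2-\amin))$. This yields the $\bmax^2/(\amin(2-\amin))$ factor on the transport and prediction terms. Because the prediction loss varies with $s$, we average over $t$ before taking square roots: $\frac1T\sum_t\E\norm{M_t}\le\sqrt{\frac1T\sum_t\E\norm{M_t}^2}$ by Jensen. Exchanging sums then makes $\sum_t w_{s,t}^2\le1/(\amin(2-\amin))$ multiply each $\E\norm{C_s-B_s}^2$, giving exactly $(q^{-1}-1)\cL_T$ with $\cL_T$ normalized by $T$. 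The $\alpha_s^2\sigma^2$ terms summed against the geometric weights give $\sigma^2\amin$ in the floor phase, and the cross terms give $\sigma v\eta R$.

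The decreasing phase $\alpha_s=2/(s+2)$ is the delicate part. There $\prod\beta$ telescopes to $\frac{(s+1)(s+2)}{(t+1)(t+2)}$, so the $\zeta$ contributions to $\E\norm{M_t}^2$ are of order $\sigma^2/(t+1)$. Averaging their square roots over $t$ should give the $2\sqrt2\sigma/\sqrt T$ term. For the initialization, $\norm{P_t}\le\frac{2}{(t+1)(t+2)}\sigma$ in expectation, which sums to the $2\sigma/T$ term. We expect tracking the constants through this two-phase schedule, and ensuring the floor-phase bounds remain valid uniformly for pre-floor indices, to be the main technical work. We would handle it by bounding every $\beta_s\le\bmax$ in the transport terms and using the explicit telescoping product only for the $\sigma$-driven terms.

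Finally, we combine the bounds using $\sqrt{a+b}\le\sqrt a+\sqrt b$ and multiply by $2R$, which yields \eqref{eq:lmo-bound}.
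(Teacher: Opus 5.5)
Your overall structure matches the paper's proof: the alignment inequality, unrolling $\epsilon_t$ into initialization, drift and martingale parts, bounding each increment with the transport identities, and geometric tail weights with an exchange of sums. These give $\bmax b\eta^2R^2/\amin$ and the factor $\bmax^2/(\amin(2-\amin))$ on $v^2\eta^2R^2+(q^{-1}-1)\cL_T$. The gap is the step you flag and leave at ``should give'': the $\sigma$-driven weights under the two-phase schedule. With $p_{i,t}=\prod_{j=i+1}^t\beta_j$, you need $\sum_{i=1}^tp_{i,t}^2\alpha_i^2\le\amin+2/(t+2)$ and $2\sum_{i=1}^tp_{i,t}^2\alpha_i\beta_i\le1$ for every $t\ge1$, and your sketch does not prove either. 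The telescoped product $(s+1)(s+2)/((t+1)(t+2))$ gives $p_{s,t}$ only while $t$ is still in the decreasing phase. Beyond the switch index $t_0$ the weights mix both regimes, $p_{s,t}=\bmax^{t-t_0}p_{s,t_0}$, and you do not treat these indices. The cross-term weight is the more serious problem, because it is of order one in the decreasing phase: there it equals $4\sum_{s=1}^ts(s+1)^2/((t+1)^2(t+2)^2)$, which tends to $1$. It therefore cannot be handled with $\beta_s\le\bmax$ and a geometric series, which only gives $2\bmax/(\amin(2-\amin))$. Your claim that the cross terms give $\sigma v\eta R$ is unsupported outside the floor phase.

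The paper closes this without splitting phases. The quantity $\Upsilon_t=\sum_{i\le t}p_{i,t}^2\alpha_i^2$ satisfies $\Upsilon_t=\beta_t^2\Upsilon_{t-1}+\alpha_t^2$. The schedule's reciprocal increments satisfy $0\le1/\alpha_t-1/\alpha_{t-1}\le1/2$, i.e.\ $\alpha_{t-1}\le\alpha_t/(1-\alpha_t/2)$. Induction with $\beta_t\le1-\alpha_t/2$ then gives $\Upsilon_t\le\alpha_t\beta_t+\alpha_t^2=\alpha_t\le\amin+2/(t+2)$. For the cross term, $2\alpha_i\beta_i\le1-\beta_i^2$ and $p_{i-1,t}=\beta_ip_{i,t}$ telescope to $\sum_ip_{i,t}^2(1-\beta_i^2)=1-p_{0,t}^2\le1$.

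Also watch the order of operations. Write $\E\norm{M_t}^2\le\mathcal V_t+2\sigma^2/(t+2)$, where $\mathcal V_t$ collects the $\sigma^2\amin$, cross, and transport terms. You must split $\sqrt{\mathcal V_t+2\sigma^2/(t+2)}\le\sqrt{\mathcal V_t}+\sqrt2\sigma/\sqrt{t+2}$ pointwise in $t$ and apply Jensen only to $\mathcal V_t$. If you apply Jensen to all of $\E\norm{M_t}^2$ first, as your Step~3 states, the $2\sigma^2/(t+2)$ terms average to $(2\sigma^2/T)\sum_t(t+2)^{-1}$. That is a $\sigma\sqrt{\log T/T}$ contribution, which does not yield the stated $4\sqrt2R\sigma/\sqrt T$ in general.
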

The terms $4R\sigma/T$ and $4\sqrt2R\sigma/\sqrt T$ come from the initial momentum error and from the fresh-gradient noise during the decreasing part of the schedule, respectively.

\paragraph{Proof outline.}
With $e_t=m_t-\nabla f(x_t)$ and $\beta_t=1-\alpha_t$,
\begin{equation}\label{eq:error-recursion}
e_t=\beta_t e_{t-1}+\beta_t r_t+w_t,
\quad r_t=\mu_t-\Cs t,
\end{equation}
where $w_t=\beta_t(\widehat C_t-\mu_t)+\alpha_t\zeta_t$ is a martingale difference. Unrolling gives a weighted initial error, a drift sum, and a martingale sum. The schedule yields
$\prod_{i=1}^t\beta_i\le2/[(t+1)(t+2)]$, so the cumulative initial-error weight is at most two. The remaining terms follow from weighted second-moment bounds and the alignment inequality
\begin{equation}\label{eq:lmo-alignment}
\ip{\nabla f(x_t)}{u_t}
\ge\sk(\nabla f(x_t))-2R\norm{e_t}.
\end{equation}
Appendix~\ref{app:lmo} gives the proof and the corresponding bound for arbitrary deterministic $\alpha_t$.

\begin{corollary}[Complete stationarity bound]\label{cor:rate}
Let $\ce>0$ and $0<\ca\le1$, and suppose $\cL_T\le\bar\kappa^2\eta^2R^2$ for some $\bar\kappa\ge0$. Set $\eta=\ce T^{-2/3}$ and $\amin=\ca T^{-2/3}$ in~\eqref{eq:alpha-schedule}. For $\kappa\ge0$ define
\begin{align}
Z_q(\kappa)&=\ca\sigma^2+\sigma v\ce R\notag\\
&\quad+\frac{\ce^2R^2}{\ca}\left[v^2+(q^{-1}-1)\kappa^2\right],\label{eq:Zq}\\
\mathsf A_q(\kappa)&=\Delta/\ce+2R\sqrt{Z_q(\kappa)},\notag\\
\mathsf U&=4\sqrt2R\sigma,\qquad \mathsf I=4R\sigma,\notag\\
\mathsf D&=L\ce R^2/2+2b\ce^2R^3/\ca.\label{eq:rate-constants}
\end{align}
Then, under the assumptions of Theorem~\ref{thm:lmo}, for every $T\ge1$,
\begin{align}
\E\sk(\nabla f(x_U))\le{}&\mathsf A_q(\bar\kappa)T^{-1/3}
+\mathsf U T^{-1/2}\notag\\
&+\mathsf D T^{-2/3}+\mathsf I T^{-1}.\label{eq:rate}
\end{align}
\end{corollary}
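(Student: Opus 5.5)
The corollary follows by substituting the parameter choices into Theorem~\ref{thm:lmo} and bounding each term by its power of $T$. There is no new probabilistic content. The plan is to take \eqref{eq:lmo-bound} as given, insert $\eta=\ce T^{-2/3}$ and $\amin=\ca T^{-2/3}$, and use the hypothesis $\cL_T\le\bar\kappa^2\eta^2R^2$ inside $W_T$. Two coarse inequalities keep every bound valid for every $T\ge1$: $\bmax=1-\amin\le1$ and $2-\amin\ge1$. The second holds because $\amin\le\ca\le1$, so the schedule~\eqref{eq:alpha-schedule} is well defined with $0<\amin\le1$.

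The deterministic terms are handled first. We have $\Delta/(\eta T)=(\Delta/\ce)T^{-1/3}$ and $L\eta R^2/2=(L\ce R^2/2)T^{-2/3}$. For the bias term, $\bmax\le1$ gives
\[
2\bmax b\eta^2R^3/\amin\le 2b\ce^2T^{-4/3}R^3/(\ca T^{-2/3})=(2b\ce^2R^3/\ca)T^{-2/3}.
\]
Together these give the $\mathsf D T^{-2/3}$ term and the first part of $\mathsf A_q$. The terms $4R\sigma/T=\mathsf I T^{-1}$ and $4\sqrt2R\sigma/\sqrt T=\mathsf U T^{-1/2}$ carry over unchanged.

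The main step is showing $W_T\le Z_q(\bar\kappa)T^{-2/3}$. The three parts of $W_T$ scale as follows:
\[
\sigma^2\amin=\ca\sigma^2T^{-2/3},\qquad \sigma v\eta R=\sigma v\ce R\,T^{-2/3}.
\]
For the last part, $\bmax^2/(\amin(2-\amin))\le 1/\amin=T^{2/3}/\ca$, and the bracket is at most $\eta^2R^2[v^2+(q^{-1}-1)\bar\kappa^2]=\ce^2R^2T^{-4/3}[v^2+(q^{-1}-1)\bar\kappa^2]$. Multiplying gives $(\ce^2R^2/\ca)[v^2+(q^{-1}-1)\bar\kappa^2]T^{-2/3}$. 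Summing the three parts gives exactly $Z_q(\bar\kappa)T^{-2/3}$, hence $2R\sqrt{W_T}\le 2R\sqrt{Z_q(\bar\kappa)}\,T^{-1/3}$.

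Adding this to $(\Delta/\ce)T^{-1/3}$ yields $\mathsf A_q(\bar\kappa)T^{-1/3}$, and collecting all terms gives \eqref{eq:rate}. The only point that needs care is the uniformity in $T$: the factor $\bmax^2/(2-\amin)$ must be bounded by $1$ rather than approximated. It is, because $\amin\in(0,1]$.
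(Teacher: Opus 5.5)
Your proposal is correct and follows the paper's own argument: substitute $\eta=\ce T^{-2/3}$, $\amin=\ca T^{-2/3}$ and the bound on $\cL_T$ into Theorem~\ref{thm:lmo}, using only $\bmax\le1$ and $\amin(2-\amin)\ge\amin$. The step $(q^{-1}-1)\cL_T\le(q^{-1}-1)\bar\kappa^2\eta^2R^2$ also needs $q\le1$, which you use implicitly.
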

The four terms collect the descent and tracking error, the noise of the decreasing phase, smoothness and Taylor drift, and the initial momentum error. For fixed $q>0$ and a uniform $\bar\kappa$, the bound is $O(T^{-1/3})$.

For $B_t=0$ and $\norm{\mu_t}\le M\norm{s_t}$, Assumption~\ref{ass:transport} gives $\Et\norm{C_t}^2\le(v^2+M^2)\norm{s_t}^2$, so Corollary~\ref{cor:rate} applies with $\bar\kappa^2=v^2+M^2$. Both transport oracles satisfy this with $M=L$. In general, prediction accuracy enters only through $v^2+(q^{-1}-1)\bar\kappa^2$.

\paragraph{Update geometry.}
Table~\ref{tab:geometry} lists $R$ for the three direction sets; the factor $R$ converts Euclidean tracking error into the stationarity measure $\sk$. Smoothness and noise are measured in the Euclidean norm, so the geometry enters only through $R$ and $\rK$. For sign directions the alignment multiplier is $R=\sqrt d$, and the step-dependent terms contain further powers of $R$; analyses that measure smoothness in the norm associated with $\K$ \citep{pethick2025scion,kovalev2025orthogonalization,khirirat2025lmo} can have a sharper dimension dependence.

\paragraph{Inexact directions.}
Appendix~\ref{app:inexact} covers two inexactness models. If $u_t\in\K$ and $\ip{m_t}{u_t}\ge\sk(m_t)-\epsilon_t$, then~\eqref{eq:lmo-bound} holds with the additional term $T^{-1}\sum_t\E\epsilon_t$. If instead $\norm{u_t-u_t^\circ}\le\epsilon_{\rm L}$ for an exact LMO direction $u_t^\circ$ of $m_t$, an additive model as in \citet{shulgin2026inexact} measured in the Euclidean norm, and $\epsilon_{\rm L}<\rK$, then $(1-\epsilon_{\rm L}/\rK)\,\E\sk(\nabla f(x_U))$ is bounded by the right side of~\eqref{eq:lmo-bound} with $R$ replaced by $R+\epsilon_{\rm L}$.

\section{Gradient Complexity}\label{sec:cost}
We specialize to gradient-difference transport~\eqref{eq:storm-correction} with $B_t=0$ and the unnormalized step $x_{t+1}=x_t-\eta m_t$. Define
\begin{equation}\label{eq:Vq}
V_q^2=\frac{\delta^2+(1-q)L^2}{q}.
\end{equation}
The centered randomized transport has variance at most $V_q^2\norm{s_t}^2$, while its covariance with the fresh gradient noise is bounded by $\sigma\delta\norm{s_t}$ (Appendix~\ref{app:transport}).

\begin{theorem}[Unnormalized randomized STORM]\label{thm:unnormalized}
Let Assumptions~\ref{ass:objective} and~\ref{ass:query} hold, with the gradient-difference correction~\eqref{eq:storm-correction}, similarity~\eqref{eq:similarity}, $B_t=0$, and $x_{t+1}=x_t-\eta m_t$. Let $c>0$, suppose $c\eta^2\le1$ and
\begin{equation}\label{eq:slack}
\theta_\eta=\frac{1-L\eta}{2}-\frac{V_q^2}{c}>0,
\end{equation}
and use schedule~\eqref{eq:alpha-schedule} with $\amin=c\eta^2$. Then
\begin{align}
\E\norm{\nabla f(x_U)}^2\le{}&\frac{2\Delta}{\eta T}
+2\sigma^2\eta^2\left(c+\frac{\delta^2}{\theta_\eta}\right)\notag\\
&+\frac{\sigma^2}{T}\left[2+4\log\frac{T+2}{2}\right].\label{eq:unnormalized}
\end{align}
The gradient count is~\eqref{eq:count}.
\end{theorem}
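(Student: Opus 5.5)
We follow the standard STORM Lyapunov argument and combine a descent inequality with a one-step recursion for the tracking error $e_t=m_t-\nabla f(x_t)$.

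\textbf{Descent step.} With $x_{t+1}=x_t-\eta m_t$, $L$-smoothness and $2\ip{a}{b}=\norm a^2+\norm b^2-\norm{a-b}^2$ give
\begin{equation*}
f(x_{t+1})\le f(x_t)-\frac\eta2\norm{\nabla f(x_t)}^2-\frac{\eta(1-L\eta)}{2}\norm{m_t}^2+\frac\eta2\norm{e_t}^2 .
\end{equation*}
Since $s_{t+1}=-\eta m_t$, this equals $\frac{1-L\eta}{2\eta}\norm{s_{t+1}}^2$. This is the term that must absorb the transport variance.

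\textbf{Error recursion.} Here $b=0$, so \eqref{eq:error-recursion} reduces to $e_t=\beta_te_{t-1}+w_t$ with $w_t=\beta_t(\widehat C_t-\Cs t)+\alpha_t\zeta_t$. Lemma~\ref{lem:identities} and the bound of Appendix~\ref{app:transport} (variance at most $V_q^2\norm{s_t}^2$, covariance at most $\sigma\delta\norm{s_t}$) give
\begin{equation*}
\Et\norm{e_t}^2\le\beta_t^2\norm{e_{t-1}}^2+\beta_t^2V_q^2\norm{s_t}^2+2\alpha_t\beta_t\sigma\delta\norm{s_t}+\alpha_t^2\sigma^2 .
\end{equation*}
Young's inequality bounds the cross term. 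We keep $2\alpha_t\sigma\delta\norm{s_t}\le \alpha_t\theta\norm{s_t}^2/\eta^2\cdot(\text{scaled})+\alpha_t\eta^2\sigma^2\delta^2/\theta$, splitting so that the $\norm{s_t}^2$ part is charged to the slack $\theta_\eta$ and the remainder yields $\sigma^2\eta^2\delta^2/\theta_\eta$ per unit of $\alpha_t$.

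\textbf{Lyapunov function.} Set $\Phi_t=f(x_t)+\frac{1}{2c\eta}\norm{e_t}^2$, scaled so that, on the floor phase $\alpha_t=\amin=c\eta^2$, the contraction $(1-\beta_t^2)\norm{e_{t-1}}^2/(2c\eta)\ge\frac{\alpha_t}{2c\eta}\norm{e_{t-1}}^2=\frac\eta2\norm{e_{t-1}}^2$ cancels the $\frac\eta2\norm{e_t}^2$ from descent (after an index shift). The variance term $\frac{V_q^2}{2c\eta}\norm{s_t}^2$ must then be dominated by $\frac{1-L\eta}{2\eta}\norm{s_t}^2$. The residual is $\frac{\theta_\eta}{\eta}\norm{s_t}^2$, which by \eqref{eq:slack} is positive and pays for the Young split of the covariance, producing $\sigma^2\eta^2\delta^2/\theta_\eta$ after dividing by $\eta/2$. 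The fresh-noise term $\alpha_t^2\sigma^2/(2c\eta)$ with $\alpha_t=c\eta^2$ gives $c\eta^2\sigma^2\cdot\eta/2$ per step, hence the $2\sigma^2\eta^2c$ term. Telescoping, dividing by $\eta T/2$ and averaging over $U$ yields $2\Delta/(\eta T)+2\sigma^2\eta^2(c+\delta^2/\theta_\eta)$.

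\textbf{Decreasing phase (main obstacle).} When $\alpha_t=2/(t+2)>\amin$, contraction is stronger, but the noise term $\alpha_t^2\sigma^2$ exceeds the floor value. The cleanest route is to handle $\sum_t\E\norm{e_t}^2$ directly rather than through $\Phi$. Unroll $e_t$ with $\prod\beta_i\le 2/((t+1)(t+2))$: the initial error $\E\norm{e_0}^2\le\sigma^2$ contributes $O(\sigma^2/T)$ after averaging, giving the constant $2$. The fresh-noise sum $\sum_{i\le t}\alpha_i^2\prod_{j>i}\beta_j^2\sigma^2$ is of order $\sigma^2/(t+2)$, and summing gives $4\sigma^2\log\frac{T+2}{2}$, whose average gives the logarithmic term. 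The transport variance and covariance in this phase are weighted by $\beta_t^2\le$ the floor-phase weights relative to $\alpha_t\ge\amin$, so they are absorbed by the same slack. I expect getting these constants exactly (the $2$ and the $4\log$) and checking that the absorption survives the non-constant $\alpha_t$ in the Lyapunov weights to be the delicate bookkeeping. I would try a time-varying weight $1/(2\alpha_t\eta^{-1})$ capped at $1/(2c\eta)$ and verify monotonicity. Finally, the gradient count \eqref{eq:count} is immediate from the query model.
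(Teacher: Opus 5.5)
Your floor-phase bookkeeping is essentially right. The proposal still does not prove the stated bound. The decreasing phase, which you flag as the main obstacle, is exactly where the constant weight $\frac1{2c\eta}$ on $\norm{e_t}^2$ breaks down, and neither fallback closes it.

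With that weight, three terms fail to scale correctly once $\alpha_t=2/(t+2)\gg c\eta^2$:
\begin{itemize}
\item the initial potential is $\frac{\sigma^2}{2c\eta}$, which after multiplying by $\frac2{\eta T}$ gives $\frac{\sigma^2}{c\eta^2T}=\frac{\sigma^2}{\amin T}$ rather than $\frac{2\sigma^2}{T}$;
\item the fresh noise costs $\frac{\alpha_t^2\sigma^2}{2c\eta}$ per step;
\item the covariance becomes $\frac{\alpha_t\beta_t\sigma\delta}{c}\norm{m_{t-1}}$. Its completed square is $\frac{\alpha_t^2\sigma^2\delta^2}{4c^2\eta\theta_\eta}$, which is of order $\sigma^2\delta^2\eta^3/\theta_\eta$ only when $\alpha_t=c\eta^2$.
\end{itemize}
Summed over the decreasing phase, these give a term of order $\frac{\sigma^2}{c\eta^2T}\bigl(1+\frac{\delta^2}{c\theta_\eta}\bigr)$. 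With $c$ fixed and $\eta\propto T^{-1/3}$, this decays only like $T^{-1/3}$ and dominates the leading $T^{-2/3}$ term.

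The direct-unrolling route has the same defect. Write $p_{i,t}=\prod_{j=i+1}^t\beta_j$. After exchanging sums, the cross term $\sigma\delta\eta\E\norm{m_{i-1}}$ at index $i$ carries total weight $2\alpha_i\beta_i\sum_{t\ge i}p_{i,t}^2$. Bounding the tail by the floor-phase geometric series ($\lesssim 1/\amin$) leaves a factor $\alpha_i/\amin\gg1$. The inequality $\alpha_i\ge\amin$ controls the variance term but not the covariance. You would need $\sum_{t\ge i}p_{i,t}^2\lesssim 1/\alpha_i$, which you do not establish.

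The missing idea is to build $1/\alpha_t$ into the potential, as the paper does:
$\Phi_t=\E[f(x_t)-f_*]+(\eta/\alpha_t)\E\norm{e_t}^2$. This handles every term in both phases at once:
\begin{itemize}
\item The covariance contributes $\frac{\eta}{\alpha_{t+1}}\cdot 2\alpha_{t+1}\beta_{t+1}\sigma\delta\eta\norm{m_t}\le 2\sigma\delta\eta^2\norm{m_t}$. Completing the square against $-\eta\theta_\eta\norm{m_t}^2$ gives $\sigma^2\delta^2\eta^3/\theta_\eta$.
\item The variance term uses $\alpha_{t+1}\ge c\eta^2$ and produces exactly the slack $\theta_\eta$.
\item The fresh noise becomes $\eta\alpha_{t+1}\sigma^2$. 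The bound $\sum_{t=1}^{T-1}\alpha_t\le c\eta^2T+2\log\frac{T+2}{2}$ then yields both the $c$ term and the logarithm.
\item Since $\alpha_0=1$, $\Phi_0\le\Delta+\eta\sigma^2$, which gives the $\frac{2\sigma^2}{T}$ term.
\end{itemize}
The monotonicity you wanted to verify is that the coefficient of $\eta\E\norm{e_t}^2$ in $\Phi_{t+1}-\Phi_t$, namely $(\frac1{\alpha_{t+1}}-\frac1{\alpha_t})-\frac32+\alpha_{t+1}$, is nonpositive. This holds because $c\eta^2\le1$ gives $1/\alpha_t=\min\{1/(c\eta^2),(t+2)/2\}$ for all $t\ge0$, with increments in $[0,\frac12]$.

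Your tentative weight $\eta/(2\alpha_t)$ fails this check at $t=0$ when $\amin\le\frac23$: the coefficient there is $+\frac1{12}$. The extra cost is only $\eta\sigma^2/12$, but it must be accounted for. Finally, no $m_T$ is formed, so the last transition needs separate treatment. Compare $\E[f(x_T)-f_*]$ directly with $\Phi_{T-1}$, using $\frac12-\frac1{\alpha_{T-1}}\le0$ and $1-L\eta>0$.
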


The proof uses the potential $\E[f(x_t)-f_*]+(\eta/\alpha_t)\E\norm{e_t}^2$, whose initial value is at most $\Delta+\eta\sigma^2$. The covariance term is kept before completing the square in $\norm{m_t}$ (Appendix~\ref{app:unnormalized}).

Assume $\Delta,\sigma,\delta>0$ and fix $q$. Minimizing the coefficient of $\eta^2$ at $L=0$ over $c$ gives
\begin{equation}\label{eq:optimal-c}
c_q=2V_q(V_q+\delta),\qquad
K_q=4\sigma^2(V_q+\delta)^2.
\end{equation}
Use $c=c_q$ and define
\begin{align}
\eta_{\max}&=\min\left\{\frac1{\sqrt{c_q}},
\frac{\delta}{2L(V_q+\delta)}\right\},\notag\\
T_{\rm cap}&=\max\left\{1,\left\lceil\frac{\Delta}{K_q\eta_{\max}^3}\right\rceil\right\},\label{eq:cap-threshold}
\end{align}
omitting the second term in $\eta_{\max}$ when $L=0$. For $T\ge T_{\rm cap}$, choose $\eta=(\Delta/(K_qT))^{1/3}$. Then
\begin{align}
\E\norm{\nabla f(x_U)}^2\le{}&3(\Delta^2K_q)^{1/3}T^{-2/3}\notag\\
&+\frac{2L\Delta+2\sigma^2+4\sigma^2\log((T+2)/2)}{T}.\label{eq:leading-rate}
\end{align}
Appendix~\ref{app:unnormalized} derives~\eqref{eq:leading-rate} and the bound for $T<T_{\rm cap}$.

For a target $\varepsilon>0$, define
\begin{align}
T_0^{\rm S}&=\max\{40,T_{\rm cap},3\sqrt3\Delta\sqrt{K_q}\varepsilon^{-3}\},\notag\\
\varphi_0&=\frac{2L\Delta+2\sigma^2+4\sigma^2\log((T_0^{\rm S}+2)/2)}{T_0^{\rm S}},\notag\\
T_\varepsilon^{\rm S}&=\left\lceil T_0^{\rm S}(1+\varphi_0/\varepsilon^2)^{3/2}\right\rceil.\label{eq:storm-horizon}
\end{align}
Running $T_\varepsilon^{\rm S}$ iterations guarantees $\E\norm{\nabla f(x_U)}\le\varepsilon$ with expected gradient budget $(1+q)T_\varepsilon^{\rm S}-q$, whose leading term is
\begin{equation}\label{eq:leading-budget}
N_q^{\rm lead}(\varepsilon)
=6\sqrt3\,\Delta\sigma(1+q)(V_q+\delta)\varepsilon^{-3}.
\end{equation}

\begin{proposition}[Leading budget comparison]\label{prop:ratio}
Let $\omega=L^2/\delta^2$. The ratio of the leading coefficients in~\eqref{eq:leading-budget} at $q$ and at $1$ is
\begin{equation}\label{eq:ratio}
\cQ(q;\omega)=\frac{1+q}{4}
\left(1+\sqrt{\frac{1+\omega}{q}-\omega}\right).
\end{equation}
There exists $q\in(0,1)$ with $\cQ(q;\omega)<1$ if and only if $\omega<1$.
\end{proposition}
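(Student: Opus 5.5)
The plan has two parts. First I derive the formula~\eqref{eq:ratio} directly from~\eqref{eq:leading-budget} and~\eqref{eq:Vq}. Then I settle the existence claim by elementary monotonicity and sign arguments in $q$ for fixed $\omega$.

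\textbf{Deriving~\eqref{eq:ratio}.} At $q=1$ we have $V_1=\delta$, so the leading coefficient is $6\sqrt3\,\Delta\sigma\cdot 2\cdot 2\delta$. At general $q$ it is $6\sqrt3\,\Delta\sigma(1+q)(V_q+\delta)$. The ratio is therefore $(1+q)(1+V_q/\delta)/4$. Using $L^2=\omega\delta^2$ in~\eqref{eq:Vq} gives $V_q^2/\delta^2=(1+(1-q)\omega)/q=(1+\omega)/q-\omega$, which is~\eqref{eq:ratio}. Note that $\cQ(1;\omega)=1$ for every $\omega$.

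\textbf{The case $\omega\ge1$.} I will show $\cQ(q;\omega)\ge1$ for all $q\in(0,1]$. Set $s(q)=\sqrt{(1+\omega)/q-\omega}$, so that $s\ge1$ on $(0,1]$. The condition $\cQ\ge1$ is equivalent to $s\ge 4/(1+q)-1=(3-q)/(1+q)$, and since both sides are positive on $(0,1]$, it suffices to square. Clearing the positive factor $q(1+q)^2$ reduces the claim to the inequality
\[
((1+\omega)-\omega q)(1+q)^2\ge q(3-q)^2 .
\]
The left side is nonincreasing in $\omega$ exactly when $(1-q)(1+q)^2\ge0$, and that holds on $(0,1]$. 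So the left side is nondecreasing in $\omega$, and it suffices to check $\omega=1$. There the inequality reads $(2-q)(1+q)^2\ge q(3-q)^2$. The difference is $P(q)=(2-q)(1+q)^2-q(3-q)^2$, a cubic that I will expand and factor: it should have a double root at $q=1$. Direct expansion gives
\[
(2+3q-q^3)-(9q-6q^2+q^3)=2-6q+6q^2-2q^3=2(1-q)^3\ge0 .
\]
This settles the case: $\cQ\ge1$ on $(0,1]$ when $\omega\ge1$, with equality only at $q=1$.

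\textbf{The case $\omega<1$.} I will show that $\cQ$ dips below $1$ just left of $q=1$, by checking that $\partial_q\cQ(1;\omega)>0$. Since $s(1)=1$ and $s'(1)=-(1+\omega)/2$, we get
\[
\partial_q\cQ(1;\omega)=\tfrac14\bigl[(1+s(1))+2s'(1)\bigr]=\tfrac14(1-\omega)>0 .
\]
Because $\cQ(1;\omega)=1$ and the derivative at $1$ is positive, $\cQ(q;\omega)<1$ for all $q$ slightly below $1$. Alternatively, the same polynomial reduction gives a sign-exact argument: the inequality becomes $(1-q)^2[\,2(1-q)+(\omega-1)(1+q)^2/(1-q)\cdot\ldots]$, so I would simply redo the expansion with general $\omega$, namely $P_\omega(q)=2(1-q)^3-(1-\omega)(1-q)(1+q)^2$. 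This is negative for $q$ near $1$ when $\omega<1$.

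\textbf{Main obstacle.} The only delicate step is the factorization of the cubic. Writing the general-$\omega$ difference as
\[
P_\omega(q)=(1-q)\bigl[2(1-q)^2-(1-\omega)(1+q)^2\bigr]
\]
handles both directions at once. When $\omega\ge1$ the bracket is nonnegative. When $\omega<1$ the bracket is negative for $q$ near $1$.
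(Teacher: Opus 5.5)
Your proposal is correct and follows essentially the paper's proof: the derivative $\partial_q\cQ(1;\omega)=(1-\omega)/4$ settles the case $\omega<1$, and monotonicity in $\omega$ reduces the case $\omega\ge1$ to the identity $(2-q)(1+q)^2-q(3-q)^2=2(1-q)^3$; your factorization $P_\omega(q)=(1-q)\bigl[2(1-q)^2-(1-\omega)(1+q)^2\bigr]$ is a tidy unified form of the same computation. Three cosmetic slips remain: the left side is non\emph{de}creasing (not nonincreasing) in $\omega$, the root at $q=1$ is triple rather than double, and the garbled first attempt in the ``Alternatively'' sentence should be deleted.
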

The comparison concerns leading sufficient budgets of the same analysis, with $c$ optimized separately at each $q$. As $\omega\to0$, the minimizer approaches $q\approx0.4320$ and the ratio approaches $0.9027$; at $q=1/2$ the limit is $0.9053$. The reduction of the leading bound is therefore at most about $9.7\%$.

\begin{figure}[t]
\centering
\includegraphics[width=0.6\columnwidth]{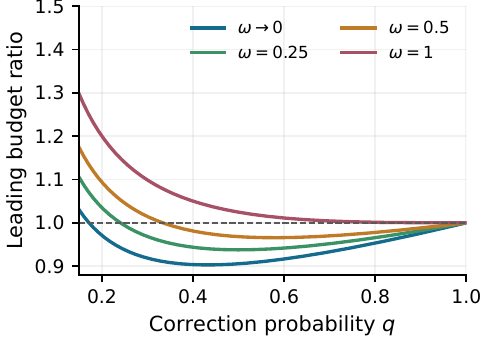}
\caption{Leading gradient-budget ratio~\eqref{eq:ratio}. Lower is better, and the dashed line is full correction. For $\omega<1$ the minimum lies below one; the $\omega=1$ curve is minimized at $q=1$.}
\label{fig:ratio}
\end{figure}

\paragraph{Interpretation.}
Skipping a query randomizes both the centered part of the correction, which already contributes tracking error, and its mean $\Cs t$, which produces the term $(1-q)L^2/q$ in $V_q^2$. A predictable surrogate $B_t$ can remove part of this mean contribution (Section~\ref{sec:predictors}). If one correction costs $c_C$ gradient-equivalent units, the factor $1+q$ becomes $1+c_Cq$ before the comparison is optimized.

\section{Learning Cheap Predictors}\label{sec:predictors}
Theorem~\ref{thm:lmo} charges randomization through $(q^{-1}-1)\cL_T$. This section asks how much of $\cL_T$ a cheap predictor can remove, gives an online predictor that removes part of it at an explicit learning cost, and determines when the learning cost is repaid.

\subsection{What a Predictor Can Remove}
For predictable $B_t$, orthogonality gives
\begin{equation}\label{eq:irreducible}
\Et\norm{C_t-B_t}^2
=\Et\norm{C_t-\mu_t}^2+\norm{\mu_t-B_t}^2.
\end{equation}
The first term is the conditional oracle variance, which no predictor changes; a predictor can only reduce the mean mismatch $\norm{\mu_t-B_t}^2$. Assume from now on that $\norm{\mu_t}\le M\norm{s_t}$, which both transport oracles satisfy with $M=L$, and put $V_C^2=v^2+M^2$. The zero predictor then has $\cL_T\le\eta^2R^2V_C^2$; it is the baseline of this section.

For both oracles the mean correction is a curvature action on the displacement: $\mu_t=\nabla^2f(x_{t-1})s_t$ for HVP transport at $x_{t-1}$, and $\mu_t=\int_0^1\nabla^2f(x_{t-1}+\vartheta s_t)s_t\,d\vartheta$ for gradient differences when $f$ is twice differentiable. This suggests linear predictors $B_t=A_ts_t$ with predictable $A_t$. With $\norm{A_t}_{\mathrm{op}}$ bounded, their residual scales with $\norm{s_t}$ and vanishes when the parameters do not move. A known quadratic component is the exact case: if $f=\widetilde f+(\lambda_0/2)\norm{x}^2$, then $A_t=\lambda_0 I$ leaves only the transport of $\widetilde f$ in the residual. Curvature cached from earlier iterations, as in lazy Hessian methods \citep{doikov2023lazy}, and scalar secant estimates are inexact cases. Lemma~\ref{lem:identities} also allows predictors that use cheap current-sample information, to which~\eqref{eq:irreducible} does not apply. The cheapest learnable choice is diagonal, $A_t=\operatorname{diag}(h_t)$, analyzed next.

\subsection{An Online Diagonal Predictor}
Let $\hmax>0$. For a predictable vector $h_t\in[-\hmax,\hmax]^d$, set
\begin{equation}\label{eq:diagonal-predictor}
B_t=h_t\odot s_t,\qquad h_1=0.
\end{equation}
When $s_t\ne0$, define $z_t=s_t/\norm{s_t}$ and $y_t=C_t/\norm{s_t}$; set both to zero when $s_t=0$, in which case Assumption~\ref{ass:transport} gives $C_t=0$ almost surely. The prediction loss
\begin{equation}\label{eq:online-loss}
\ell_t(h)=\norm{h\odot z_t-y_t}^2
\end{equation}
is convex in $h$. Only queried corrections reveal $y_t$. After forming $m_t$, update the predictor with step size $\eta_h>0$:
\begin{align}
\widehat a_t&=\frac{2J_t}{q}(h_t\odot z_t-y_t)\odot z_t,\notag\\
h_{t+1}&=\Pi_{[-\hmax,\hmax]^d}(h_t-\eta_h\widehat a_t).\label{eq:online-update}
\end{align}
On unqueried steps $\widehat a_t=0$, so no correction is evaluated. The predictor uses $O(d)$ memory and arithmetic and no oracle queries beyond the corrections already evaluated, and its coefficients may be negative.

\begin{theorem}[Expected fixed-comparator regret]\label{thm:regret}
Under Assumptions~\ref{ass:transport} and~\ref{ass:query}, the LMO update~\eqref{eq:lmo-update}, and $\norm{\mu_t}\le M\norm{s_t}$, let $n=T-1\ge1$. For every fixed $h\in[-\hmax,\hmax]^d$,
\begin{align}
\sum_{t=1}^n\E[\ell_t(h_t)-\ell_t(h)]
\le \frac{d\hmax^2}{2\eta_h}
+\frac{4\eta_h n(\hmax^2+V_C^2)}q.\label{eq:regret}
\end{align}
The choice $\eta_h=\hmax\sqrt{dq/[8n(\hmax^2+V_C^2)]}$ makes the right side equal to
\begin{equation}\label{eq:regret-value}
\Reg_n=\hmax\sqrt{8dn(\hmax^2+V_C^2)/q}.
\end{equation}
Consequently,
\begin{align}
\cL_T\le\frac{\eta^2R^2}{T}
\left[\inf_{h\in[-\hmax,\hmax]^d}\sum_{t=1}^n\E\ell_t(h)+\Reg_n\right].\label{eq:online-conversion}
\end{align}
\end{theorem}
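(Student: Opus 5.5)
This is the standard analysis of projected online gradient descent with importance-weighted, unbiased gradient estimates. It has three steps: unbiasedness of $\widehat a_t$, a second-moment bound on $\widehat a_t$, and a conversion from per-step loss to $\cL_T$.

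\emph{Step 1 (unbiasedness).} The quantities $h_t$ and $s_t$ are $\F_{t-1}$-measurable, and $z_t$ is a function of $s_t$, so both are predictable. By Assumption~\ref{ass:query}, $J_t$ is conditionally independent of $C_t$, hence of $y_t$. Therefore
\[
\Et\widehat a_t=\Et[2(h_t\odot z_t-y_t)\odot z_t]=\Et\nabla\ell_t(h_t),
\]
and $\ell_t$ is convex in $h$.

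\emph{Step 2 (one-step inequality).} Nonexpansiveness of the projection onto the box gives
\[
\norm{h_{t+1}-h}^2\le\norm{h_t-h}^2-2\eta_h\ip{\widehat a_t}{h_t-h}+\eta_h^2\norm{\widehat a_t}^2 .
\]
Take $\Et$. Since $h_t-h$ is predictable, the middle term becomes $\ip{\Et\nabla\ell_t(h_t)}{h_t-h}$, which is at least $\Et[\ell_t(h_t)-\ell_t(h)]$ by convexity. Summing over $t$, telescoping, and using $\norm{h_1-h}^2=\norm{h}^2\le d\hmax^2$ gives
\[
\sum_t\E[\ell_t(h_t)-\ell_t(h)]\le\frac{d\hmax^2}{2\eta_h}+\frac{\eta_h}{2}\sum_t\E\norm{\widehat a_t}^2 .
\]

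\emph{Step 3 (second moment).} Since $J_t^2=J_t$, we have $\Et\norm{\widehat a_t}^2=\frac{4}{q}\Et\norm{(h_t\odot z_t-y_t)\odot z_t}^2$. Because $\norm{z_t}\le1$, every coordinate satisfies $|z_{t,i}|\le1$, so
\[
\norm{(h_t\odot z_t-y_t)\odot z_t}^2\le\norm{h_t\odot z_t-y_t}^2\le2\norm{h_t\odot z_t}^2+2\norm{y_t}^2 .
\]
The first term is at most $2\hmax^2$. For the second, Assumption~\ref{ass:transport} together with $\norm{\mu_t}\le M\norm{s_t}$ gives $\Et\norm{C_t}^2\le\norm{\mu_t}^2+\Et\norm{C_t-\mu_t}^2\le(M^2+v^2)\norm{s_t}^2$, so $\Et\norm{y_t}^2\le V_C^2$. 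This covers the case $s_t=0$, where everything vanishes. Hence $\Et\norm{\widehat a_t}^2\le 8(\hmax^2+V_C^2)/q$, and the last term of Step 2 is $(\eta_h/2)\cdot n\cdot 8(\hmax^2+V_C^2)/q$, which is exactly~\eqref{eq:regret}.

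\emph{Step 4 (tuning).} Substituting the given $\eta_h$ balances the two terms. Each equals $\hmax\sqrt{d n(\hmax^2+V_C^2)/(2q)}$, and their sum is $\hmax\sqrt{2dn(\hmax^2+V_C^2)/q}$.

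The printed value $\Reg_n=\hmax\sqrt{8dn(\hmax^2+V_C^2)/q}$ in~\eqref{eq:regret-value} is twice this. I would recheck the constant at this point. The likely source is a looser step such as a factor $4$ from $\norm{a+b}^2\le2\norm a^2+2\norm b^2$ applied elsewhere, or differing conventions. In any case $\Reg_n$ is a valid upper bound, since $2\sqrt{1/2}\cdot\sqrt{\cdot}\le\sqrt{8\,\cdot}$, so the stated inequality holds whenever the right side is stated as $\Reg_n$.

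\emph{Step 5 (conversion to $\cL_T$).} By the definitions of $z_t$ and $y_t$, $\norm{C_t-B_t}^2=\norm{s_t}^2\ell_t(h_t)$, and the LMO step gives $\norm{s_t}\le\eta R$. Hence
\[
\cL_T\le\frac{\eta^2R^2}{T}\sum_t\E\ell_t(h_t).
\]
Applying the regret bound for each fixed $h$ and taking the infimum gives~\eqref{eq:online-conversion}.

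I expect the main obstacle to be this measurability bookkeeping rather than the regret computation. Step 1 needs $h_t$ to be $\F_{t-1}$-measurable while $\widehat a_t$ uses $J_t$ and $C_t$. It also needs $\ell_t(h)$ to be well defined when $C_t$ is not queried, which works because $C_t$ is treated as a potential oracle value. I would spell both points out carefully.
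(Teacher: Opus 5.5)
Steps 1--3 and 5 are correct and follow the paper's argument. The paper uses projected online gradient descent with the importance-weighted gradient $\widehat a_t$, which is conditionally unbiased for $\nabla\ell_t(h_t)$ because $h_t,z_t$ are predictable and $J_t$ is independent of $C_t$. It bounds the second moment by $\E\norm{\widehat a_t}^2\le 8q^{-1}(\hmax^2+V_C^2)$ using $|z_{t,i}|\le1$ and $\norm{a+b}^2\le2\norm a^2+2\norm b^2$. It then converts through $\norm{C_t-B_t}^2=\norm{s_t}^2\ell_t(h_t)\le\eta^2R^2\ell_t(h_t)$ and takes the infimum outside the expectations. Your handling of measurability and of the case $s_t=0$ is fine.

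The error is in Step 4, and it leads you to a false claim about the statement. Write the right side of \eqref{eq:regret} as $A/\eta_h+B\eta_h$ with $A=d\hmax^2/2$ and $B=4n(\hmax^2+V_C^2)/q$. The prescribed step is exactly the minimizer $\eta_h=\sqrt{A/B}=\hmax\sqrt{dq/[8n(\hmax^2+V_C^2)]}$. At this step each term equals $\sqrt{AB}=\hmax\sqrt{2dn(\hmax^2+V_C^2)/q}$, not $\hmax\sqrt{dn(\hmax^2+V_C^2)/(2q)}$. For instance, $d\hmax^2/(2\eta_h)=\tfrac{\hmax}{2}\sqrt{8dn(\hmax^2+V_C^2)/q}$. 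The sum is therefore $2\sqrt{AB}=\hmax\sqrt{8dn(\hmax^2+V_C^2)/q}=\Reg_n$ exactly. The constant in \eqref{eq:regret-value} is correct, and the theorem's ``equal'' claim holds. There is no loose factor to look for, and your fallback that $\Reg_n$ is only an upper bound is unnecessary. As written, your Step 4 asserts something false. Once you correct the arithmetic, the proof is complete and matches the paper's.
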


Define the average loss of the best fixed comparator and the regret coefficient by
\begin{align}
\kappa_T^2&=\frac1{T-1}\inf_{h\in[-\hmax,\hmax]^d}
\sum_{t=1}^{T-1}\E\ell_t(h),\label{eq:comparator-loss}\\
\Gamma_q&=\hmax\sqrt{8d(\hmax^2+V_C^2)/q}.\label{eq:Gamma}
\end{align}
The comparator is fixed across iterations, its loss is evaluated on the learner's trajectory, and the infimum is outside the expectations. Since $\Reg_{T-1}=\Gamma_q\sqrt{T-1}$,
\begin{align}
\cL_T
&\le\eta^2R^2\left[\frac{T-1}{T}\kappa_T^2
+\frac{\Gamma_q\sqrt{T-1}}T\right]\notag\\
&\le\eta^2R^2\left[\kappa_T^2+\frac{\Gamma_q}{\sqrt T}\right].\label{eq:explicit-LT}
\end{align}
The zero comparator gives $\kappa_T^2\le V_C^2$, so~\eqref{eq:explicit-LT} holds without an approximation assumption and is never worse than the zero-predictor bound by more than the regret term. Because $\kappa_T$ is evaluated along the learner's own trajectory, which depends on $\eta$, $\amin$, and $\eta_h$, certifying an improvement before the run requires a bound $\kappa_T\le\kappa<V_C$ known in advance; without structural information about the problem, the available choice is $\kappa=V_C$.

\paragraph{When learning pays off.}
The bound~\eqref{eq:explicit-LT} trades a smaller comparator loss against the regret term $\Gamma_q/\sqrt{T}$, which dominates for short horizons. Suppose $\kappa_T\le\kappa<V_C$ and write $n=T-1$. The first line of~\eqref{eq:explicit-LT} gives
\begin{equation}\label{eq:prediction-threshold}
\cL_T\le\eta^2R^2\frac nT
\left(\kappa^2+\frac{\Gamma_q}{\sqrt n}\right),
\end{equation}
while the zero predictor gives $\cL_T\le\eta^2R^2(n/T)V_C^2$. The learned bound is strictly smaller whenever
\begin{equation}\label{eq:break-even}
T-1>\frac{\Gamma_q^2}{(V_C^2-\kappa^2)^2}.
\end{equation}
Both runs use the same oracle counts at equal $T,q,\eta,\amin$, and for $q<1$ and $\amin<1$ substitution in~\eqref{eq:WT} gives a strictly smaller stationarity certificate. In expected queried corrections, the condition reads
\begin{equation}\label{eq:query-threshold}
q(T-1)>\frac{8d\hmax^2(\hmax^2+V_C^2)}{(V_C^2-\kappa^2)^2}.
\end{equation}
This sufficient threshold is linear in $d$ for a fixed predictor radius, oracle constants, and comparator gap; it bounds the learning cost from above and is not a necessary number of queries.

\subsection{Convergence and Oracle Complexity}
\begin{corollary}[Complete learned-predictor bound]\label{cor:learned-rate}
Under the assumptions of Theorems~\ref{thm:lmo} and~\ref{thm:regret}, with $\eta_h$ as in Theorem~\ref{thm:regret}, let $\eta=\ce T^{-2/3}$ and $\amin=\ca T^{-2/3}$ with $\ce>0$ and $0<\ca\le1$. Define
\begin{equation}\label{eq:Gq}
\mathsf G_q=2\ce R^2\sqrt{(q^{-1}-1)\Gamma_q/\ca}.
\end{equation}
For $T\ge2$,
\begin{align}
\E\sk(\nabla f(x_U))\le{}&\mathsf A_q(\kappa_T)T^{-1/3}
+\mathsf U T^{-1/2}\notag\\
&+\mathsf G_qT^{-7/12}
+\mathsf D T^{-2/3}+\mathsf I T^{-1},\label{eq:learned-rate}
\end{align}
with the coefficients of~\eqref{eq:rate-constants}.
\end{corollary}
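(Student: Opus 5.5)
We plug the learned-predictor bound on $\cL_T$ into Theorem~\ref{thm:lmo} and then repeat the bookkeeping of Corollary~\ref{cor:rate}, splitting the one new term off with $\sqrt{a+b}\le\sqrt a+\sqrt b$.

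\textbf{Step 1: bound $\cL_T$.} Theorem~\ref{thm:regret} with the stated $\eta_h$ gives, through the second line of~\eqref{eq:explicit-LT},
\[
\cL_T\le\eta^2R^2\bigl[\kappa_T^2+\Gamma_q T^{-1/2}\bigr].
\]
This holds for $T\ge2$, so that $n=T-1\ge1$.

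\textbf{Step 2: evaluate $W_T$.} Insert this bound into $W_T$ from~\eqref{eq:WT}. Use $\bmax\le1$ and $2-\amin\ge1$, so that $\bmax^2/(\amin(2-\amin))\le1/\amin=T^{2/3}/\ca$. With $\eta=\ce T^{-2/3}$ and $\amin=\ca T^{-2/3}$ we have
\[
\sigma^2\amin=\ca\sigma^2T^{-2/3},\qquad \sigma v\eta R=\sigma v\ce R\,T^{-2/3},\qquad \frac{\eta^2R^2}{\amin}=\frac{\ce^2R^2}{\ca}T^{-2/3}.
\]
Hence
\[
W_T\le T^{-2/3}Z_q(\kappa_T)+(q^{-1}-1)\frac{\ce^2R^2}{\ca}\Gamma_q T^{-2/3}T^{-1/2}.
\]
This is exactly the computation behind Corollary~\ref{cor:rate}, with $\bar\kappa$ replaced by $\kappa_T$ and with one extra term.

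\textbf{Step 3: take square roots.} By $\sqrt{a+b}\le\sqrt a+\sqrt b$,
\[
2R\sqrt{W_T}\le 2R\sqrt{Z_q(\kappa_T)}\,T^{-1/3}+2R\,\ce R\sqrt{(q^{-1}-1)\Gamma_q/\ca}\;T^{-1/3-1/4}.
\]
The second term is $\mathsf G_qT^{-7/12}$ by~\eqref{eq:Gq}.

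\textbf{Step 4: collect the remaining terms.} The terms of~\eqref{eq:lmo-bound} other than $2R\sqrt{W_T}$ become:
\begin{itemize}
\item $\Delta/(\eta T)=(\Delta/\ce)T^{-1/3}$, which combines with $2R\sqrt{Z_q(\kappa_T)}\,T^{-1/3}$ to give $\mathsf A_q(\kappa_T)T^{-1/3}$;
\item $L\eta R^2/2=(L\ce R^2/2)T^{-2/3}$;
\item $2\bmax b\eta^2R^3/\amin\le(2b\ce^2R^3/\ca)T^{-2/3}$, which together with the previous item gives $\mathsf DT^{-2/3}$;
\item $4\sqrt2R\sigma T^{-1/2}=\mathsf UT^{-1/2}$ and $4R\sigma T^{-1}=\mathsf IT^{-1}$, unchanged.
\end{itemize}

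\textbf{Main obstacle: admissibility of plugging in.} Theorem~\ref{thm:lmo} must hold when $B_t$ is the learned predictor, and $\kappa_T$ is defined along the learner's own trajectory. Theorem~\ref{thm:lmo} only needs $B_t$ to be predictable, or to use cheap current-sample information independent of $J_t$. The diagonal predictor $h_t$ is $\F_{t-1}$-measurable because its update uses only $J_{t-1}$ and the observed $C_{t-1}$. So Assumption~\ref{ass:query} holds and Lemma~\ref{lem:identities} applies. Since $\cL_T$ and $\kappa_T$ are expectations along that same trajectory, the substitution is consistent and needs no a priori comparator bound. The other nonroutine point is that the displayed step size $\eta_h$ depends on $n=T-1$, which is why the corollary is stated for $T\ge2$.
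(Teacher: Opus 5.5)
Your proposal is correct and follows the paper's proof in Appendix~\ref{app:regret} essentially step for step. You substitute the second line of~\eqref{eq:explicit-LT} into~\eqref{eq:WT} and use $\bmax\le1$ and $\amin(2-\amin)\ge\amin$ to get $W_T\le T^{-2/3}\bigl[Z_q(\kappa_T)+\Psi_qT^{-1/2}\bigr]$ with $\Psi_q=\ce^2R^2(q^{-1}-1)\Gamma_q/\ca$; you then split the square root so that $2R\sqrt{\Psi_q}=\mathsf G_q$ multiplies $T^{-7/12}$, and add the remaining terms of Theorem~\ref{thm:lmo}. Your added checks are also correct and are left implicit in the paper: $h_t$, hence $B_t=h_t\odot s_t$, is $\F_{t-1}$-measurable, so Lemma~\ref{lem:identities} applies, and $T\ge2$ is needed for $n=T-1\ge1$ in Theorem~\ref{thm:regret}.
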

The regret part of~\eqref{eq:explicit-LT} contributes $[\ce^2R^2(q^{-1}-1)\Gamma_q/\ca]T^{-7/6}$ to $W_T$, and its square root gives the term $\mathsf G_qT^{-7/12}$ (Appendix~\ref{app:regret}). At $q=1$, $\mathsf G_q=0$ and the predictor cancels from the estimator. Since $Z_q(\kappa_T)\ge\ca\sigma^2$, for $\sigma>0$ the inequality $\sqrt{x+y}-\sqrt{x}\le y/(2\sqrt x)$ gives the alternative regret term
\begin{equation}\label{eq:sharp-regret}
\frac{\ce^2R^3(q^{-1}-1)\Gamma_q}{\ca\sqrt{Z_q(\kappa_T)}}T^{-5/6},
\end{equation}
and the minimum of the two terms may be used.

Assume a bound $\kappa_T\le\kappa$ that holds for every horizon under consideration; $\kappa=V_C$ always qualifies. Write $\mathsf A=\mathsf A_q(\kappa)$ and
\begin{equation}\label{eq:learned-remainder}
\mathcal E(t)=\mathsf U t^{-1/2}+\mathsf G_qt^{-7/12}
+\mathsf D t^{-2/3}+\mathsf I t^{-1}.
\end{equation}
For a target $\varepsilon>0$, define
\begin{align}
T_0&=\max\{2,(\mathsf A/\varepsilon)^3\},\notag\\
T_\varepsilon&=\left\lceil T_0
\left(1+\frac{\mathcal E(T_0)}{\varepsilon}\right)^3\right\rceil.\label{eq:learned-horizon}
\end{align}
Running the algorithm for the horizon $T_\varepsilon$, with the corresponding $\eta$, $\amin$, and $\eta_h$, guarantees $\E\sk(\nabla f(x_U))\le\varepsilon$. For gradient-difference transport, the expected gradient count is
\begin{equation}\label{eq:learned-budget}
\E N_{\rm grad}=(1+q)T_\varepsilon-q.
\end{equation}
HVP transport uses $T_\varepsilon$ fresh gradients and $q(T_\varepsilon-1)$ HVPs in expectation, for expected oracle work $(1+c_Hq)T_\varepsilon-c_Hq$. The predictor adds $O(dT_\varepsilon)$ arithmetic and $O(d)$ state.

Sufficiency of~\eqref{eq:learned-horizon} follows because every power in $\mathcal E(t)$ decays at least as fast as $t^{-1/3}$, so for $t\ge T_0$ the bound is at most $(T_0/t)^{1/3}[\varepsilon+\mathcal E(T_0)]$. The same construction applies with the alternative term~\eqref{eq:sharp-regret} evaluated at $\kappa$. For $\mathsf A>0$ and small enough $\varepsilon$,~\eqref{eq:learned-horizon} equals
\begin{equation}\label{eq:learned-horizon-expanded}
T_\varepsilon=\left\lceil
\frac{\mathsf A^3}{\varepsilon^3}(1+\upsilon_\varepsilon)^3
\right\rceil,
\end{equation}
where
\begin{align*}
\upsilon_\varepsilon={}&\frac{\mathsf U}{\mathsf A^{3/2}}\varepsilon^{1/2}
+\frac{\mathsf G_q}{\mathsf A^{7/4}}\varepsilon^{3/4}
+\frac{\mathsf D}{\mathsf A^2}\varepsilon
+\frac{\mathsf I}{\mathsf A^3}\varepsilon^2.
\end{align*}
The leading expected gradient budget is $(1+q)\mathsf A_q(\kappa)^3\varepsilon^{-3}$. At the same $q,\ce,\ca$, its ratio to the zero-predictor certificate is
\begin{equation}\label{eq:predictor-budget-ratio}
\left[\frac{\mathsf A_q(\kappa)}{\mathsf A_q(V_C)}\right]^3,
\end{equation}
which is one at $q=1$ and strictly below one for $q<1$ and $\kappa<V_C$. For $\Delta>0$, $q<1$, and $V_C>0$, $\mathsf A_q(V_C)$ attains its minimum over $(\ce,\ca)\in(0,\infty)\times(0,1]$, so the strict inequality also holds between the two coefficients minimized separately over $(\ce,\ca)$. Comparisons across query probabilities include the factors $1+q$.
\label{theory-end}

\section{Experiments}\label{sec:experiments}
We compare randomized transport with full correction on MNIST1D \citep{greydanus2024mnist1d}, using the frozen dataset of $4000$ training and $1000$ test sequences of length $40$ and a GELU network with two hidden layers of width $100$ ($d=15210$). We run Algorithm~1 with the Euclidean LMO, the zero predictor, and schedule~\eqref{eq:alpha-schedule}, for gradient-difference (STORM) and current-point HVP transport. Budgets count gradient-equivalents: a queried gradient difference costs one gradient, and a queried HVP costs $c_H=1.4$ gradients, measured jointly with the fresh gradient. For each method and $q\in\{1,1/2,1/4\}$, $(\eta,\amin)$ is tuned on two seeds over $\eta\in\{0.02,0.05,0.1,0.2\}$ and $\amin\in\{0.003,0.01,0.03,0.1,0.3\}$ and evaluated on ten further seeds shared across $q$, so that ratios to $q=1$ are paired. The floor is capped at $0.3$ because at $\amin=1$ recurrence~\eqref{eq:momentum} discards the correction. Appendix~\ref{app:mnist1d} gives the full protocol.

\textbf{Noise regime.}
To relate the runs to the mechanism behind Proposition~\ref{prop:ratio}, we estimate $\hat\omega_t=\norm{\Cs t}^2/\Et\norm{C_t-\Cs t}^2$ for the gradient-difference correction along a trajectory, the analogue of $L^2/\delta^2$ for the step actually taken. It ranges from $0.02$ to $0.33$ at batch size $4$ and from $0.17$ to $2.6$ at batch size $32$.

\textbf{Results.}
Every $q<1$ reaches a lower final loss than $q=1$ at the same oracle cost, except STORM with $q=1/2$ at batch size $32$, where the difference is not significant (Table~\ref{tab:mnist1d} in Appendix~\ref{app:mnist1d}). At $q=1/4$, the final loss relative to $q=1$ is $0.70$ for STORM and $0.66$ for HVP at batch size $4$, and $0.82$ and $0.57$ at batch size $32$; test accuracy changes by $+1.3$ and $+2.3$ points at batch size $4$, and by $-1.8$ and $+1.3$ points at batch size $32$ (Table~\ref{tab:mnist1d-acc}). For STORM the reduction is larger at batch size $4$, where $\hat\omega$ is smaller, as Proposition~\ref{prop:ratio} suggests; for HVPs, whose queries cost more, it is large at both. At the iteration count of $q=1$ the randomized runs have equal or higher loss, which is consistent with the extra iterations being the source of the gain. The diagonal predictor shows no statistically resolved change in test accuracy and removes at most $4.6\%$ of the residual, in line with~\eqref{eq:irreducible}: the reducible mean part is a fraction $\hat\omega_t/(1+\hat\omega_t)$ of it.

\textbf{Controlled curvature noise.}
To examine the mechanism behind Proposition~\ref{prop:ratio} and the predictor of Section~\ref{sec:predictors}, we use the quadratic $f(x)=\frac12(x-x^*)^\top A(x-x^*)$ in $d=100$, with $A$ diagonal and eigenvalues log-spaced in $[0.01,1]$, so that $L=1$, and the vector oracle $g(x;\xi)=(A+E_\xi)(x-x^*)+\zeta_\xi$. Here $E_\xi=\delta\sqrt d\,uv^\top$ with independent $u,v\sim\mathcal N(0,I/d)$, so that $\E\norm{E_\xi s}^2=\delta^2\norm{s}^2$, and $\E\norm{\zeta_\xi}^2=1$. With a fixed horizon and expected work of about $4000$ gradients (Appendix~\ref{app:synth}), the metric is the run average of $\norm{\nabla f(x_t)}$, bounded in Theorem~\ref{thm:lmo}. The benefit of $q<1$ changes sign as $\delta$ decreases (Figure~\ref{fig:synth}, left): at $\delta=3$, $q=1/4$ lowers the metric by $17\%$, and at $\delta=0.03$ it raises it by $37\%$. The sign changes where $\hat\omega$, measured along the $q=1$ runs, lies between $1.15$ and $1.7$. This is qualitatively consistent with the mechanism behind Proposition~\ref{prop:ratio}, which concerns unnormalized STORM rather than these normalized runs. The diagonal predictor removes most of the mean part of the residual (Figure~\ref{fig:synth}, right). Its residual ratio stays close to the exact-predictor benchmark $1/(1+\hat\omega)$, from $0.99$ down to $0.005$, and at large $\hat\omega$ it restores the performance of $q=1$ (ratio $0.98$ at $\delta=0.03$, $q=1/2$). With unnormalized STORM, a variance-bounded oracle, and the tuning of Theorem~\ref{thm:unnormalized}, the change has the sign of $\cQ(q;\omega)-1$ in $9$ of $10$ settings and, for $\omega<1$, lies within about $2\%$ of the fixed-budget benchmark $\cQ^{2/3}$ (Appendix~\ref{app:prop5}; symmetric noise in Appendix~\ref{app:symmetric}).

\textbf{CIFAR-10.}
We train a ResNet-18 \citep{he2016resnet} with GELU activations \citep{hendrycks2016gelu} and group normalization \citep{wu2018groupnorm} on CIFAR-10 \citep{krizhevsky2009cifar}, with batch size $128$, the same method, and an equal budget of $35100$ gradient-equivalents; a queried HVP costs $c_H=2.32$ gradients on an A100 GPU. Settings are tuned per method and $q$, and test accuracy is evaluated on five seeds (Appendix~\ref{app:cifar}). Relative to $q=1$, test accuracy changes by $+0.96$ points ($95\%$ interval $[0.64,1.28]$) at $q=1/2$ and $+1.52$ points ($[0.96,2.07]$) at $q=1/4$ for HVP transport, whose queries cost the most. For STORM the change is $+0.72$ points ($[-0.01,1.44]$) at $q=1/2$ and $-5.69$ points ($[-7.12,-4.26]$) at $q=1/4$, where the selected step size is nine times smaller than at $q=1$. The diagonal predictor shows no statistically resolved change in test accuracy and removes less than $1\%$ of the importance-weighted residual (Figure~\ref{fig:cifar}).

\begin{figure}[t]
\centering
\includegraphics[width=0.75\columnwidth]{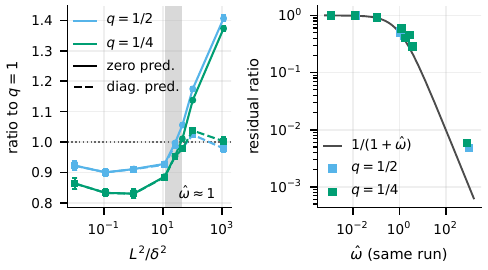}
\caption{Controlled quadratic. Left: run average of $\norm{\nabla f(x_t)}$ relative to $q=1$, zero (solid) and diagonal (dashed) predictor, with paired $95\%$ intervals over $10$ seeds; the band marks $\hat\omega\in[0.9,1.7]$ on the $q=1$ runs. Right: residual ratio of the diagonal predictor (Appendix~\ref{app:mnist1d}) and the conditional benchmark $1/(1+\hat\omega)$ of an exact predictor.}
\label{fig:synth}
\vspace{4pt}
\centering
\includegraphics[width=0.75\columnwidth]{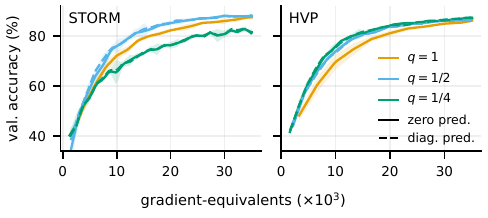}
\caption{CIFAR-10 with ResNet-18: validation accuracy, zero (solid; mean and standard deviation over $5$ seeds) and diagonal (dashed) predictor.}
\label{fig:cifar}
\end{figure}

\section{Conclusion}
Momentum methods need not query their transport correction at every iteration. Querying it with probability $q$ and otherwise using a cheap prediction preserves its conditional mean, costs $(q^{-1}-1)$ times the prediction loss in variance, and keeps the $O(T^{-1/3})$ rate for fixed $q$. The saved queries buy iterations, which helps most with expensive or noisy corrections.
\label{main-end}

\clearpage

\subsection*{AI Use Statement}
We used a large language model as an assistant for restructuring and editing the manuscript, checking
and tightening proofs, drafting parts of the experiment code, and suggesting related work. The authors
verified all such output. The research ideas, the method and the main results are the authors' own,
and the authors take full responsibility for the content of this work.

\bibliography{references}
\clearpage
\appendix
\onecolumn
\section*{Supplementary Material}

\section{Transport Identities and Special Cases}\label{app:transport}
All conditional expectations use the history of Section~\ref{sec:framework}; in particular $s_t$ and $m_{t-1}$ are $\F_{t-1}$-measurable. Let $\chi_t=J_t/q-1$. By Assumption~\ref{ass:query},
\[
\E[\chi_t\mid\F_{t-1},C_t,B_t,g_t]=0,\qquad
\E[\chi_t^2\mid\F_{t-1},C_t,B_t,g_t]=q^{-1}-1.
\]
Thus $\widehat C_t=C_t+\chi_t(C_t-B_t)$ has conditional mean $\mu_t$. For any $\F_{t-1}$-measurable center $z_t$,
\begin{align*}
\Et\norm{\widehat C_t-z_t}^2
&=\Et\norm{C_t-z_t}^2
+2\Et[\chi_t\ip{C_t-z_t}{C_t-B_t}]+\Et[\chi_t^2\norm{C_t-B_t}^2]\\
&=\Et\norm{C_t-z_t}^2+(q^{-1}-1)\Et\norm{C_t-B_t}^2.
\end{align*}
The same conditioning gives
\[
\Et\ip{\zeta_t}{\widehat C_t-\mu_t}
=\Et\ip{\zeta_t}{C_t-\mu_t}.
\]
These calculations prove Lemma~\ref{lem:identities}, including predictors that use cheap current-sample information. For predictable $B_t$, expanding $C_t-B_t=(C_t-\mu_t)+(\mu_t-B_t)$ proves~\eqref{eq:irreducible}.

\subsection{Gradient-Difference Transport}
Unbiasedness at both points gives $\mu_t=\Cs t$, and~\eqref{eq:similarity} gives the centered variance bound with $v=\delta$. $L$-smoothness gives $\norm{\Cs t}\le L\norm{s_t}$, hence
\[
\Et\norm{C_t}^2=\Et\norm{C_t-\Cs t}^2+\norm{\Cs t}^2
\le(\delta^2+L^2)\norm{s_t}^2.
\]
For a zero predictor,
\begin{align}
\Et\norm{J_tC_t/q-\Cs t}^2
&=\frac1q\Et\norm{C_t-\Cs t}^2
+\frac{1-q}{q}\norm{\Cs t}^2
\le V_q^2\norm{s_t}^2.\label{eq:app-storm-var}
\end{align}
By Cauchy--Schwarz and Lemma~\ref{lem:identities},
\begin{equation}\label{eq:app-storm-cov}
\left|\Et\ip{\zeta_t}{J_tC_t/q-\Cs t}\right|
=\left|\Et\ip{\zeta_t}{C_t-\Cs t}\right|
\le\sigma\delta\norm{s_t}.
\end{equation}

\subsection{Second-Order Transport at Either Endpoint}
For $s=x-y$, the fundamental theorem of calculus gives
$\nabla f(x)-\nabla f(y)=\int_0^1\nabla^2f(y+\vartheta s)s\,d\vartheta$.
For an oracle at $y$, the mean correction is $\nabla^2f(y)s$ and
\[
\norm{\nabla^2f(y)s-[\nabla f(x)-\nabla f(y)]}
\le\int_0^1\rho\vartheta\norm{s}^2\,d\vartheta=\frac\rho2\norm{s}^2.
\]
For an oracle at $x$, replace $\vartheta$ by $1-\vartheta$. Both endpoints therefore satisfy~\eqref{eq:bias-assumption} with $b=\rho/2$, and the variance bound at the chosen endpoint gives $v=\tau$. Since a twice differentiable $L$-smooth function has $\norm{\nabla^2f(x)}_{\mathrm{op}}\le L$, both endpoints satisfy $\norm{\mu_t}\le L\norm{s_t}$. For a zero predictor, the centered randomized variance is at most $[\tau^2+(1-q)L^2]q^{-1}\norm{s_t}^2$.

\section{Tracking and LMO Convergence}\label{app:lmo}
We first prove a tracking bound for a general deterministic sequence $\alpha_t\in(0,1]$ with $\alpha_0=1$. Set $\beta_t=1-\alpha_t$ and, for $0\le i\le t$,
\begin{equation}\label{eq:weights}
p_{i,t}=\prod_{j=i+1}^t\beta_j,
\qquad p_{t,t}=1.
\end{equation}

\subsection{General Deterministic Momentum Schedule}
\begin{lemma}[Weighted tracking bound]\label{lem:tracking}
Suppose
\[
e_t=\beta_t e_{t-1}+\beta_t r_t+w_t,
\qquad \Et w_t=0,
\qquad \E\norm{e_0}^2\le\sigma^2,
\]
where $r_t$ is $\F_{t-1}$-measurable and $w_t$ is $\F_t$-measurable. Let $\bar r_i=\E\norm{r_i}$, and let $\bar w_i\ge\E\norm{w_i}^2$ be deterministic. Then
\begin{equation}\label{eq:general-tracking}
\frac1T\sum_{t=0}^{T-1}\E\norm{e_t}
\le\frac1T\sum_{t=0}^{T-1}\left[
\sigma p_{0,t}+\sum_{i=1}^t p_{i,t}\beta_i\bar r_i
+\sqrt{\sum_{i=1}^t p_{i,t}^2\bar w_i}\right].
\end{equation}
If $e_t=m_t-\nabla f(x_t)$, Assumption~\ref{ass:objective} and the LMO update~\eqref{eq:lmo-update} give a stationarity bound obtained by multiplying the right side by $2R$ and adding $\Delta/(\eta T)+L\eta R^2/2$.
\end{lemma}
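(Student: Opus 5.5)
Unrolling the recursion gives the explicit representation
\[
e_t=p_{0,t}e_0+\sum_{i=1}^t p_{i,t}\beta_i r_i+\sum_{i=1}^t p_{i,t}w_i ,
\]
which follows by induction on $t$: multiply the representation of $e_{t-1}$ by $\beta_t$, use $p_{i,t}=\beta_t p_{i,t-1}$ for $i\le t-1$, and add $\beta_t r_t+w_t$ with $p_{t,t}=1$. The triangle inequality splits $\E\norm{e_t}$ into three terms. For the first, Jensen gives $\E\norm{e_0}\le\sqrt{\E\norm{e_0}^2}\le\sigma$, contributing $\sigma p_{0,t}$. The drift sum is bounded termwise by $\sum_i p_{i,t}\beta_i\E\norm{r_i}=\sum_i p_{i,t}\beta_i\bar r_i$. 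The weights are deterministic, so they pass through the expectation.

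The martingale sum needs care, and this is the only step with substance. I bound $\E\norm{\sum_i p_{i,t}w_i}$ by the square root of its second moment. Expanding the square, each cross term with $i<j$ vanishes: $w_i$ is $\F_i$-measurable, hence $\F_{j-1}$-measurable, and $\E_j w_j=0$. So $\E\ip{w_i}{w_j}=\E\ip{w_i}{\E_j w_j}=0$. This uses the assumed finiteness of second moments, so that the products are integrable. What remains is $\sum_i p_{i,t}^2\E\norm{w_i}^2\le\sum_i p_{i,t}^2\bar w_i$. Taking the square root and averaging over $t=0,\dots,T-1$ gives~\eqref{eq:general-tracking}; at $t=0$ the empty sums vanish.

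For the stationarity consequence, use $L$-smoothness with $x_{t+1}=x_t-\eta u_t$ and $\norm{u_t}\le R$:
\[
f(x_{t+1})\le f(x_t)-\eta\ip{\nabla f(x_t)}{u_t}+\tfrac{L\eta^2R^2}{2}.
\]
Next comes the alignment inequality~\eqref{eq:lmo-alignment}. Since $u_t$ maximizes $\ip{m_t}{\cdot}$ over $\K$, we have $\ip{m_t}{u_t}=\sk(m_t)\ge\sk(\nabla f(x_t))-\sk(e_t)$. Writing $\ip{\nabla f(x_t)}{u_t}=\ip{m_t}{u_t}-\ip{e_t}{u_t}$, and bounding both $\sk(e_t)$ and $|\ip{e_t}{u_t}|$ by $R\norm{e_t}$, yields $\ip{\nabla f(x_t)}{u_t}\ge\sk(\nabla f(x_t))-2R\norm{e_t}$. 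The case $m_t=0$, $u_t=0$ is consistent with this, since then $\sk(\nabla f(x_t))=\sk(-e_t)\le R\norm{e_t}$.

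Substituting, summing over $t=0,\dots,T-1$, telescoping with $f(x_T)\ge f_*$, and dividing by $\eta T$ gives
\[
\frac1T\sum_t\E\sk(\nabla f(x_t))\le\frac{\Delta}{\eta T}+\frac{L\eta R^2}{2}+\frac{2R}{T}\sum_t\E\norm{e_t}.
\]
Because $U$ is uniform and independent of the run, the left side equals $\E\sk(\nabla f(x_U))$. This is the claimed form.
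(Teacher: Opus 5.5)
Your proposal is correct and follows the paper's proof essentially step for step. It uses the same unrolling into initial, drift, and martingale parts, the same vanishing of martingale cross terms via the tower property, and the same smoothness-plus-alignment descent inequality summed and telescoped. The only cosmetic difference is that you obtain the alignment inequality from subadditivity of $\sk$ together with $\sk(e_t)\le R\norm{e_t}$, whereas the paper compares $\ip{m_t}{u_t}$ with $\ip{m_t}{u_t^*}$ for a maximizer $u_t^*$ of $\ip{\nabla f(x_t)}{\cdot}$ over $\K$, which amounts to the same fact.
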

\begin{proof}
Unrolling the recursion gives
\[
e_t=p_{0,t}e_0+\mathcal D_t+\mathcal M_t,
\qquad \mathcal D_t=\sum_{i=1}^t p_{i,t}\beta_i r_i,
\qquad \mathcal M_t=\sum_{i=1}^t p_{i,t}w_i.
\]
All weights are deterministic. For $i<j$, $w_i$ is $\F_{j-1}$-measurable, so the tower property gives $\E\ip{w_i}{w_j}=0$ and
\[
\E\norm{\mathcal M_t}^2=\sum_{i=1}^t p_{i,t}^2\E\norm{w_i}^2
\le\sum_{i=1}^t p_{i,t}^2\bar w_i.
\]
The triangle inequality and Cauchy--Schwarz give~\eqref{eq:general-tracking}; no independence between $e_0$, $\mathcal D_t$, and $\mathcal M_t$ is needed.

For the exact LMO direction $u_t$ and a maximizer $u_t^*$ of $\ip{\nabla f(x_t)}{u}$ over $\K$,
\[
\ip{\nabla f(x_t)}{u_t}
=\ip{m_t}{u_t}-\ip{e_t}{u_t}
\ge\ip{m_t}{u_t^*}-\ip{e_t}{u_t}
\ge\sk(\nabla f(x_t))-2R\norm{e_t}.
\]
Smoothness and $\norm{u_t}\le R$ yield
\[
f(x_{t+1})\le f(x_t)-\eta\sk(\nabla f(x_t))
+2\eta R\norm{e_t}+L\eta^2R^2/2.
\]
Summing over $t=0,\ldots,T-1$, taking expectations, and using $f(x_T)\ge f_*$ proves the last assertion.
\end{proof}

For the transport recurrence, $r_i=\mu_i-\Cs i$ and
$w_i=\beta_i(\widehat C_i-\mu_i)+\alpha_i\zeta_i$. Let $\ell_i=\E\norm{C_i-B_i}^2$. Lemma~\ref{lem:identities} and Assumptions~\ref{ass:objective}--\ref{ass:transport} give the valid choices
\begin{align}
\bar r_i&\le b\eta^2R^2,\notag\\
\bar w_i&=\alpha_i^2\sigma^2+2\alpha_i\beta_i\sigma v\eta R
+\beta_i^2\left[v^2\eta^2R^2+(q^{-1}-1)\ell_i\right].\label{eq:noise-bound}
\end{align}
Together with Lemma~\ref{lem:tracking}, these give the explicit bound for any deterministic schedule. Determinism of the schedule ensures that the weights preserve martingale orthogonality.

\subsection{Bounds for the Decreasing Schedule}
For~\eqref{eq:alpha-schedule}, let $\bmax=1-\amin$. For every $t\ge0$,
\begin{align}
p_{0,t}&\le\frac{2}{(t+1)(t+2)},
&\sum_{t=0}^{T-1}p_{0,t}&\le2,\label{eq:initial-weights}\\
\sum_{i=1}^t p_{i,t}^2\alpha_i^2&\le\alpha_t,
&2\sum_{i=1}^t p_{i,t}^2\alpha_i\beta_i&\le1,\label{eq:noise-weights}
\end{align}
and, for $1\le i\le T-1$,
\begin{equation}\label{eq:tail-weights}
\sum_{t=i}^{T-1}p_{i,t}\beta_i\le\frac{\bmax}{\amin},
\qquad
\sum_{t=i}^{T-1}p_{i,t}^2\beta_i^2\le\frac{\bmax^2}{\amin(2-\amin)}.
\end{equation}
Also,
\begin{equation}\label{eq:pointwise-correction-weights}
\sum_{i=1}^t p_{i,t}^2\beta_i^2
\le\frac{\bmax^2}{\amin(2-\amin)}.
\end{equation}

For~\eqref{eq:initial-weights}, use $\beta_j\le j/(j+2)$ and telescope the product and then the sum. For the first inequality in~\eqref{eq:noise-weights}, put $\Upsilon_0=0$ and $\Upsilon_t=\beta_t^2\Upsilon_{t-1}+\alpha_t^2$, so that $\Upsilon_t=\sum_{i=1}^tp_{i,t}^2\alpha_i^2$. Since $1/\alpha_t=\min\{1/\amin,(t+2)/2\}$,
\[
0\le\frac1{\alpha_t}-\frac1{\alpha_{t-1}}\le\frac12,
\quad\text{so}\quad
\alpha_{t-1}\le\frac{\alpha_t}{1-\alpha_t/2}.
\]
Induction and $\beta_t\le1-\alpha_t/2$ give
\[
\Upsilon_t\le\frac{\beta_t^2\alpha_t}{1-\alpha_t/2}+\alpha_t^2
\le\alpha_t\beta_t+\alpha_t^2=\alpha_t.
\]
For the second inequality, $2\alpha_i\beta_i\le1-\beta_i^2$ and $p_{i-1,t}=\beta_ip_{i,t}$ give
\[
\sum_{i=1}^t p_{i,t}^2(1-\beta_i^2)
=\sum_{i=1}^t(p_{i,t}^2-p_{i-1,t}^2)=1-p_{0,t}^2\le1.
\]
Finally, $p_{i,t}\le\bmax^{t-i}$ and $\beta_i\le\bmax$ prove~\eqref{eq:tail-weights} and~\eqref{eq:pointwise-correction-weights} through geometric series. These arguments include $\amin=1$, where every $\beta_i=0$.

\subsection{Proof of Theorem~\ref{thm:lmo}}
By~\eqref{eq:initial-weights}, the initial error contributes at most $2\sigma/T$ to the average tracking norm. By~\eqref{eq:tail-weights}, after exchanging the finite sums, the average drift norm is at most $\bmax b\eta^2R^2/\amin$.

For the martingale sum, define
\[
\mathcal V_t=\sigma^2\amin+\sigma v\eta R
+v^2\eta^2R^2\sum_{i=1}^t p_{i,t}^2\beta_i^2
+(q^{-1}-1)\sum_{i=1}^t p_{i,t}^2\beta_i^2\ell_i.
\]
Equations~\eqref{eq:noise-bound}--\eqref{eq:noise-weights} and
$\alpha_t\le\amin+2/(t+2)$ imply, for $t\ge1$,
\[
\E\norm{\mathcal M_t}^2\le\mathcal V_t+\frac{2\sigma^2}{t+2}.
\]
Using~\eqref{eq:tail-weights} and~\eqref{eq:pointwise-correction-weights}, exchanging the finite sums, and using $T^{-1}\sum_i\ell_i=\cL_T$,
\[
\frac1T\sum_{t=1}^{T-1}\mathcal V_t\le W_T.
\]
Cauchy--Schwarz, $\mathcal M_0=0$, and $\sqrt{x+y}\le\sqrt x+\sqrt y$ give
\begin{align*}
\frac1T\sum_{t=0}^{T-1}\E\norm{\mathcal M_t}
&\le\sqrt{\frac1T\sum_{t=1}^{T-1}\mathcal V_t}
+\frac{\sqrt2\sigma}{T}\sum_{t=1}^{T-1}\frac1{\sqrt{t+2}}
\le\sqrt{W_T}+\frac{2\sqrt2\sigma}{\sqrt T},
\end{align*}
where $\sum_{t=1}^{T-1}(t+2)^{-1/2}\le2\sqrt T$ by comparison with $\int x^{-1/2}dx$. Combining the three contributions,
\begin{equation}\label{eq:average-error}
\frac1T\sum_{t=0}^{T-1}\E\norm{e_t}
\le\frac{2\sigma}{T}+\frac{\bmax b\eta^2R^2}{\amin}
+\sqrt{W_T}+\frac{2\sqrt2\sigma}{\sqrt T}.
\end{equation}
The descent statement of Lemma~\ref{lem:tracking} gives~\eqref{eq:lmo-bound}. Only $m_0,\ldots,m_{T-1}$ are used, and for $T=1$ all correction sums are empty.

\subsection{Inexact Directions}\label{app:inexact}
\paragraph{Objective gap.}
Suppose $u_t\in\K$ and $\ip{m_t}{u_t}\ge\sk(m_t)-\epsilon_t$. With $u_t^*$ as in the proof of Lemma~\ref{lem:tracking},
\[
\ip{\nabla f(x_t)}{u_t}
=\ip{m_t}{u_t}-\ip{e_t}{u_t}
\ge\ip{m_t}{u_t^*}-\epsilon_t-\ip{e_t}{u_t}
\ge\sk(\nabla f(x_t))-2R\norm{e_t}-\epsilon_t.
\]
Since $\norm{u_t}\le R$, all other steps are unchanged, and the descent inequality gains $\eta\epsilon_t$. Dividing by $\eta T$ adds $T^{-1}\sum_t\E\epsilon_t$ to~\eqref{eq:lmo-bound}.

\paragraph{Additive error.}
Suppose $\norm{u_t-u_t^\circ}\le\epsilon_{\rm L}<\rK$, where $u_t^\circ$ is an exact LMO direction for $m_t$. The exact alignment inequality applies to $u_t^\circ$, and $\sk(z)\ge\rK\norm{z}$ gives
\[
\ip{\nabla f(x_t)}{u_t}
\ge\ip{\nabla f(x_t)}{u_t^\circ}-\epsilon_{\rm L}\norm{\nabla f(x_t)}
\ge\left(1-\frac{\epsilon_{\rm L}}{\rK}\right)\sk(\nabla f(x_t))-2R\norm{e_t}.
\]
Moreover $\norm{u_t}\le R+\epsilon_{\rm L}$, so $\norm{s_t}\le\eta(R+\epsilon_{\rm L})$, and smoothness gives
\[
f(x_{t+1})\le f(x_t)-\eta\left(1-\frac{\epsilon_{\rm L}}{\rK}\right)\sk(\nabla f(x_t))
+2\eta R\norm{e_t}+\frac{L\eta^2(R+\epsilon_{\rm L})^2}{2}.
\]
The tracking analysis uses the direction set only through $\norm{s_t}\le\eta R$, which now holds with $R+\epsilon_{\rm L}$. Every term on the right side of~\eqref{eq:lmo-bound} is nondecreasing in $R$, so replacing $R$ by $R+\epsilon_{\rm L}$ throughout bounds $(1-\epsilon_{\rm L}/\rK)\E\sk(\nabla f(x_U))$.

\subsection{Explicit Rate}
Under Corollary~\ref{cor:rate}, $\amin(2-\amin)\ge\amin$ and $\bmax\le1$ give
\begin{align*}
\E\sk(\nabla f(x_U))\le{}&
\left[\frac\Delta\ce+2R\sqrt{\ca\sigma^2+\sigma v\ce R+
\frac{\ce^2R^2}{\ca}\bigl(v^2+(q^{-1}-1)\bar\kappa^2\bigr)}\right]T^{-1/3}\\
&+4\sqrt2R\sigma T^{-1/2}
+\left[\frac{L\ce R^2}{2}+\frac{2b\ce^2R^3}{\ca}\right]T^{-2/3}
+4R\sigma T^{-1},
\end{align*}
which is~\eqref{eq:rate}.

\section{Proof of the Unnormalized STORM Bound}\label{app:unnormalized}
This appendix uses $x_{t+1}=x_t-\eta m_t$, $B_t=0$, and schedule~\eqref{eq:alpha-schedule} with $\amin=c\eta^2$. Finite second moments of $x_t$ and $m_t$ follow by induction from the oracle bounds, smoothness, and $\norm{s_t}=\eta\norm{m_{t-1}}$. Write $\nabla_t=\nabla f(x_t)$ and $e_t=m_t-\nabla_t$. Since gradient-difference transport is unbiased,~\eqref{eq:app-storm-var}--\eqref{eq:app-storm-cov} give, for one transition,
\[
\E[\norm{e_{t+1}}^2\mid\F_t]
\le \beta_{t+1}^2\norm{e_t}^2+\alpha_{t+1}^2\sigma^2
+2\alpha_{t+1}\beta_{t+1}\sigma\delta\eta\norm{m_t}
+\beta_{t+1}^2V_q^2\eta^2\norm{m_t}^2.
\]
Smoothness and $2\ip{\nabla_t}{m_t}=\norm{\nabla_t}^2+\norm{m_t}^2-\norm{e_t}^2$ give
\begin{equation}\label{eq:euclidean-descent}
f(x_{t+1})\le f(x_t)-\frac\eta2\norm{\nabla_t}^2
-\frac{\eta(1-L\eta)}2\norm{m_t}^2+\frac\eta2\norm{e_t}^2.
\end{equation}

Define $\Phi_t=\E[f(x_t)-f_*]+(\eta/\alpha_t)\E\norm{e_t}^2$. For $t\le T-2$, the coefficient of $\eta\E\norm{e_t}^2$ in $\Phi_{t+1}-\Phi_t$ is
\begin{equation}\label{eq:potential-coefficient}
\frac12+\frac{(1-\alpha_{t+1})^2}{\alpha_{t+1}}-\frac1{\alpha_t}
=\left(\frac1{\alpha_{t+1}}-\frac1{\alpha_t}\right)-\frac32+\alpha_{t+1}.
\end{equation}
Since $c\eta^2\le1$, $1/\alpha_t=\min\{1/(c\eta^2),(t+2)/2\}$ for every $t\ge0$, including $\alpha_0=1$. The reciprocal increments therefore lie in $[0,1/2]$, and since $\alpha_{t+1}\le1$,~\eqref{eq:potential-coefficient} is nonpositive.

Since $\alpha_{t+1}\ge c\eta^2$ and $\beta_{t+1}^2\le1$, the quadratic term in $m_t$ is at most $-\eta\theta_\eta\E\norm{m_t}^2$, and the covariance term is at most $2\sigma\delta\eta^2\E\norm{m_t}$. Completing the square pointwise,
\[
-\eta\theta_\eta\norm{m_t}^2+2\sigma\delta\eta^2\norm{m_t}
\le\frac{\sigma^2\delta^2\eta^3}{\theta_\eta}.
\]
Hence
\begin{equation}\label{eq:potential-step}
\Phi_{t+1}\le\Phi_t-\frac\eta2\E\norm{\nabla_t}^2
+\eta\alpha_{t+1}\sigma^2
+\frac{\sigma^2\delta^2\eta^3}{\theta_\eta},\qquad 0\le t\le T-2.
\end{equation}

At the final step no $m_T$ is formed. Apply~\eqref{eq:euclidean-descent} and compare $\E[f(x_T)-f_*]$ with $\Phi_{T-1}$. The error coefficient is $\eta(1/2-1/\alpha_{T-1})\le0$, and $\theta_\eta>0$ implies $1-L\eta>0$. Hence
\[
\E[f(x_T)-f_*]\le\Phi_{T-1}-\frac\eta2\E\norm{\nabla_{T-1}}^2,
\]
which also covers $T=1$. Telescoping and $\Phi_0\le\Delta+\eta\sigma^2$ give
\[
\frac1T\sum_{t=0}^{T-1}\E\norm{\nabla_t}^2
\le\frac{2\Delta}{\eta T}+\frac{2\sigma^2}{T}
+\frac{2\sigma^2}{T}\sum_{t=1}^{T-1}\alpha_t
+\frac{2(T-1)\sigma^2\delta^2\eta^2}{T\theta_\eta}.
\]
Finally,
\[
\sum_{t=1}^{T-1}\alpha_t
\le c\eta^2T+2\sum_{t=1}^{T-1}\frac1{t+2}
\le c\eta^2T+2\log\frac{T+2}{2},
\]
and substitution proves Theorem~\ref{thm:unnormalized}.

\subsection{Leading Constants and Feasible Parameters}
Let $V=V_q>0$ and $\delta>0$. For $c>2V^2$, the coefficient of $\eta^2$ in~\eqref{eq:unnormalized} at $L=0$ is
\[
\Xi_q(c)=2\sigma^2\left[c+\frac{\delta^2}{1/2-V^2/c}\right].
\]
Writing $c=2V^2+y$ with $y>0$ gives
\[
\frac{\Xi_q(c)}{2\sigma^2}=2V^2+y+2\delta^2+\frac{4\delta^2V^2}{y},
\]
minimized at $y=2\delta V$. This proves~\eqref{eq:optimal-c} with $K_q=\Xi_q(c_q)$, and
\[
\theta_0=\frac12-\frac{V^2}{c_q}=\frac{\delta}{2(V+\delta)}>0,
\qquad K_q\theta_0^2=\sigma^2\delta^2.
\]
The cap~\eqref{eq:cap-threshold} ensures $c_q\eta^2\le1$ and $L\eta\le\theta_0$, hence $\theta_\eta=\theta_0-L\eta/2\ge\theta_0/2$. Set
\[
\eta_T^\star=(\Delta/(K_qT))^{1/3},\qquad
\eta=\min\{\eta_{\max},\eta_T^\star\}.
\]
The excess over the limiting noise coefficient is
\begin{align*}
2\sigma^2\eta^2\left(c_q+\frac{\delta^2}{\theta_\eta}\right)-K_q\eta^2
&=\frac{\sigma^2\delta^2L\eta^3}{\theta_0\theta_\eta}
\le\frac{2\sigma^2\delta^2L\eta^3}{\theta_0^2}
=2LK_q\eta^3\le\frac{2L\Delta}{T}.
\end{align*}
For $T\ge T_{\rm cap}$, $\eta=\eta_T^\star$ and
$2\Delta/(\eta_T^\star T)+K_q(\eta_T^\star)^2=3(\Delta^2K_q)^{1/3}T^{-2/3}$.
Substitution in~\eqref{eq:unnormalized} proves~\eqref{eq:leading-rate}. For every $T\ge1$, the capped step satisfies
\begin{align}
\E\norm{\nabla f(x_U)}^2\le{}&3(\Delta^2K_q)^{1/3}T^{-2/3}
+\frac{2\Delta}{T}\left(\frac1{\eta_{\max}}-\frac1{\eta_T^\star}\right)_+
+\frac{2L\Delta+2\sigma^2+4\sigma^2\log((T+2)/2)}{T},\label{eq:storm-all-T}
\end{align}
where $(z)_+=\max\{z,0\}$, since
$1/\eta=1/\eta_T^\star+(1/\eta_{\max}-1/\eta_T^\star)_+$ and $\eta\le\eta_T^\star$.

\paragraph{Finite accuracy budget.}
Put
\[
\varphi(t)=\frac{2L\Delta+2\sigma^2+4\sigma^2\log((t+2)/2)}{t}.
\]
For $t\ge40$, $\log((t+2)/2)\ge\log21>3$, and
\[
\frac{d}{dt}[t^{2/3}\varphi(t)]
=t^{-4/3}\left[\frac{4\sigma^2t}{t+2}
-\frac{2L\Delta+2\sigma^2+4\sigma^2\log((t+2)/2)}3\right]\le0.
\]
Since $T_0^{\rm S}\ge3\sqrt3\Delta\sqrt{K_q}\varepsilon^{-3}$ gives $3(\Delta^2K_q)^{1/3}(T_0^{\rm S})^{-2/3}\le\varepsilon^2$, for $t\ge T_0^{\rm S}$ the right side of~\eqref{eq:leading-rate} is at most $(T_0^{\rm S}/t)^{2/3}(\varepsilon^2+\varphi_0)$. The definition of $T_\varepsilon^{\rm S}$ makes this at most $\varepsilon^2$, and Jensen's inequality gives $\E\norm{\nabla f(x_U)}\le\varepsilon$. The count~\eqref{eq:count} gives the expected budget.

For small $\varepsilon$, let $\Lambda_q=3\sqrt3\Delta\sqrt{K_q}$, so that $T_0^{\rm S}=\Lambda_q\varepsilon^{-3}$. The budget is then exactly
\[
(1+q)\left\lceil \Lambda_q\varepsilon^{-3}
\left[1+\frac{\varepsilon}{\Lambda_q}
\left(2L\Delta+2\sigma^2+4\sigma^2\log\frac{\Lambda_q\varepsilon^{-3}+2}{2}\right)\right]^{3/2}\right\rceil-q.
\]
The bracket tends to one, which yields the leading term~\eqref{eq:leading-budget}. Theorem~\ref{thm:unnormalized} itself also holds when $\Delta$, $\sigma$, or $\delta$ vanishes; the optimized formulas assume $\Delta,\sigma,\delta>0$.

\subsection{Proof of Proposition~\ref{prop:ratio}}
At $q=1$, $V_1=\delta$, and dividing the leading coefficients gives~\eqref{eq:ratio}. Differentiation at $q=1$ gives
\[
\left.\frac{\partial\cQ(q;\omega)}{\partial q}\right|_{q=1}=\frac{1-\omega}{4}.
\]
If $\omega<1$, decreasing $q$ slightly from one strictly lowers the ratio. Conversely, for every $q<1$, $\cQ(q;\omega)$ increases with $\omega$, so it suffices to show $\cQ(q;1)\ge1$, that is,
\[
\sqrt{2/q-1}\ge\frac{3-q}{1+q}.
\]
Both sides are nonnegative. Squaring and clearing positive denominators reduces the claim to
\[
(2-q)(1+q)^2-q(3-q)^2=2(1-q)^3\ge0.
\]
In the limit $\omega\to0$, the interior minimizer of $\cQ(q;0)$ satisfies $2q^{3/2}+q-1=0$, so $q\approx0.4320408$ with ratio $0.9026797$.

\section{Proof of the Online-Predictor Guarantee}\label{app:regret}
Put $V_C^2=v^2+M^2$. Since $\norm{\mu_t}\le M\norm{s_t}$ and $\Et\norm{C_t-\mu_t}^2\le v^2\norm{s_t}^2$, we have $\Et\norm{y_t}^2\le V_C^2$. If $s_t=0$, the mean and variance of $C_t$ vanish, so the zero convention is consistent. For $s_t\ne0$, $\norm{z_t}=1$, hence $\norm{h_t\odot z_t}\le\hmax$ and $\norm{z_t}_\infty\le1$. Assumption~\ref{ass:query} gives
\begin{align*}
\E\norm{\widehat a_t}^2
=q^{-1}\E\norm{\nabla\ell_t(h_t)}^2
=4q^{-1}\E\norm{(h_t\odot z_t-y_t)\odot z_t}^2
\le8q^{-1}(\hmax^2+V_C^2).
\end{align*}

For any fixed $h\in[-\hmax,\hmax]^d$, nonexpansiveness of the Euclidean projection gives
\[
\norm{h_{t+1}-h}^2
\le\norm{h_t-h}^2-2\eta_h\ip{\widehat a_t}{h_t-h}
+\eta_h^2\norm{\widehat a_t}^2.
\]
Since $h_t$ is predictable and $h$ is fixed, Assumption~\ref{ass:query} and convexity give
\[
\E\ip{\widehat a_t}{h_t-h}
=\E\ip{\nabla\ell_t(h_t)}{h_t-h}
\ge\E[\ell_t(h_t)-\ell_t(h)].
\]
Summing over $t=1,\ldots,n$ and using $h_1=0$ and $\norm{h}^2\le d\hmax^2$ proves~\eqref{eq:regret}; minimizing over $\eta_h>0$ gives~\eqref{eq:regret-value}. The argument holds along the adaptive sequence of iterates and does not require an oblivious loss sequence.

\subsection{Explicit Prediction Loss}
For the transport residual,
\[
\norm{C_t-B_t}^2=\norm{s_t}^2\ell_t(h_t)
\le\eta^2R^2\ell_t(h_t).
\]
Summing, applying~\eqref{eq:regret} for each fixed comparator, and taking the infimum outside the expectations proves~\eqref{eq:online-conversion}. With~\eqref{eq:comparator-loss}--\eqref{eq:Gamma} this is the first line of~\eqref{eq:explicit-LT}; the second follows from $(T-1)/T\le1$ and $\sqrt{T-1}/T\le T^{-1/2}$. For $h=0$, $\E\ell_t(0)=\E\norm{y_t}^2\le V_C^2$, so $\kappa_T^2\le V_C^2$.

\subsection{When Learning Pays Off}
For a zero predictor, the variance decomposition and the mean bound give
$\E\norm{C_t}^2\le V_C^2\E\norm{s_t}^2\le V_C^2\eta^2R^2$.
Summing over the $n=T-1$ correction indices gives the certificate $\eta^2R^2(n/T)V_C^2$. For the learner,~\eqref{eq:explicit-LT} and $\kappa_T\le\kappa$ give~\eqref{eq:prediction-threshold}. If~\eqref{eq:break-even} holds, then $\Gamma_q/\sqrt n<V_C^2-\kappa^2$, so the residual certificate improves strictly. Its coefficient in~\eqref{eq:WT} is positive when $q<1$ and $\amin<1$, so the stationarity certificate also improves. Substituting~\eqref{eq:Gamma} gives~\eqref{eq:query-threshold}. Both certificates use the same parameters and the same constants $\Delta,L,\sigma,v,M$, and each bounds its own trajectory.

\subsection{Proof of Corollary~\ref{cor:learned-rate}}
Substitute~\eqref{eq:explicit-LT} in~\eqref{eq:WT}. With $\bmax\le1$ and $\amin(2-\amin)\ge\amin$,
\begin{equation}\label{eq:learned-WT}
W_T\le T^{-2/3}\left[Z_q(\kappa_T)+\Psi_qT^{-1/2}\right],
\qquad \Psi_q=\frac{\ce^2R^2}{\ca}(q^{-1}-1)\Gamma_q.
\end{equation}
The square-root inequality gives
\[
2R\sqrt{W_T}\le2R\sqrt{Z_q(\kappa_T)}T^{-1/3}
+2R\sqrt{\Psi_q}\,T^{-7/12},
\]
and $2R\sqrt{\Psi_q}=\mathsf G_q$. Substituting the remaining terms of Theorem~\ref{thm:lmo} proves~\eqref{eq:learned-rate}. When $Z_q(\kappa_T)>0$,
\[
\sqrt{Z_q(\kappa_T)+\Psi_qT^{-1/2}}-\sqrt{Z_q(\kappa_T)}
\le\frac{\Psi_qT^{-1/2}}{2\sqrt{Z_q(\kappa_T)}},
\]
and multiplying by $2RT^{-1/3}$ gives~\eqref{eq:sharp-regret}.

\subsection{Finite Oracle Complexity}
Assume $\kappa_T\le\kappa$ for every horizon considered. Monotonicity of $Z_q$ gives
\[
\E\sk(\nabla f(x_U))\le \mathsf AT^{-1/3}+\mathcal E(T),
\qquad \mathsf A=\mathsf A_q(\kappa).
\]
For $t\ge T_0$, each nonnegative term of $\mathcal E(t)$ is at most $(T_0/t)^{1/3}$ times its value at $T_0$, and $\mathsf AT_0^{-1/3}\le\varepsilon$, so
\[
\mathsf At^{-1/3}+\mathcal E(t)
\le(T_0/t)^{1/3}[\varepsilon+\mathcal E(T_0)].
\]
The real number inside the ceiling in~\eqref{eq:learned-horizon} makes the right side equal to $\varepsilon$, and rounding up preserves the inequality. The parameters $\eta,\amin,\eta_h$ are chosen for this integer horizon; the uniform bound $\kappa$ makes the prescription independent of the resulting trajectory, and $\kappa=V_C$ always provides one.

Equation~\eqref{eq:count} gives the gradient-difference count; the second-order variant uses exactly $T_\varepsilon$ fresh gradients and an expected $q(T_\varepsilon-1)$ HVPs, and the predictor makes no oracle calls. For $\mathsf A>0$ and $\varepsilon$ small enough that $(\mathsf A/\varepsilon)^3\ge2$, substituting $T_0=(\mathsf A/\varepsilon)^3$ into $\mathcal E(T_0)/\varepsilon$ gives the four terms of $\upsilon_\varepsilon$ in~\eqref{eq:learned-horizon-expanded}. Since $\upsilon_\varepsilon\to0$, the leading coefficient is $(1+q)\mathsf A^3$. A zero predictor has $\cL_T\le\eta^2R^2V_C^2$ and no regret term, which gives~\eqref{eq:predictor-budget-ratio} for positive leading coefficients.

For the comparison after optimizing $(\ce,\ca)$: if $\Delta>0$, $q<1$, and $V_C>0$, then $V_C^2(q^{-1}-1)>0$, and $\mathsf A_q(V_C)$ tends to infinity as $\ce\to0$, as $\ce\to\infty$, and as $\ca\to0$. It is continuous on $(0,\infty)\times(0,1]$, so it attains its minimum at some $(\ce^\star,\ca^\star)$. For $\kappa<V_C$, $\inf\mathsf A_q(\kappa)\le\mathsf A_q(\kappa;\ce^\star,\ca^\star)<\mathsf A_q(V_C;\ce^\star,\ca^\star)$. The condition $V_C>0$ is needed: if $v=M=0$, then $\mathsf A_q(V_C)=\Delta/\ce+2R\sigma\sqrt{\ca}$, whose infimum is zero and is not attained; the strict comparison assumes $0\le\kappa<V_C$, which implies $V_C>0$.

If $Z_q(\kappa)>0$, one may use
\[
\widetilde{\mathsf G}_q=\frac{\ce^2R^3(q^{-1}-1)\Gamma_q}{\ca\sqrt{Z_q(\kappa)}}
\]
and replace $\mathsf G_q t^{-7/12}$ in $\mathcal E(t)$ by
$\min\{\mathsf G_qt^{-7/12},\widetilde{\mathsf G}_qt^{-5/6}\}$. Equation~\eqref{eq:learned-WT} holds with $\kappa$ in place of $\kappa_T$, both powers decay faster than $t^{-1/3}$, and the same horizon argument applies. In the small-accuracy expression, the corresponding term of $\upsilon_\varepsilon$ is $\widetilde{\mathsf G}_q\mathsf A^{-5/2}\varepsilon^{3/2}$, or the minimum of this and $\mathsf G_q\mathsf A^{-7/4}\varepsilon^{3/4}$.

\section{MARS Scaling and Residual Scaling}\label{app:scaling}
We combine query randomization with a nonnegative correction strength $\gamma$. The first variant scales the entire correction, as in the first-moment recurrence of MARS \citep{yuan2025mars,shulgin2026mars}; the second scales only the prediction residual. Both use the general sequence $\alpha_t$ of Algorithm~1; their rate bounds use schedule~\eqref{eq:alpha-schedule} with $\beta_t=1-\alpha_t$ and $\bmax=1-\amin$. Throughout this appendix, $\gamma$ is constant within each run, $q\in(0,1]$ is fixed, $m_0=g_0$, and the parameter update is the LMO step~\eqref{eq:lmo-update}.

The loss $\cL_T$ is computed along the trajectory of the variant under consideration. Define
\begin{align}
\Theta_T&=\frac{\Delta}{\eta T}+\frac{L\eta R^2}{2}
+\frac{4R\sigma}{T}+\frac{4\sqrt2R\sigma}{\sqrt T},\label{eq:scaling-common}\\
W_T^\gamma&=\sigma^2\amin+\gamma\sigma v\eta R
+\frac{\gamma^2\bmax^2}{\amin(2-\amin)}
\left[v^2\eta^2R^2+(q^{-1}-1)\cL_T\right].\label{eq:scaling-noise}
\end{align}

\subsection{Scaling the Entire Correction}
The randomized MARS-style recurrence is
\begin{equation}\label{eq:mars-scaled}
m_t=\beta_t\bigl(m_{t-1}+\gamma\widehat C_t\bigr)+\alpha_t g_t.
\end{equation}
At $q=1$ the predictor cancels. For gradient-difference transport this is the MARS first moment before clipping or preconditioning, and $\gamma=1$ and $\gamma=0$ recover randomized transport and ordinary momentum.

\begin{proposition}[LMO bound with standard scaling]\label{prop:mars-scaling}
Under Assumptions~\ref{ass:objective}--\ref{ass:query}, update~\eqref{eq:mars-scaled} with schedule~\eqref{eq:alpha-schedule} and constant $\gamma\ge0$ satisfies
\begin{equation}\label{eq:mars-bound}
\E\sk(\nabla f(x_U))
\le \Theta_T+\frac{2\bmax\gamma b\eta^2R^3}{\amin}
+\frac{2\bmax|1-\gamma|L\eta R^2}{\amin}
+2R\sqrt{W_T^\gamma}.
\end{equation}
\end{proposition}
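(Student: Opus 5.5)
The plan is to reduce Proposition~\ref{prop:mars-scaling} to Lemma~\ref{lem:tracking} by writing the scaled recurrence~\eqref{eq:mars-scaled} in the same error-recursion form~\eqref{eq:error-recursion}. With $e_t=m_t-\nabla f(x_t)$ and $\nabla f(x_t)=\nabla f(x_{t-1})+\Cs t$, subtracting $\nabla f(x_t)=\beta_t(\nabla f(x_{t-1})+\Cs t)+\alpha_t\nabla f(x_t)$ gives
\[
e_t=\beta_te_{t-1}+\beta_t r_t^\gamma+w_t^\gamma,\qquad
r_t^\gamma=\gamma\mu_t-\Cs t,\qquad
w_t^\gamma=\beta_t\gamma(\widehat C_t-\mu_t)+\alpha_t\zeta_t .
\]
Here $r_t^\gamma$ is $\F_{t-1}$-measurable, and by Lemma~\ref{lem:identities} $w_t^\gamma$ is a martingale difference. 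Lemma~\ref{lem:tracking} then applies verbatim, including its descent part, which contributes $\Delta/(\eta T)+L\eta R^2/2$ after multiplying the average tracking error by $2R$.

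\textbf{Drift.} Split $r_t^\gamma=\gamma(\mu_t-\Cs t)+(\gamma-1)\Cs t$. The first piece is bounded by $\gamma b\norm{s_t}^2\le\gamma b\eta^2R^2$ using~\eqref{eq:bias-assumption}. The second is bounded by $|1-\gamma|L\norm{s_t}\le|1-\gamma|L\eta R$ using $L$-smoothness and $\norm{s_t}\le\eta R$. The tail-weight bound~\eqref{eq:tail-weights}, applied after exchanging the finite sums exactly as in the proof of Theorem~\ref{thm:lmo}, multiplies each piece by $\bmax/\amin$. Multiplying by $2R$ then yields the two drift terms $2\bmax\gamma b\eta^2R^3/\amin$ and $2\bmax|1-\gamma|L\eta R^2/\amin$.

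\textbf{Martingale and initial error.} The initial error term $\sigma p_{0,t}$ is unchanged and gives $4R\sigma/T$ through~\eqref{eq:initial-weights}. For the martingale part I will recompute the analogue of~\eqref{eq:noise-bound}:
\[
\E\norm{w_i^\gamma}^2\le\alpha_i^2\sigma^2+2\alpha_i\beta_i\gamma\sigma v\eta R+\beta_i^2\gamma^2\bigl[v^2\eta^2R^2+(q^{-1}-1)\ell_i\bigr].
\]
This uses the variance identity~\eqref{eq:variance-id} and the covariance identity~\eqref{eq:covariance-id}, with Cauchy--Schwarz on $\Et\ip{\zeta_i}{C_i-\mu_i}$, together with $\gamma\ge0$.

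The rest of the proof of Theorem~\ref{thm:lmo} then goes through with $v\mapsto\gamma v$ in the cross term and with the bracket scaled by $\gamma^2$. Concretely, I will use~\eqref{eq:noise-weights}, the split $\alpha_t\le\amin+2/(t+2)$, \eqref{eq:pointwise-correction-weights}, and \eqref{eq:tail-weights}. This produces $\sqrt{W_T^\gamma}+2\sqrt2\sigma/\sqrt T$, and hence the terms $2R\sqrt{W_T^\gamma}$ and $4\sqrt2R\sigma/\sqrt T$ inside $\Theta_T$. Here $\cL_T$ is understood along the trajectory of the scaled variant, as stated at the start of the appendix.

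\textbf{Main obstacle.} The step needing care is the drift bound. It requires that the mean correction $\mu_t$, not merely $\Cs t$, enter the recursion, and that the $(\gamma-1)\Cs t$ term be controlled only through smoothness. That term is first order in $\norm{s_t}$, which is why the bound carries $L\eta R^2/\amin$ rather than a second-order quantity. I also need to confirm that $r_t^\gamma$ remains predictable, which holds because $\Cs t$, $s_t$, and $\mu_t$ are all $\F_{t-1}$-measurable.
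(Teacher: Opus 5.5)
Your proposal is correct and follows essentially the same route as the paper's proof. It uses the same error recursion with $r_t^\gamma=\gamma\mu_t-\Cs t$ and $w_t^\gamma=\beta_t\gamma(\widehat C_t-\mu_t)+\alpha_t\zeta_t$, the same split of the drift into a Taylor piece and a $|1-\gamma|L\norm{s_t}$ piece controlled by the tail weights, and the same rerun of the martingale calculation of Theorem~\ref{thm:lmo} with $v$ replaced by $\gamma v$ and the residual bracket scaled by $\gamma^2$.
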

\begin{proof}
With $e_t=m_t-\nabla f(x_t)$,
\[
e_t=\beta_t e_{t-1}+\beta_t r_t^\gamma+w_t^\gamma,
\quad r_t^\gamma=\gamma\mu_t-\Cs t,
\quad w_t^\gamma=\beta_t\gamma(\widehat C_t-\mu_t)+\alpha_t\zeta_t.
\]
The predictable drift satisfies
\[
\norm{r_t^\gamma}
\le\gamma\norm{\mu_t-\Cs t}+|1-\gamma|\norm{\Cs t}
\le\gamma b\eta^2R^2+|1-\gamma|L\eta R.
\]
Lemma~\ref{lem:identities} gives $\Et w_t^\gamma=0$ and
\[
\E\norm{w_t^\gamma}^2
\le\alpha_t^2\sigma^2+2\alpha_t\beta_t\gamma\sigma v\eta R
+\beta_t^2\gamma^2\left[v^2\eta^2R^2+(q^{-1}-1)\E\norm{C_t-B_t}^2\right].
\]
For arbitrary deterministic $\alpha_t$, these estimates can be substituted into Lemma~\ref{lem:tracking}. For schedule~\eqref{eq:alpha-schedule},~\eqref{eq:tail-weights} bounds the average drift contribution by
$\bmax(\gamma b\eta^2R^2+|1-\gamma|L\eta R)/\amin$. Equations~\eqref{eq:initial-weights}--\eqref{eq:pointwise-correction-weights} applied to the noise bound repeat the calculation of Theorem~\ref{thm:lmo} with $v$ replaced by $\gamma v$ and the residual second moment multiplied by $\gamma^2$, so the initial-error and martingale contributions are at most $2\sigma/T$ and $\sqrt{W_T^\gamma}+2\sqrt2\sigma/\sqrt T$. The descent statement of Lemma~\ref{lem:tracking} gives~\eqref{eq:mars-bound}.
\end{proof}

\paragraph{Bias--variance tradeoff.}
Scaling by $\gamma\in[0,1]$ reduces correction variance and introduces a first-order transport bias. For a zero predictor and gradient-difference transport,
\begin{equation}\label{eq:scaled-storm-mse}
\Et\norm{\gamma J_tC_t/q-\Cs t}^2
=\frac{\gamma^2}{q}\Et\norm{C_t-\Cs t}^2
+\left[(1-\gamma)^2+\frac{\gamma^2(1-q)}q\right]\norm{\Cs t}^2,
\end{equation}
by decomposing the second moment into centered variance and squared bias and applying Lemma~\ref{lem:identities} with $\mu_t=\Cs t$. The bias $(\gamma-1)\Cs t$ enters through the drift term of~\eqref{eq:mars-bound}, consistent with the analysis of \citet{shulgin2026mars}.

\begin{corollary}[Explicit bounds for standard scaling]\label{cor:mars-rates}
Under Proposition~\ref{prop:mars-scaling}, suppose $\cL_T\le\bar\kappa^2\eta^2R^2$ with $\bar\kappa\ge0$ independent of $T$. Fix $q\in(0,1]$, $\ce>0$, and $0<\ca\le1$, and write
$\Omega_q(\bar\kappa)=v^2+(q^{-1}-1)\bar\kappa^2$. The following bounds hold for every integer $T\ge1$ under the stated conditions.
\begin{enumerate}
\item Suppose $0\le\gamma_T\le\bar\gamma$ and $|1-\gamma_T|\le c_\gamma T^{-1/3}$ for constants $\bar\gamma,c_\gamma\ge0$. With $\eta=\ce T^{-2/3}$ and $\amin=\ca T^{-2/3}$,
\begin{align}
\E\sk(\nabla f(x_U))\le{}&
\left[\frac\Delta\ce+2R\sqrt{\ca\sigma^2+\bar\gamma\sigma v\ce R+
\frac{\bar\gamma^2\ce^2R^2}{\ca}\Omega_q(\bar\kappa)}
+\frac{2c_\gamma L\ce R^2}{\ca}\right]T^{-1/3}\notag\\
&+4\sqrt2R\sigma T^{-1/2}
+\left(\frac{L\ce R^2}{2}+\frac{2\bar\gamma b\ce^2R^3}{\ca}\right)T^{-2/3}
+4R\sigma T^{-1}.\label{eq:standard-explicit-fast}
\end{align}
\item For any fixed $\gamma\ge0$, with $\eta=\ce T^{-3/4}$ and $\amin=\ca T^{-1/2}$,
\begin{align}
\E\sk(\nabla f(x_U))\le{}&
\left[\frac\Delta\ce+2R\sigma\sqrt\ca+
\frac{2|1-\gamma|L\ce R^2}{\ca}\right]T^{-1/4}
+2R\sqrt{\gamma\sigma v\ce R}\,T^{-3/8}\notag\\
&+\left[4\sqrt2R\sigma+2\gamma\ce R^2\sqrt{\frac{\Omega_q(\bar\kappa)}\ca}\right]T^{-1/2}
+\frac{L\ce R^2}{2}T^{-3/4}
+\left[4R\sigma+\frac{2\gamma b\ce^2R^3}{\ca}\right]T^{-1}.\label{eq:standard-explicit-slow}
\end{align}
\end{enumerate}
\end{corollary}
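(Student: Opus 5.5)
Both bounds follow by substituting the stated parameters into Proposition~\ref{prop:mars-scaling} and splitting the resulting terms. First, I use $\cL_T\le\bar\kappa^2\eta^2R^2$ in~\eqref{eq:scaling-noise}, together with $\bmax\le1$ and $\amin(2-\amin)\ge\amin$. This gives
\[
W_T^\gamma\le\sigma^2\amin+\gamma\sigma v\eta R+\frac{\gamma^2\eta^2R^2}{\amin}\Omega_q(\bar\kappa).
\]
The common part $\Theta_T$ is split as in Corollary~\ref{cor:rate}. The drift terms of~\eqref{eq:mars-bound} become $2\gamma b\eta^2R^3/\amin$ and $2|1-\gamma|L\eta R^2/\amin$. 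The proof of Proposition~\ref{prop:mars-scaling} allows $\gamma$ to be any constant within a run, so a horizon-dependent $\gamma_T$ is admissible.

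\textbf{Case 1.} Take $\eta=\ce T^{-2/3}$ and $\amin=\ca T^{-2/3}$. Then each of $\sigma^2\amin$, $\gamma\sigma v\eta R$ and $\gamma^2\eta^2R^2\Omega_q/\amin$ scales as $T^{-2/3}$. Bounding $\gamma_T\le\bar\gamma$ inside the square root gives the $\sqrt{\cdot}\,T^{-1/3}$ term. The Taylor drift is $2\bar\gamma b\ce^2R^3T^{-2/3}/\ca$. The first-order bias drift is $2|1-\gamma_T|L\ce R^2/\ca$, which is a constant times $T^0$, so the hypothesis $|1-\gamma_T|\le c_\gamma T^{-1/3}$ turns it into $2c_\gamma L\ce R^2T^{-1/3}/\ca$. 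This term joins the leading coefficient. The remaining terms of $\Theta_T$ carry over unchanged: $\Delta/(\eta T)=(\Delta/\ce)T^{-1/3}$, $L\ce R^2T^{-2/3}/2$, $4\sqrt2R\sigma T^{-1/2}$ and $4R\sigma T^{-1}$. Collecting these gives~\eqref{eq:standard-explicit-fast}.

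\textbf{Case 2.} Take $\eta=\ce T^{-3/4}$ and $\amin=\ca T^{-1/2}$. The bias drift is now $2|1-\gamma|L\ce R^2T^{-1/4}/\ca$, and $\Delta/(\eta T)=(\Delta/\ce)T^{-1/4}$. For the noise term I apply $\sqrt{x+y+z}\le\sqrt x+\sqrt y+\sqrt z$ to the three pieces:
\begin{itemize}
\item $\sqrt{\sigma^2\ca T^{-1/2}}=\sigma\sqrt{\ca}\,T^{-1/4}$;
\item $\sqrt{\gamma\sigma v\ce R}\,T^{-3/8}$;
\item $\gamma\ce R\sqrt{\Omega_q/\ca}\,T^{-1/2}$, since $\eta^2/\amin=\ce^2T^{-1}/\ca$.
\end{itemize}
Multiplying each by $2R$ gives the stated coefficients. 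The Taylor drift is $2\gamma b\ce^2R^3T^{-1}/\ca$ and merges with $4R\sigma T^{-1}$. The smoothness term is $L\ce R^2T^{-3/4}/2$. This gives~\eqref{eq:standard-explicit-slow}.

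The main point to check is that the schedule and parameter constraints are valid for every integer $T\ge1$, namely $\amin\le1$; this holds since $\ca\le1$. The only other thing needing care is grouping the powers correctly; no new estimate is required.
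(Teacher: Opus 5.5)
Your proposal is correct and matches the paper's proof step for step. Both first simplify $W_T^\gamma$ using $\bmax\le1$, $\amin(2-\amin)\ge\amin$ and $\cL_T\le\bar\kappa^2\eta^2R^2$, then substitute each tuning. In the first tuning you bound $\gamma_T\le\bar\gamma$ and $|1-\gamma_T|\le c_\gamma T^{-1/3}$; in the second you split the square root into its three nonnegative pieces, and your exponent bookkeeping (e.g.\ $\eta^2/\amin=\ce^2T^{-1}/\ca$) agrees with the paper's.
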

\begin{proof}
Using $\bmax\le1$ and $\amin(2-\amin)\ge\amin$ in~\eqref{eq:scaling-noise} gives
\[
W_T^\gamma\le\sigma^2\amin+\gamma\sigma v\eta R
+\frac{\gamma^2\eta^2R^2}{\amin}\Omega_q(\bar\kappa).
\]
For the first tuning,
\[
2R\sqrt{W_T^{\gamma_T}}
\le2R\sqrt{\ca\sigma^2+\bar\gamma\sigma v\ce R+
\frac{\bar\gamma^2\ce^2R^2}{\ca}\Omega_q(\bar\kappa)}\,T^{-1/3},
\]
and the two drift terms in~\eqref{eq:mars-bound} are at most
$2\bar\gamma b\ce^2R^3T^{-2/3}/\ca$ and $2c_\gamma L\ce R^2T^{-1/3}/\ca$.
For the second tuning, splitting the square root into its three nonnegative components gives
\[
2R\sqrt{W_T^\gamma}
\le2R\sigma\sqrt\ca\,T^{-1/4}
+2R\sqrt{\gamma\sigma v\ce R}\,T^{-3/8}
+2\gamma\ce R^2\sqrt{\frac{\Omega_q(\bar\kappa)}\ca}\,T^{-1/2},
\]
and the drift terms are at most $2\gamma b\ce^2R^3T^{-1}/\ca$ and
$2|1-\gamma|L\ce R^2T^{-1/4}/\ca$.
Adding the four terms of $\Theta_T$ proves both bounds.
\end{proof}
The second tuning gives a convergent bound for fixed $\gamma\ne1$; the first applies when the scale approaches one at the stated rate.

\subsection{Scaling Only the Prediction Residual}
Consider instead
\begin{equation}\label{eq:residual-scaled}
\widetilde C_t=B_t+\gamma\frac{J_t}{q}(C_t-B_t),
\qquad m_t=\beta_t(m_{t-1}+\widetilde C_t)+\alpha_t g_t,
\end{equation}
and assume in this subsection that $B_t$ is $\F_{t-1}$-measurable. Since
$\widetilde C_t=\gamma\widehat C_t+(1-\gamma)B_t$, its conditional mean and centered part are
\begin{align}
\widetilde\mu_t&=\gamma\mu_t+(1-\gamma)B_t,\label{eq:residual-scaled-mean}\\
\widetilde C_t-\widetilde\mu_t&=\gamma(\widehat C_t-\mu_t).\label{eq:residual-scaled-centered}
\end{align}
Consequently,
\[
\Et\norm{\widetilde C_t-\widetilde\mu_t}^2
=\gamma^2\left[\Et\norm{C_t-\mu_t}^2+(q^{-1}-1)\Et\norm{C_t-B_t}^2\right],
\]
its covariance with $\zeta_t$ is $\gamma\Et\ip{\zeta_t}{C_t-\mu_t}$, and its transport bias is
\begin{equation}\label{eq:residual-scaled-drift}
\widetilde\mu_t-\Cs t=(\mu_t-\Cs t)+(\gamma-1)(\mu_t-B_t).
\end{equation}
The additional bias is proportional to the mean-prediction error.

\begin{proposition}[LMO bound with residual scaling]\label{prop:residual-scaling}
Under Assumptions~\ref{ass:objective}--\ref{ass:query} with a predictable predictor, define
\[
\mathcal P_T=\frac1T\sum_{t=1}^{T-1}\E\norm{\mu_t-B_t}.
\]
For~\eqref{eq:residual-scaled} with schedule~\eqref{eq:alpha-schedule} and constant $\gamma\ge0$,
\begin{equation}\label{eq:residual-scaled-bound}
\E\sk(\nabla f(x_U))
\le \Theta_T+\frac{2\bmax b\eta^2R^3}{\amin}
+\frac{2\bmax R|1-\gamma|}{\amin}\mathcal P_T
+2R\sqrt{W_T^\gamma}.
\end{equation}
\end{proposition}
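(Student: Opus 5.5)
The argument mirrors the proof of Proposition~\ref{prop:mars-scaling}: I will write the tracking error as an instance of Lemma~\ref{lem:tracking} and then reuse the weight bounds for schedule~\eqref{eq:alpha-schedule}. With $e_t=m_t-\nabla f(x_t)$ and $\Cs t=\nabla f(x_t)-\nabla f(x_{t-1})$, the recurrence~\eqref{eq:residual-scaled} gives
\[
e_t=\beta_t e_{t-1}+\beta_t\tilde r_t+\tilde w_t,\qquad
\tilde r_t=\widetilde\mu_t-\Cs t,\qquad
\tilde w_t=\beta_t(\widetilde C_t-\widetilde\mu_t)+\alpha_t\zeta_t .
\]
Because $B_t$ is assumed predictable, $\widetilde\mu_t$ in~\eqref{eq:residual-scaled-mean} is $\F_{t-1}$-measurable, so $\tilde r_t$ is predictable. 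By~\eqref{eq:residual-scaled-centered} and Lemma~\ref{lem:identities}, $\Et\tilde w_t=0$. Predictability of $B_t$ is the one place where this variant needs more than the setting of Proposition~\ref{prop:mars-scaling}; without it the extra term $(1-\gamma)B_t$ would not be absorbed into the drift.

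\textbf{Drift.} By~\eqref{eq:residual-scaled-drift} and~\eqref{eq:bias-assumption},
\[
\norm{\tilde r_t}\le b\norm{s_t}^2+|1-\gamma|\norm{\mu_t-B_t}\le b\eta^2R^2+|1-\gamma|\norm{\mu_t-B_t}.
\]
Taking expectations gives $\bar r_i\le b\eta^2R^2+|1-\gamma|\E\norm{\mu_i-B_i}$. In Lemma~\ref{lem:tracking} the drift sum is $\frac1T\sum_t\sum_{i\le t}p_{i,t}\beta_i\bar r_i$. I exchange the finite sums and apply the first bound in~\eqref{eq:tail-weights}, $\sum_{t\ge i}p_{i,t}\beta_i\le\bmax/\amin$. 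This bounds the drift by
\[
\frac{\bmax}{\amin}\Bigl(b\eta^2R^2+|1-\gamma|\mathcal P_T\Bigr).
\]
The key point is that $\mathcal P_T$ keeps the per-index expectations inside the exchanged sum, rather than bounding them uniformly.

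\textbf{Martingale part.} The centered correction is $\gamma(\widehat C_t-\mu_t)$, so Lemma~\ref{lem:identities} gives
\[
\E\norm{\tilde w_t}^2\le\alpha_t^2\sigma^2+2\alpha_t\beta_t\gamma\,\Et\text{-cov}+\beta_t^2\gamma^2\bigl[v^2\eta^2R^2+(q^{-1}-1)\ell_t\bigr],
\]
where the covariance term satisfies $|\Et\ip{\zeta_t}{\gamma(C_t-\mu_t)}|\le\gamma\sigma v\eta R$ by Cauchy--Schwarz. This is exactly~\eqref{eq:noise-bound} with $v$ replaced by $\gamma v$ and the residual term multiplied by $\gamma^2$. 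The calculation in the proof of Theorem~\ref{thm:lmo} then goes through unchanged, using~\eqref{eq:initial-weights}--\eqref{eq:pointwise-correction-weights}. The initial error contributes $2\sigma/T$. The martingale sum contributes $\sqrt{W_T^\gamma}+2\sqrt2\sigma/\sqrt T$, with $W_T^\gamma$ as in~\eqref{eq:scaling-noise} and $\cL_T$ taken along this variant's trajectory.

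\textbf{Conclusion.} I add the three average-error contributions, multiply by $2R$, and add $\Delta/(\eta T)+L\eta R^2/2$ through the descent statement of Lemma~\ref{lem:tracking}. The terms $\Delta/(\eta T)$, $L\eta R^2/2$, $4R\sigma/T$ and $4\sqrt2R\sigma/\sqrt T$ form $\Theta_T$. The drift yields $2\bmax b\eta^2R^3/\amin+2\bmax R|1-\gamma|\mathcal P_T/\amin$, and the martingale part yields $2R\sqrt{W_T^\gamma}$, which is~\eqref{eq:residual-scaled-bound}. The only step needing care is the sum exchange for the $\mathcal P_T$ term; the rest is bookkeeping carried over from Theorem~\ref{thm:lmo}.
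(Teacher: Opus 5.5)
Your proposal is correct and follows essentially the same route as the paper. Both use the recursion with predictable drift $\widetilde\mu_t-\Cs t$, and both reuse the centered noise $\gamma\beta_t(\widehat C_t-\mu_t)+\alpha_t\zeta_t$ from Proposition~\ref{prop:mars-scaling}, so the initial and martingale terms are unchanged. Both then exchange the finite sums and apply the first tail-weight bound, which gives the drift contribution $\frac{\bmax}{\amin}\bigl[b\eta^2R^2+|1-\gamma|\mathcal P_T\bigr]$ before the descent step of Lemma~\ref{lem:tracking}.
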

\begin{proof}
The tracking recursion has drift~\eqref{eq:residual-scaled-drift} and the same centered noise $w_t^\gamma$ as in Proposition~\ref{prop:mars-scaling}, so the initial and martingale terms have the same bounds. For $\mathcal D_t=\sum_{i=1}^t p_{i,t}\beta_i(\widetilde\mu_i-\Cs i)$, exchanging the finite sums and using~\eqref{eq:tail-weights},
\begin{align*}
\frac1T\sum_{t=0}^{T-1}\E\norm{\mathcal D_t}
\le\frac{\bmax}{\amin T}\sum_{i=1}^{T-1}\E\norm{\widetilde\mu_i-\Cs i}
\le\frac{\bmax}{\amin}\left[b\eta^2R^2+|1-\gamma|\mathcal P_T\right].
\end{align*}
The descent statement of Lemma~\ref{lem:tracking} concludes. For arbitrary deterministic $\alpha_t$, the same lemma applies before the tail-weight simplification.
\end{proof}

\begin{corollary}[Fixed residual scaling with accurate mean prediction]\label{cor:residual-scaling-rate}
Under Proposition~\ref{prop:residual-scaling}, suppose $\cL_T\le\bar\kappa^2\eta^2R^2$ and $\mathcal P_T\le b_B\eta^2R^2$ with $\bar\kappa,b_B\ge0$ independent of $T$. Fix $\gamma\ge0$, $q\in(0,1]$, $\ce>0$, and $0<\ca\le1$. For every integer $T\ge1$, schedule~\eqref{eq:alpha-schedule} with $\eta=\ce T^{-2/3}$ and $\amin=\ca T^{-2/3}$ gives
\begin{align}
\E\sk(\nabla f(x_U))\le{}&
\left[\frac\Delta\ce+2R\sqrt{\ca\sigma^2+\gamma\sigma v\ce R+
\frac{\gamma^2\ce^2R^2}{\ca}\Omega_q(\bar\kappa)}\right]T^{-1/3}\notag\\
&+4\sqrt2R\sigma T^{-1/2}
+\left[\frac{L\ce R^2}{2}+\frac{2(b+|1-\gamma|b_B)\ce^2R^3}{\ca}\right]T^{-2/3}
+4R\sigma T^{-1}.\label{eq:residual-explicit-fast}
\end{align}
A sufficient mean-prediction condition is $\norm{\mu_t-B_t}\le b_B\norm{s_t}^2$ almost surely at every correction index.
\end{corollary}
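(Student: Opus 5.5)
The plan is to substitute the two standing hypotheses, $\cL_T\le\bar\kappa^2\eta^2R^2$ and $\mathcal P_T\le b_B\eta^2R^2$, into the bound~\eqref{eq:residual-scaled-bound} of Proposition~\ref{prop:residual-scaling}. We then insert the tuning $\eta=\ce T^{-2/3}$, $\amin=\ca T^{-2/3}$ and read off each power of $T$, following the template of the first tuning in Corollary~\ref{cor:mars-rates} and of the explicit rate after Theorem~\ref{thm:lmo}.

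\textbf{Step 1: the noise term.} Using $\bmax\le1$ and $\amin(2-\amin)\ge\amin$ in~\eqref{eq:scaling-noise}, together with the bound on $\cL_T$, gives
\[
W_T^\gamma\le\sigma^2\amin+\gamma\sigma v\eta R+\frac{\gamma^2\eta^2R^2}{\amin}\Omega_q(\bar\kappa).
\]
Each of the three terms scales as $T^{-2/3}$ under the tuning. Hence $2R\sqrt{W_T^\gamma}$ is at most the bracketed square root in~\eqref{eq:residual-explicit-fast} times $T^{-1/3}$.

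\textbf{Step 2: the terms of $\Theta_T$.} These give $\Delta/(\ce T^{1/3})$, $L\ce R^2T^{-2/3}/2$, $4R\sigma T^{-1}$ and $4\sqrt2R\sigma T^{-1/2}$.

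\textbf{Step 3: the two drift terms.} Using $\bmax\le1$, the Taylor drift gives
\[
\frac{2b\eta^2R^3}{\amin}=\frac{2b\ce^2R^3}{\ca}T^{-2/3}.
\]
The mean-prediction drift gives
\[
\frac{2R|1-\gamma|\mathcal P_T}{\amin}\le\frac{2|1-\gamma|b_B\eta^2R^3}{\amin}=\frac{2|1-\gamma|b_B\ce^2R^3}{\ca}T^{-2/3}.
\]
Summing all terms yields~\eqref{eq:residual-explicit-fast}. The mean-prediction term is the only point needing care: it must be quadratic in $\eta$ for its contribution to land at order $T^{-2/3}$ rather than $T^{0}$. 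That is exactly what the hypothesis on $\mathcal P_T$ provides.

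\textbf{Sufficient condition.} Suppose $\norm{\mu_t-B_t}\le b_B\norm{s_t}^2$ almost surely. Since $\norm{s_t}\le\eta R$, this gives $\E\norm{\mu_t-B_t}\le b_B\eta^2R^2$. Summing over the $T-1$ correction indices and dividing by $T$ gives $\mathcal P_T\le\frac{T-1}{T}b_B\eta^2R^2\le b_B\eta^2R^2$.

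No step is a real obstacle, since the argument is pure substitution. The one thing to check is that $\gamma$ enters $W_T^\gamma$ exactly as stated, with factor $\gamma$ on the covariance term and $\gamma^2$ on the variance term. This matches the centered-noise identity~\eqref{eq:residual-scaled-centered}.
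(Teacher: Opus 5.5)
Your proposal is correct and follows the paper's proof. You substitute $\cL_T\le\bar\kappa^2\eta^2R^2$ and $\mathcal P_T\le b_B\eta^2R^2$ into~\eqref{eq:residual-scaled-bound}, simplify with $\bmax\le1$ and $\amin(2-\amin)\ge\amin$, read off the powers of $T$ under the tuning, and derive the sufficient condition from $\norm{s_t}\le\eta R$ and the factor $(T-1)/T$.
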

\begin{proof}
The total drift contribution in~\eqref{eq:residual-scaled-bound} satisfies
\[
\frac{2\bmax b\eta^2R^3}{\amin}
+\frac{2\bmax R|1-\gamma|}{\amin}\mathcal P_T
\le\frac{2(b+|1-\gamma|b_B)\ce^2R^3}{\ca}T^{-2/3},
\]
and
\[
2R\sqrt{W_T^\gamma}
\le2R\sqrt{\ca\sigma^2+\gamma\sigma v\ce R+
\frac{\gamma^2\ce^2R^2}{\ca}\Omega_q(\bar\kappa)}\,T^{-1/3}.
\]
Adding the four terms of $\Theta_T$ proves~\eqref{eq:residual-explicit-fast}. The sufficient condition gives
$\mathcal P_T\le b_B\eta^2R^2(T-1)/T\le b_B\eta^2R^2$, since $\norm{s_t}\le\eta R$.
\end{proof}
A deterministic quadratic with known diagonal Hessian admits an exact diagonal predictor and satisfies the mean-prediction condition with $b_B=0$. In general, Corollary~\ref{cor:residual-scaling-rate} requires a bound on $\mathcal P_T$ in addition to the regret-controlled quantity $\cL_T$. By~\eqref{eq:irreducible} and Cauchy--Schwarz,
\[
\mathcal P_T\le\sqrt{\frac{T-1}{T}\cdot\frac1T\sum_{t=1}^{T-1}\E\norm{\mu_t-B_t}^2}
\le\sqrt{\cL_T},
\]
so $\cL_T\le\bar\kappa^2\eta^2R^2$ implies $\mathcal P_T\le\bar\kappa\eta R$. The complete $T^{-1/4}$ bound is then~\eqref{eq:scaling-complete-slow} with $b_*=b$ and $\ell_*=|1-\gamma|\bar\kappa$.

\subsection{Complete Rates for Both Scaling Variants}\label{app:scaling-explicit}
Let $\cL_T\le\bar\kappa^2\eta^2R^2$, write
\[
\mathsf A_q^\gamma(\bar\kappa)=\frac\Delta\ce+2R\sqrt{\ca\sigma^2+\gamma\sigma v\ce R+
\frac{\gamma^2\ce^2R^2}{\ca}\Omega_q(\bar\kappa)},
\]
and keep $\mathsf U=4\sqrt2R\sigma$ and $\mathsf I=4R\sigma$ from~\eqref{eq:rate-constants}. Suppose the average transport drift satisfies
\[
\frac1T\sum_{t=1}^{T-1}\E\norm{r_t^\gamma}
\le b_*\eta^2R^2+\ell_*\eta R,
\]
where $r_t^\gamma$ is the drift of the variant. The drift estimates above give the following choices.
\begin{center}
\begin{tabular}{lll}
\toprule
Variant & $b_*$ & $\ell_*$\\
\midrule
Standard scaling & $\gamma b$ & $|1-\gamma|L$\\
Residual scaling, $\mathcal P_T\le b_B\eta^2R^2$ & $b+|1-\gamma|b_B$ & $0$\\
Residual scaling, $\mathcal P_T\le c_P\eta R$ & $b$ & $|1-\gamma|c_P$\\
\bottomrule
\end{tabular}
\end{center}

For $\eta=\ce T^{-2/3}$ and $\amin=\ca T^{-2/3}$, Propositions~\ref{prop:mars-scaling} and~\ref{prop:residual-scaling} imply
\begin{align}
\E\sk(\nabla f(x_U))\le{}&\mathsf A_q^\gamma(\bar\kappa)T^{-1/3}
+\mathsf U T^{-1/2}
+\left(\frac{L\ce R^2}{2}+\frac{2b_*\ce^2R^3}{\ca}\right)T^{-2/3}
+\mathsf I T^{-1}+\frac{2\ell_*\ce R^2}{\ca}.\label{eq:scaling-complete-fast}
\end{align}
The last term does not decay with $T$; it vanishes when $\ell_*=0$. If $0\le\gamma_T\le\bar\gamma$ and $|1-\gamma_T|\le c_\gamma T^{-1/3}$, standard scaling satisfies
\begin{align}
\E\sk(\nabla f(x_U))\le{}&
\left[\mathsf A_q^{\bar\gamma}(\bar\kappa)+\frac{2c_\gamma L\ce R^2}{\ca}\right]T^{-1/3}
+\mathsf U T^{-1/2}
+\left(\frac{L\ce R^2}{2}+\frac{2\bar\gamma b\ce^2R^3}{\ca}\right)T^{-2/3}
+\mathsf I T^{-1},\label{eq:scaling-near-one}
\end{align}
which is~\eqref{eq:standard-explicit-fast} in this notation. For accurate mean prediction, the second row of the table gives $\ell_*=0$ and recovers~\eqref{eq:residual-explicit-fast}.

For the slower tuning $\eta=\ce T^{-3/4}$ and $\amin=\ca T^{-1/2}$,
\begin{align}
\E\sk(\nabla f(x_U))\le{}&
\left(\frac\Delta\ce+2R\sigma\sqrt\ca+\frac{2\ell_*\ce R^2}{\ca}\right)T^{-1/4}
+2R\sqrt{\gamma\sigma v\ce R}\,T^{-3/8}
+\left(\mathsf U+2\gamma\ce R^2\sqrt{\frac{\Omega_q(\bar\kappa)}\ca}\right)T^{-1/2}\notag\\
&+\frac{L\ce R^2}{2}T^{-3/4}
+\left(\mathsf I+\frac{2b_*\ce^2R^3}{\ca}\right)T^{-1}.\label{eq:scaling-complete-slow}
\end{align}
Both formulas follow from $\bmax\le1$ and $\amin(2-\amin)\ge\amin$ in the finite-horizon propositions and substitution of the chosen powers of $T$; for~\eqref{eq:scaling-complete-slow}, the square root of the three noise terms is split, giving the exponents $1/4$, $3/8$, and $1/2$, and the drift terms give $2b_*\ce^2R^3T^{-1}/\ca$ and $2\ell_*\ce R^2T^{-1/4}/\ca$.

\paragraph{Learned predictor.}
With the diagonal predictor, use~\eqref{eq:explicit-LT} in place of a fixed residual bound. The additional contribution to $W_T^\gamma$ is at most
\[
\frac{\gamma^2\eta^2R^2(q^{-1}-1)\Gamma_q}{\amin\sqrt T},
\]
and multiplying its square root by $2R$ gives $\gamma\mathsf G_qT^{-7/12}$ for the fast tuning and $\gamma\mathsf G_qT^{-3/4}$ for the slow tuning, with $\mathsf G_q$ from~\eqref{eq:Gq}. In the fast tuning $\mathsf A_q^\gamma(\kappa_T)$ replaces $\mathsf A_q^\gamma(\bar\kappa)$, and in the slow tuning $\Omega_q(\kappa_T)$ replaces $\Omega_q(\bar\kappa)$. In the near-one bound the additional coefficient is $\bar\gamma\mathsf G_q$. Any mean-prediction condition in the table remains an additional hypothesis. For a prescribed accuracy, use a uniform bound $\kappa_T\le\kappa$; $\kappa=V_C$ always qualifies. For~\eqref{eq:scaling-near-one}, and for~\eqref{eq:scaling-complete-fast} with $\ell_*=0$, the horizon~\eqref{eq:learned-horizon} applies with the displayed coefficients. For~\eqref{eq:scaling-complete-slow}, use $T_0=\max\{2,(\mathsf A/\varepsilon)^4\}$ and $T_\varepsilon=\lceil T_0[1+\mathcal E(T_0)/\varepsilon]^4\rceil$, where $\mathsf A$ is the $T^{-1/4}$ coefficient and $\mathcal E$ collects every remaining term; the same power comparison proves sufficiency.

\subsection{Oracle Counts and Implementation}
Both variants use $T$ fresh gradients and $T-1$ Bernoulli decisions. Their gradient-difference cost is $\E N_T=(1+q)T-q$, and HVP transport uses $q(T-1)$ HVPs in expectation in addition to $T$ gradients. At $\gamma=0$, standard scaling needs no correction queries, while residual scaling needs them only to train future predictors.

On an unqueried step, standard and residual-scaled transport equal $\gamma B_t$ and $B_t$; on a queried step they equal
$\gamma[B_t+(C_t-B_t)/q]$ and $B_t+\gamma(C_t-B_t)/q$. The prediction is formed before the current correction is queried or fitted, and the predictor is trained against the unscaled target $C_t$. The variants agree when $B_t=0$ or $\gamma=1$. At $q=1$ and $\gamma\ne1$, the predictor cancels only for standard scaling.

\section{Additional Scope}\label{app:scope}
Predictable query probabilities $q_t>0$ preserve the one-step identities with $q$ replaced by $q_t$. The cumulative term then involves $\sum_t\E[(q_t^{-1}-1)\norm{C_t-B_t}^2]$ rather than a constant multiple of $\cL_T$, and its interaction with adaptive sampling, cost, and online learning requires separate treatment. The rates and binomial counts in this paper are stated for fixed $q$.

A bounded update controls $\norm{s_t}$ but does not by itself give the alignment inequality~\eqref{eq:lmo-alignment}; our results are for exact LMOs and the inexact models of Appendix~\ref{app:inexact}. Clipping the final transport estimator changes its conditional mean, whereas clipping the predictor coefficients in~\eqref{eq:online-update} preserves Lemma~\ref{lem:identities}.

\section{Experimental Details}\label{app:mnist1d}
\subsection{MNIST1D}
\paragraph{Data and model.}
We use the frozen MNIST1D dataset of \citet{greydanus2024mnist1d} ($4000$ training and $1000$ test sequences of length $40$, ten classes), with inputs standardized by training-set statistics. The model is a multilayer perceptron with layer widths $40$, $100$, $100$, $10$, GELU activations, He initialization, and the cross-entropy loss; it has $d=15210$ parameters. GELU makes the network smooth in its parameters and permits HVP evaluation. The theory additionally assumes uniform smoothness, a Lipschitz Hessian, and oracle moment bounds; we do not verify these global assumptions for this unconstrained network.

\paragraph{Algorithm and accounting.}
All runs use Algorithm~1 with the Euclidean LMO $u_t=m_t/\norm{m_t}$, $\gamma=1$, the zero predictor unless stated, $m_0$ equal to one stochastic gradient, and schedule~\eqref{eq:alpha-schedule} with constant $\eta$. Gradient-difference queries evaluate the current batch at $x_{t-1}$; HVP queries compute $\nabla^2f_{\xi_t}(x_t)s_t$ jointly with the fresh gradient by forward-over-reverse differentiation. Skipped corrections are not evaluated. On the single CPU core used for all runs, a joint gradient and HVP costs $2.4$ gradients, so one HVP query counts as $c_H=1.4$ gradients. A run stops at the first multiple of $100$ iterations at which its realized work, counting every evaluated gradient and correction, reaches the budget: $16000$ gradient-equivalents at batch size $4$ and $8000$ at batch size $32$. The realized work exceeds the budget by at most $2.0\%$ across all runs, and for $q<1$ the number of iterations varies across seeds.

\paragraph{Tuning and evaluation.}
For each method, every pair $(\eta,\amin)$ in the grid of Section~\ref{sec:experiments} is run on seeds $0$ and $1$, and the pair whose worse final training loss over the two seeds is smallest is selected. The selected pair is evaluated on seeds $10$ to $19$, which set the initialization, the batches, and the query decisions. Batches are drawn uniformly with replacement, and the three sources of randomness use separate random streams, so runs with the same seed share their initialization and batch sequence. If an evaluation seed diverged, the next pair would be used; this did not occur in the reported runs. Ratios to $q=1$ are computed per seed, and confidence intervals use the $t$ distribution with $9$ degrees of freedom.

\paragraph{Estimating $\hat\omega$.}
Along a reference run at batch size $4$ ($\eta=0.05$, $\amin=0.1$, without correction), at the checkpoints of Table~\ref{tab:omega} we take the next step $s$, the full-batch gradient difference $\Cs{}$, and $400$ independent batches to estimate $\E\norm{C-\Cs{}}^2$ for the gradient-difference correction; $\hat\omega=\norm{\Cs{}}^2/\E\norm{C-\Cs{}}^2$. The estimated $\delta$ scales as $b^{-1/2}$ in the batch size $b$.

\begin{table}[h]
\caption{Estimated $\hat\omega$ along the reference run, by batch size.}
\label{tab:omega}
\centering
\small
\begin{tabular}{rrcccc}
\toprule
Step & Training loss & $b=1$ & $b=4$ & $b=32$ & $b=256$\\
\midrule
$0$ & $3.11$ & $0.088$ & $0.33$ & $2.6$ & $21$\\
$500$ & $1.58$ & $0.037$ & $0.15$ & $1.3$ & $11$\\
$2000$ & $1.03$ & $0.017$ & $0.060$ & $0.46$ & $3.6$\\
$8000$ & $0.38$ & $0.004$ & $0.020$ & $0.17$ & $1.4$\\
\bottomrule
\end{tabular}
\end{table}

\begin{table}[h]
\caption{MNIST1D: final training loss at $q=1$ and ratio of the final loss at $q<1$ to it, with paired $95\%$ confidence intervals over $10$ seeds, at equal oracle cost ($16000$ gradient-equivalents at batch size $4$, $8000$ at batch size $32$).}
\label{tab:mnist1d}
\centering
\small
\setlength{\tabcolsep}{3pt}
\begin{tabular}{llccc}
\toprule
Batch & Oracle & $q=1$ & $q=1/2$ & $q=1/4$\\
\midrule
$4$ & STORM & $0.425$ & $0.80\,[0.75,0.86]$ & $0.70\,[0.67,0.73]$\\
$4$ & HVP & $0.470$ & $0.78\,[0.70,0.87]$ & $0.66\,[0.60,0.71]$\\
$32$ & STORM & $0.077$ & $0.99\,[0.69,1.29]$ & $0.82\,[0.74,0.90]$\\
$32$ & HVP & $0.089$ & $0.73\,[0.66,0.81]$ & $0.57\,[0.53,0.61]$\\
\bottomrule
\end{tabular}
\end{table}

\begin{table}[h]
\caption{MNIST1D: test accuracy at $q=1$ (in $\%$, mean over $10$ seeds) and paired difference at $q<1$ from it (percentage points, $95\%$ interval), at equal oracle cost; settings selected by training loss.}
\label{tab:mnist1d-acc}
\centering
\small
\setlength{\tabcolsep}{3pt}
\begin{tabular}{llccc}
\toprule
Batch & Oracle & $q=1$ & $q=1/2$ & $q=1/4$\\
\midrule
$4$ & STORM & $61.9$ & $+1.55\,[+0.43,+2.67]$ & $+1.31\,[+0.08,+2.54]$\\
$4$ & HVP & $62.7$ & $+0.40\,[-1.24,+2.04]$ & $+2.28\,[+1.27,+3.29]$\\
$32$ & STORM & $62.9$ & $+0.18\,[-1.11,+1.47]$ & $-1.75\,[-2.73,-0.77]$\\
$32$ & HVP & $62.2$ & $+0.90\,[-0.28,+2.08]$ & $+1.29\,[+0.58,+2.00]$\\
\bottomrule
\end{tabular}
\end{table}

\begin{table}[h]
\caption{MNIST1D, diagonal predictor with the settings of the zero-predictor run: test accuracy (mean and standard deviation over $10$ seeds, in $\%$), paired difference from the zero predictor at the same $q$ (percentage points, $95\%$ interval), and residual ratio $\sum_tJ_t\norm{C_t-B_t}^2/\sum_tJ_t\norm{C_t}^2$ (mean over seeds).}
\label{tab:mnist1d-diag}
\centering
\small
\begin{tabular}{lllccc}
\toprule
Batch & Method & $q$ & Test accuracy & Difference from zero predictor & Residual ratio\\
\midrule
$4$ & STORM & $1/2$ & $64.01\pm1.61$ & $+0.58\,[-0.68,+1.84]$ & $0.995$\\
$4$ & STORM & $1/4$ & $63.18\pm1.21$ & $-0.01\,[-0.18,+0.16]$ & $0.989$\\
$4$ & HVP & $1/2$ & $63.61\pm1.32$ & $+0.54\,[-0.31,+1.39]$ & $0.995$\\
$4$ & HVP & $1/4$ & $65.06\pm1.56$ & $+0.11\,[-1.64,+1.86]$ & $0.996$\\
\midrule
$32$ & STORM & $1/2$ & $63.06\pm1.85$ & $-0.01\,[-1.35,+1.33]$ & $0.983$\\
$32$ & STORM & $1/4$ & $61.37\pm0.79$ & $+0.23\,[-0.41,+0.87]$ & $0.954$\\
$32$ & HVP & $1/2$ & $63.21\pm1.24$ & $+0.14\,[-0.79,+1.07]$ & $0.985$\\
$32$ & HVP & $1/4$ & $63.99\pm1.23$ & $+0.53\,[-0.45,+1.51]$ & $0.984$\\
\bottomrule
\end{tabular}
\end{table}

\begin{table}[h]
\caption{All MNIST1D runs: final training loss (mean and standard deviation over $10$ seeds), test accuracy, number of iterations (mean over seeds), and selected $(\eta,\amin)$.}
\label{tab:mnist1d-full}
\centering
\small
\begin{tabular}{llcccc}
\toprule
Batch & Method & Training loss & Test acc. & Iterations & $(\eta,\amin)$\\
\midrule
$4$ & STORM, $q=1$ & $0.425\pm0.025$ & $0.619$ & $8000$ & $(0.1,0.3)$\\
$4$ & STORM, $q=1/2$ & $0.341\pm0.029$ & $0.634$ & $10710$ & $(0.1,0.3)$\\
$4$ & STORM, $q=1/4$ & $0.297\pm0.016$ & $0.632$ & $12840$ & $(0.05,0.3)$\\
$4$ & HVP, $q=1$ & $0.470\pm0.032$ & $0.627$ & $6700$ & $(0.1,0.3)$\\
$4$ & HVP, $q=1/2$ & $0.365\pm0.031$ & $0.631$ & $9450$ & $(0.1,0.3)$\\
$4$ & HVP, $q=1/4$ & $0.308\pm0.035$ & $0.650$ & $11910$ & $(0.1,0.3)$\\
\midrule
$32$ & STORM, $q=1$ & $0.077\pm0.009$ & $0.629$ & $4000$ & $(0.1,0.3)$\\
$32$ & STORM, $q=1/2$ & $0.075\pm0.033$ & $0.631$ & $5380$ & $(0.1,0.3)$\\
$32$ & STORM, $q=1/4$ & $0.062\pm0.005$ & $0.611$ & $6440$ & $(0.05,0.3)$\\
$32$ & HVP, $q=1$ & $0.089\pm0.006$ & $0.622$ & $3400$ & $(0.1,0.3)$\\
$32$ & HVP, $q=1/2$ & $0.066\pm0.013$ & $0.631$ & $4750$ & $(0.1,0.3)$\\
$32$ & HVP, $q=1/4$ & $0.051\pm0.007$ & $0.635$ & $5950$ & $(0.1,0.3)$\\
\bottomrule
\end{tabular}
\end{table}

The diagonal predictor uses $\eta_h=0.1$ and $\hmax=20$, without further tuning, together with the frozen $(\eta,\amin)$ of the zero-predictor run at the same $q$ and the same seeds, as for CIFAR-10. Table~\ref{tab:mnist1d-diag} reports the results: every accuracy interval contains zero, and the predictor removes $0.4\%$ to $1.1\%$ of the importance-weighted residual at batch size $4$ and $1.5\%$ to $4.6\%$ at batch size $32$, where $\hat\omega$ is larger. All twelve selected floors equal the cap $\amin=0.3$. The reported values are those of the final iterate of runs stopped at a realized budget, so the experiments do not instantiate the finite-horizon guarantees, which concern a uniformly sampled iterate at a fixed horizon. At the iteration count of the $q=1$ run, the ratio of the training loss at $q<1$ to that at $q=1$ is $1.01$ and $1.24$ for STORM at $q=1/2$ and $1/4$ and $1.06$ and $1.08$ for HVP at batch size $4$, and $1.26$, $2.94$, $1.24$, and $1.47$ at batch size $32$, each run being tuned for its own horizon. Test accuracies lie between $61\%$ and $65\%$, since the model overfits the $4000$ training sequences, and the settings were selected by training loss. Table~\ref{tab:mnist1d-acc} gives the paired accuracy differences. They agree in sign with the loss ratios of Table~\ref{tab:mnist1d}, except for STORM with $q=1/4$ at batch size $32$, whose lower training loss comes with a lower test accuracy. In the batch-size-$32$ runs of Figure~\ref{fig:mnist1d-bs32}, curves can cross because $\eta$ is tuned for the final loss of each method separately.

\begin{figure}[h]
\centering
\includegraphics[width=0.5\textwidth]{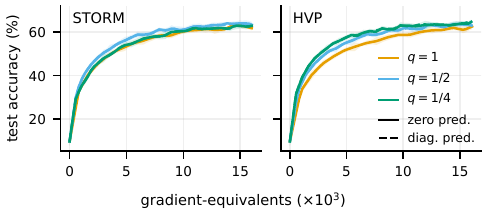}
\caption{MNIST1D, batch size $4$. Test accuracy against gradient-equivalent evaluations for gradient-difference (left) and HVP (right) transport at $q\in\{1,1/2,1/4\}$, with the zero predictor (solid; mean and standard deviation over $10$ seeds) and the diagonal predictor (dashed; mean). Figure~\ref{fig:mnist1d-bs32} shows batch size $32$.}
\label{fig:mnist1d}
\end{figure}

\begin{figure}[h]
\centering
\includegraphics[width=0.5\textwidth]{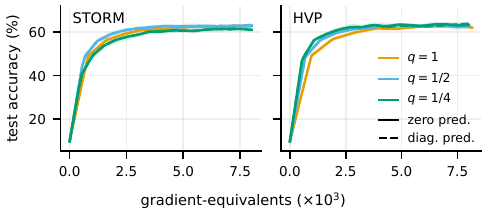}
\caption{MNIST1D, batch size $32$; same setting as Figure~\ref{fig:mnist1d}.}
\label{fig:mnist1d-bs32}
\end{figure}

\subsection{Controlled Quadratic}\label{app:synth}
\paragraph{Problem.}
The eigenvalues of $A$ are $\lambda_i=10^{-2+2(i-1)/(d-1)}$ for $i=1,\ldots,d$, and $x^*$ is a fixed draw from $\mathcal N(0,I)$; $x_0=0$. The oracle of Section~\ref{sec:experiments} is unbiased and satisfies~\eqref{eq:similarity} with modulus $\delta$, since $\E[E_\xi]=0$ and $\E\norm{E_\xi s}^2=d\,\delta^2\,\E\norm{u}^2\,\E(v^\top s)^2=\delta^2\norm{s}^2$. Since $E_\xi$ is not symmetric, $g(\cdot;\xi)$ is not the gradient of a scalar sample loss; the analysis uses only the oracle properties above. The gradient variance at $x$ is $\delta^2\norm{x-x^*}^2+1$. Along a normalized run, $\norm{x_t-x^*}\le\norm{x_0-x^*}+\eta t$, so Assumption~\ref{ass:objective} holds over each finite horizon $T$ with $\sigma_T^2=\delta^2(\norm{x_0-x^*}+\eta T)^2+1$; the experiment illustrates the mechanism under this state-dependent noise and does not verify a horizon-uniform $\sigma$. On this problem the gradient difference on a common sample equals $(A+E_\xi)s_t$, the product of $s_t$ with an unbiased, nonsymmetric estimator of the Hessian, so the two transports coincide.

\paragraph{Protocol.}
Algorithm~1 uses the Euclidean LMO, $\gamma=1$, and schedule~\eqref{eq:alpha-schedule}. For $q\in\{1,1/2,1/4\}$ the horizon is $T=\mathrm{round}((4000+q)/(1+q))$, so that $(1+q)T-q\approx4000$. The implementation also evaluates an unused final gradient and correction, so its expected work $1+(1+q)T$ lies between $4001$ and $4001.5$ gradients. The grids are $\eta\in\{0.01,0.02,0.05,0.1,0.2\}$, $\amin\in\{0.003,0.01,0.03,0.1,0.3\}$, and, for the diagonal predictor with $\hmax=2$, $\eta_h\in\{0.03,0.3\}$. Selection uses the worse of seeds $0$ and $1$, and evaluation uses seeds $10$ to $19$, as for MNIST1D. The value $\hat\omega$ of a run is the average of $\norm{As_t}^2/(\delta^2\norm{s_t}^2)$ over its steps, which equals $\norm{\Cs t}^2/\E_t\norm{C_t-\Cs t}^2$.

\begin{table}[h]
\caption{Controlled quadratic: ratio of the run-average $\norm{\nabla f(x_t)}$ at $q$ to its value at $q=1$, with paired $95\%$ confidence intervals over $10$ seeds, and the residual ratio of the diagonal predictor at $q=1/2$.}
\label{tab:synth}
\centering
\small
\begin{tabular}{cccccc}
\toprule
& & \multicolumn{2}{c}{Zero predictor} & Diagonal & \\
$\delta$ & $\hat\omega$ ($q=1$) & $q=1/2$ & $q=1/4$ & $q=1/2$ & Residual ratio\\
\midrule
$10$ & $0.0011$ & $0.92\,[0.91,0.94]$ & $0.86\,[0.84,0.88]$ & $0.92\,[0.91,0.94]$ & $1.000$\\
$3$ & $0.013$ & $0.90\,[0.89,0.91]$ & $0.83\,[0.82,0.84]$ & $0.90\,[0.89,0.91]$ & $0.993$\\
$1$ & $0.11$ & $0.91\,[0.90,0.92]$ & $0.83\,[0.82,0.85]$ & $0.91\,[0.90,0.92]$ & $0.933$\\
$0.3$ & $1.1$ & $0.93\,[0.93,0.93]$ & $0.89\,[0.88,0.89]$ & $0.93\,[0.92,0.93]$ & $0.489$\\
$0.2$ & $0.91$ & $1.00\,[0.99,1.00]$ & $0.95\,[0.95,0.96]$ & $0.99\,[0.98,1.00]$ & $0.521$\\
$0.15$ & $1.7$ & $1.06\,[1.05,1.06]$ & $1.01\,[1.01,1.02]$ & $0.98\,[0.97,0.99]$ & $0.447$\\
$0.1$ & $58$ & $1.17\,[1.17,1.18]$ & $1.14\,[1.13,1.14]$ & $1.02\,[1.02,1.03]$ & $0.278$\\
$0.03$ & $988$ & $1.41\,[1.40,1.42]$ & $1.37\,[1.37,1.38]$ & $0.98\,[0.97,0.99]$ & $0.005$\\
\bottomrule
\end{tabular}
\end{table}

In terms of the global constant, the sign change occurs at $L^2/\delta^2$ between $11$ and $44$, because normalized steps move mostly along low-curvature directions. The value $\hat\omega$ depends on the tuned trajectory and is therefore not monotone in $\delta$; at $\delta=0.2$ the selected floor is $\amin=0.03$, and $\hat\omega=0.9$ is below its value $1.15$ at $\delta=0.3$. The residual ratio stays slightly above $1/(1+\hat\omega)$ at large $\hat\omega$ ($0.005$ against $0.001$ at $\delta=0.03$), which reflects the learning error of the predictor. The study takes about four minutes on one CPU core.

\subsection{Unnormalized STORM with the Tuning of Theorem~\ref{thm:unnormalized}}\label{app:prop5}
This experiment uses the method of Theorem~\ref{thm:unnormalized}: the step $x_{t+1}=x_t-\eta m_t$, the zero predictor, and the prescribed parameters $c=c_q$, $\amin=c_q\eta^2$, and $\eta=\min\{\eta_{\max},(\Delta/(K_qT))^{1/3}\}$ of~\eqref{eq:optimal-c}--\eqref{eq:cap-threshold}, with no tuning. The objective is the quadratic of Appendix~\ref{app:synth}, with $L=1$, and the oracle is
\[
g(x;\epsilon,\zeta)=Ar+\delta a\,\epsilon\odot\sin(r/a)+\zeta,\qquad r=x-x^*,\quad a=d^{-1/2},
\]
with $\epsilon\sim\mathcal N(0,I)$ and an independent $\zeta$ with mean zero and $\E\norm{\zeta}^2=1$, the sine acting coordinatewise. It is the gradient of the sample loss $\frac12r^\top Ar-\delta a^2\sum_i\epsilon_i\cos(r_i/a)+\zeta^\top r$ and is unbiased. Its variance $\delta^2a^2\sum_i\sin^2(r_i/a)+1$ is at most $\sigma^2=\delta^2+1$ at every $x$, and since $a|\sin(u/a)-\sin(v/a)|\le|u-v|$, it satisfies~\eqref{eq:similarity} with modulus $\delta$. The constants $L$, $\delta$, and $\sigma$ used in the tuning therefore hold globally. We take $\delta\in\{0.25,0.5,1,2,4\}$, so that $\omega=L^2/\delta^2$ ranges from $16$ to $0.0625$, and $x_0$ at distance $1$ from $x^*$. For budgets $N\in\{2000,8000,32000\}$ the horizon is $T=\mathrm{round}((N+q)/(1+q))$, and the metric is the run average of $\norm{\nabla f(x_t)}^2$, the quantity bounded in~\eqref{eq:unnormalized}. In all $45$ settings its mean over $10$ seeds lies below the bound~\eqref{eq:unnormalized}.

At equal expected budget $N$, the horizons satisfy $T_q\approx N/(1+q)$, and the leading term $3(\Delta^2K_q)^{1/3}T^{-2/3}$ of~\eqref{eq:leading-rate} gives the ratio
\[
\left(\frac{K_q}{K_1}\right)^{1/3}\left(\frac{1+q}2\right)^{2/3}=\cQ(q;\omega)^{2/3}
\]
of the leading squared-stationarity certificates; $\cQ$ itself compares the budgets needed to reach a common accuracy. Table~\ref{tab:prop5} compares the measured fixed-budget ratios with $\cQ^{2/3}$.

\begin{table}[h]
\caption{Unnormalized STORM with the tuning of Theorem~\ref{thm:unnormalized} and the variance-bounded oracle: ratio of the run-average $\norm{\nabla f(x_t)}^2$ at $q$ to its value at $q=1$ (mean over $10$ paired seeds) at budgets $2000$, $8000$, and $32000$, next to the fixed-budget benchmark $\cQ(q;\omega)^{2/3}$.}
\label{tab:prop5}
\centering
\small
\begin{tabular}{cc|cccc|cccc}
\toprule
& & \multicolumn{4}{c|}{$q=1/2$} & \multicolumn{4}{c}{$q=1/4$}\\
$\delta$ & $\omega$ & $\cQ^{2/3}$ & $2000$ & $8000$ & $32000$ & $\cQ^{2/3}$ & $2000$ & $8000$ & $32000$\\
\midrule
$0.25$ & $16$ & $1.569$ & $1.137$ & $1.160$ & $1.211$ & $1.874$ & $1.236$ & $1.273$ & $1.356$\\
$0.5$ & $4$ & $1.187$ & $1.051$ & $1.052$ & $1.070$ & $1.347$ & $1.121$ & $1.128$ & $1.163$\\
$1$ & $1$ & $1.016$ & $0.981$ & $0.977$ & $0.982$ & $1.091$ & $1.022$ & $1.020$ & $1.034$\\
$2$ & $0.25$ & $0.958$ & $0.949$ & $0.948$ & $0.951$ & $0.996$ & $0.975$ & $0.976$ & $0.982$\\
$4$ & $0.0625$ & $0.942$ & $0.949$ & $0.947$ & $0.944$ & $0.968$ & $0.971$ & $0.969$ & $0.965$\\
\bottomrule
\end{tabular}
\end{table}

The direction of the measured change agrees with $\cQ(q;\omega)^{2/3}-1$, which has the sign of $\cQ-1$, in nine of the ten $(\omega,q)$ settings at all budgets; the exception is $\omega=1$ and $q=1/2$, with benchmark $1.016$ and measured ratio $0.98$. For $\omega<1$ the measured ratios lie within $2.1\%$ of $\cQ^{2/3}$, and for $\omega>1$ they are smaller than $\cQ^{2/3}$. This is a qualitative comparison of sufficient bounds: neither the benchmark nor its finite-budget analogue bounds the ratio of the actual errors, and the leading ratio omits the cap and lower-order terms. The $95\%$ confidence half-widths of the measured ratios are at most $0.034$. The implementation evaluates $m_0$, a fresh gradient at each of the $T$ steps, and a correction with probability $q$ at each step, so its expected work is $1+(1+q)T$.

\subsection{Symmetric Curvature Noise}\label{app:symmetric}
We repeat the sweep of Appendix~\ref{app:synth} with $E_\xi=\delta\,\mathrm{diag}(\epsilon)$ and $\epsilon\sim\mathcal N(0,I)$. Then $f_\xi(x)=\frac12(x-x^*)^\top(A+E_\xi)(x-x^*)+\zeta_\xi^\top(x-x^*)$ is a sample loss with gradient $g(x;\xi)$ and Hessian $A+E_\xi$, and $\E\norm{E_\xi s}^2=\delta^2\norm{s}^2$ as before. The protocol, grids, and seeds are those of Appendix~\ref{app:synth}. As there, the gradient variance grows with $\norm{x-x^*}$, so these runs illustrate the mechanism under state-dependent noise rather than verify a rate with a uniform $\sigma$.

\begin{table}[h]
\caption{Symmetric curvature noise: ratio of the run-average $\norm{\nabla f(x_t)}$ at $q$ to its value at $q=1$, with paired $95\%$ confidence intervals over $10$ seeds, and the residual ratio of the diagonal predictor at $q=1/2$.}
\label{tab:symmetric}
\centering
\small
\begin{tabular}{cccccc}
\toprule
& & \multicolumn{2}{c}{Zero predictor} & Diagonal & \\
$\delta$ & $\hat\omega$ ($q=1$) & $q=1/2$ & $q=1/4$ & $q=1/2$ & Residual ratio\\
\midrule
$10$ & $0.00023$ & $0.79\,[0.60,0.98]$ & $0.78\,[0.57,0.99]$ & $0.77\,[0.60,0.94]$ & $1.012$\\
$3$ & $0.0017$ & $0.95\,[0.83,1.08]$ & $0.93\,[0.78,1.07]$ & $0.96\,[0.84,1.07]$ & $1.029$\\
$1$ & $0.02$ & $0.92\,[0.89,0.96]$ & $0.87\,[0.84,0.89]$ & $0.93\,[0.90,0.97]$ & $1.349$\\
$0.3$ & $0.21$ & $1.04\,[0.99,1.09]$ & $0.99\,[0.96,1.02]$ & $1.01\,[0.95,1.08]$ & $1.157$\\
$0.15$ & $14$ & $1.10\,[1.06,1.14]$ & $1.08\,[1.05,1.12]$ & $0.97\,[0.95,0.99]$ & $0.638$\\
$0.1$ & $59$ & $1.19\,[1.17,1.21]$ & $1.17\,[1.14,1.19]$ & $1.01\,[1.00,1.03]$ & $0.363$\\
$0.05$ & $358$ & $1.32\,[1.27,1.38]$ & $1.30\,[1.24,1.35]$ & $0.99\,[0.94,1.03]$ & $0.011$\\
$0.03$ & $1003$ & $1.43\,[1.40,1.46]$ & $1.40\,[1.38,1.43]$ & $0.98\,[0.94,1.03]$ & $0.006$\\
$0.01$ & $9058$ & $1.49\,[1.48,1.50]$ & $1.46\,[1.45,1.47]$ & $0.91\,[0.90,0.92]$ & $0.004$\\
\bottomrule
\end{tabular}
\end{table}

The pattern of Table~\ref{tab:synth} persists: $q<1$ lowers the metric at large noise ($0.87$ at $\delta=1$ and $q=1/4$), raises it at small noise (up to $1.49$), and the diagonal predictor restores parity with $q=1$ at small noise. Two differences remain. The selected floor $\amin$ changes between $\delta=0.3$ and $\delta=0.15$, and $\hat\omega$ on the $q=1$ runs jumps from $0.21$ to $14$, so the crossover is located only within this interval. At small $\hat\omega$ the residual ratio of the diagonal predictor exceeds one ($1.35$ at $\delta=1$): with noise in the same coordinates as the predictor, the online iterates fluctuate, which is consistent with a learning-error cost of the kind bounded by the regret term of Theorem~\ref{thm:regret}, although $\eta_h$ is selected for the stationarity metric rather than set to the theoretical value. The stationarity metric shows no resolved change ($0.92$ without and $0.93$ with the predictor at $\delta=1$). The confidence intervals at $\delta\ge3$ are wide because the selected $(\eta,\amin)$ at $q=1$ differs from that at $q<1$.

\subsection{CIFAR-10}\label{app:cifar}
\paragraph{Model and data.}
The network is a randomly initialized ResNet-18 \citep{he2016resnet} with $10$ outputs, a $3\times3$ input convolution with stride $1$ and no max pooling, GELU activations, and group normalization with $32$ groups in place of batch normalization, so that correction queries do not change running statistics. GELU makes the network smooth in its parameters and permits HVP evaluation; as for MNIST1D, the global constants of the theory are not verified. The $50000$ CIFAR-10 training images are split, with a fixed seed, into $45000$ for training and $5000$ for validation; the $10000$ test images are used only for the final evaluation. Training uses random crops with padding $4$ and horizontal flips, batch size $128$, and drops the incomplete final batch, so that one epoch has $351$ updates. Batches are formed by shuffling the training set each epoch, so within an epoch a batch is not conditionally unbiased for the full training objective, and that part of Assumption~\ref{ass:objective} does not hold exactly. Initialization, shuffling, and query decisions use separate random generators.

\paragraph{Algorithm and accounting.}
All runs use Algorithm~1 with the Euclidean LMO over all parameters, $\gamma=1$, $m_0$ equal to one stochastic gradient, schedule~\eqref{eq:alpha-schedule} with constant $\eta$, float32 arithmetic, and no weight decay, clipping, or learning-rate schedule. The STORM correction evaluates the gradient at the previous parameters on the same augmented batch; the HVP correction is $\nabla^2\ell(\theta_t;\xi_t)\Delta_t$ at the current parameters, computed by forward-mode differentiation of the gradient. A fresh gradient costs one gradient-equivalent, a STORM query one more, and an HVP query $c_H=2.32$ more, the incremental cost measured on an A100 GPU. Accounting uses realized queries, and a run stops before the update that would exceed $35100$ gradient-equivalents; the realized work lies between $35097.7$ and $35100$.

\paragraph{Tuning and evaluation.}
For each method and $q$, $(\eta,\amin)$ was selected with the zero predictor in two stages: a pilot at budgets $128$, $512$, and $2048$ over $\eta\in\{0.001,0.003,0.01,0.03,0.1,0.3,1\}$ and $\amin\in\{0.03,0.1,0.3\}$, followed by full-budget runs of the pilot choice and its neighbors ($\eta$ divided or multiplied by $3$, larger floors), with up to two expansions of the step-size range. Each candidate ran on seeds $0$ and $1$, and the candidate with the smaller worse-seed validation cross-entropy was selected. For STORM at $q=1/2$ the selected step size lies at the upper end of the tested range. The selected settings were frozen and evaluated on seeds $10$ to $14$; test accuracy is that of the final model. Table~\ref{tab:cifar} reports test accuracy, the paired difference from $q=1$ with $95\%$ Student-$t$ intervals, and the mean training time per run.

\begin{table}[h]
\caption{CIFAR-10: selected $(\eta,\amin)$, mean number of updates within the budget, test accuracy (mean and standard deviation over $5$ seeds, in $\%$), paired difference from $q=1$ (percentage points, $95\%$ interval), and training time in seconds.}
\label{tab:cifar}
\centering
\small
\begin{tabular}{llcccccc}
\toprule
Method & $q$ & $\eta$ & $\amin$ & Updates & Test accuracy & Difference from $q=1$ & Time\\
\midrule
STORM & $1$ & $0.27$ & $0.3$ & $17550$ & $87.34\pm0.70$ & baseline & $1053$\\
STORM & $1/2$ & $0.27$ & $0.3$ & $23376$ & $88.05\pm1.13$ & $+0.72\,[-0.01,+1.44]$ & $1063$\\
STORM & $1/4$ & $0.03$ & $0.3$ & $28091$ & $81.64\pm1.12$ & $-5.69\,[-7.12,-4.26]$ & $1105$\\
\midrule
HVP & $1$ & $0.09$ & $0.1$ & $10570$ & $85.99\pm0.38$ & baseline & $934$\\
HVP & $1/2$ & $0.09$ & $0.3$ & $16218$ & $86.95\pm0.22$ & $+0.96\,[+0.64,+1.28]$ & $966$\\
HVP & $1/4$ & $0.09$ & $0.3$ & $22258$ & $87.50\pm0.22$ & $+1.52\,[+0.96,+2.07]$ & $1046$\\
\bottomrule
\end{tabular}
\end{table}

For STORM, $q=1/2$ has a higher mean accuracy than $q=1$, with the paired $95\%$ interval including zero, and $q=1/4$ is clearly worse. This ordering agrees qualitatively with the leading coefficient of Proposition~\ref{prop:ratio}, which is larger at $q=1/4$ than at $q=1/2$ for every $\omega$, by at least $0.032$; at $q=1/4$ the selected step size is also nine times smaller than at $q=1$. For HVP transport both $q<1$ improve on $q=1$, with the larger gain at $q=1/4$. At equal oracle work the randomized runs take longer in wall time, since they perform more updates: $1046$ against $934$ seconds for HVP at $q=1/4$ and $q=1$, and $1105$ against $1053$ seconds for STORM. The comparison concerns oracle work, not wall time.

\paragraph{Diagonal predictor.}
The diagonal predictor, with $\eta_h=0.1$ and $\hmax=20$ and no further tuning, was run at $q\in\{1/2,1/4\}$ with the frozen $(\eta,\amin)$ of the corresponding zero-predictor run and the same seeds. Table~\ref{tab:cifar-diag} compares it with the zero predictor. All test-accuracy intervals contain zero, and the predictor removes $0.4\%$ to $0.9\%$ of the importance-weighted residual. By~\eqref{eq:irreducible}, a predictable predictor can remove at most the mean part of the residual; we did not estimate $\hat\omega$ on this problem.

\begin{table}[h]
\caption{CIFAR-10, diagonal predictor: test accuracy (mean and standard deviation over $5$ seeds, in $\%$), paired difference from the zero predictor at the same $q$ (percentage points, $95\%$ interval), and residual ratio $\sum_tJ_t\norm{C_t-B_t}^2/\sum_tJ_t\norm{C_t}^2$ (mean over seeds).}
\label{tab:cifar-diag}
\centering
\small
\begin{tabular}{llccc}
\toprule
Method & $q$ & Test accuracy & Difference from zero predictor & Residual ratio\\
\midrule
STORM & $1/2$ & $88.29\pm0.33$ & $+0.24\,[-1.10,+1.58]$ & $0.992$\\
STORM & $1/4$ & $80.99\pm0.88$ & $-0.65\,[-1.33,+0.02]$ & $0.991$\\
HVP & $1/2$ & $86.93\pm0.32$ & $-0.01\,[-0.50,+0.48]$ & $0.996$\\
HVP & $1/4$ & $87.51\pm0.47$ & $+0.00\,[-0.47,+0.47]$ & $0.995$\\
\bottomrule
\end{tabular}
\end{table}

\paragraph{Compute.}
Each MNIST1D method takes about one minute on one CPU core, tuning included, and the MNIST1D study about fifteen minutes; the quadratic studies take a few minutes on one CPU core. Each CIFAR-10 run takes $15$ to $21$ minutes on one A100 GPU. The code for all studies will be released with the final version, and we can provide it to reviewers during the review process on request.


\end{document}